\newcommand{\bu}{\boldsymbol{u}}
\newcommand{\bV}{\boldsymbol{V}}
\newcommand{\bv}{\boldsymbol{v}}
\newcommand{\bU}{\boldsymbol{U}}
\newcommand{\bX}{\boldsymbol{X}}
\newcommand{\bpi}{\boldsymbol{\pi}}
\newcommand{\bPi}{\boldsymbol{\Pi}}
\newcommand{\kl}{KŁ\xspace}
\newcommand{\myparagraph}[1]{\vspace{0.3\baselineskip}\noindent{\textbf{#1.}}~}
\newcommand{\bz}{\boldsymbol{z}}
\DeclareMathOperator*{\argmin}{argmin}
\newcommand{\bx}{\boldsymbol{x}}
\newcommand{\ml}{\mathcal{L}}
\newcommand{\fedapm}{\texttt{FedAPM}\xspace}
\newcommand{\fedalt}{\texttt{FedAlt}\xspace}
\newcommand{\fedsim}{\texttt{FedSim}\xspace}
\newcommand{\fedprox}{\texttt{FedProx}\xspace}
\newcommand{\fedavg}{\texttt{FedAvg}\xspace}
\newcommand{\cifar}{\textsf{CIFAR10}\xspace}
\newcommand{\crisis}{\textsf{CrisisMMD}\xspace}
\newcommand{\kuhar}{\textsf{KU-HAR}\xspace}
\newcommand{\crema}{\textsf{Crema-D}\xspace}
\newtheoremstyle{mydefinition} 
  {1pt} 
  {1pt} 
  {\itshape} 
  {} 
  {\bfseries} 
  {.} 
  { } 
  {} 
\theoremstyle{mydefinition}
\newtheorem{theorem}{Theorem}
\newtheorem{proposition}{Proposition}
\newtheorem{definition}{Definition}
\newtheorem{assumption}{Assumption}
\newtheorem{lemma}{Lemma}
\setlist[itemize]{leftmargin=*}
\begin{document}


\title{FedAPM: Federated Learning via ADMM with Partial Model Personalization}


\settopmatter{authorsperrow=3}
\author{Shengkun Zhu}
\orcid{0009-0009-2850-1505}
\affiliation{%
  \institution{School of Computer Science\\Wuhan University}
  \city{Wuhan}
  \country{China}
}
\email{whuzsk66@whu.edu.cn}

\author{Feiteng Nie}
\orcid{}
\affiliation{%
  \institution{School of Computer Science\\Wuhan University}
  \city{Wuhan}
  \country{China}
}
\email{niefeiteng@whu.edu.cn}

\author{Jinshan Zeng$^*$}
\orcid{0000-0003-1719-3358}
\affiliation{%
  \institution{School of Management\\Xi'an Jiaotong University}
  \city{Xi'an}
  \country{China}
}
\email{jsh.zeng@gmail.com}

\author{Sheng Wang$^*$}
\orcid{0000-0002-5461-4281}
\affiliation{%
  \institution{School of Computer Science\\Wuhan University}
  \streetaddress{}
  \city{Wuhan}
  \country{China}
}
\email{swangcs@whu.edu.cn}

\author{Yuan Sun}
\orcid{0000-0003-2911-0070}
\affiliation{%
  \institution{La Trobe Business School\\La Trobe University}
  \city{Melbourne}
  \country{Australia}
}
\email{yuan.sun@latrobe.edu.au}

\author{Yuan Yao}
\orcid{}
\affiliation{%
  \institution{Hong Kong University of Science and Technology}
  \country{Hong Kong, China}
}
\email{yuany@ust.hk}

\author{Shangfeng Chen}
\affiliation{%
  \institution{School of Computer Science\\Wuhan University}
  \city{Wuhan}
  \country{China}
}
\email{brucechen@whu.edu.cn}
\thanks{*Corresponding authors: Sheng Wang and Jinshan Zeng}

\author{Quanqing Xu}
\affiliation{%
  \institution{Oceanbase\\Ant group}
  \city{Hangzhou}
  \country{China}
}
\email{xuquanqing.xqq@oceanbase.com}

\author{Chuanhui Yang}
\affiliation{%
  \institution{Oceanbase\\Ant group}
  \city{Hangzhou}
  \country{China}
}
\email{rizhao.ych@oceanbase.com}



\renewcommand{\shortauthors}{Shengkun Zhu et al.}
\begin{abstract}
In federated learning (FL), the assumption that datasets from different devices are independent and identically distributed (i.i.d.) often does not hold due to user differences, and the presence of various data modalities across clients makes using a single model impractical.
Personalizing certain parts of the model can effectively address these issues by allowing those parts to differ across clients, while the remaining parts serve as a shared model.
However, we found that \textit{partial model personalization may} \textit{exacerbate} \textit{client drift} (each client's local model diverges from the shared model), thereby reducing the effectiveness and efficiency of FL algorithms.
We propose an FL framework based on the alternating direction method of multipliers (ADMM), referred to as \fedapm, to mitigate client drift.
We construct the augmented Lagrangian function by incorporating first-order and second-order proximal terms into the objective, with the second-order term providing fixed correction and the first-order term offering compensatory correction between the local and shared models.
Our analysis demonstrates that \fedapm, by using explicit estimates of the Lagrange multiplier, is more stable and efficient in terms of convergence compared to other FL frameworks.
We establish the global convergence of \fedapm training from arbitrary initial points to a stationary point, achieving three types of rates: constant, linear, and sublinear, under mild assumptions.
We conduct experiments using four heterogeneous and multimodal datasets with different metrics to validate the performance of \fedapm.
Specifically, \fedapm achieves faster and more accurate convergence, outperforming the SOTA methods with average improvements of 12.3\% in test accuracy, 16.4\% in F1 score, and 18.0\% in AUC while requiring fewer communication rounds.

\end{abstract}


\begin{CCSXML}
<ccs2012>
   <concept>
       <concept_id>10010147.10010178.10010219</concept_id>
       <concept_desc>Computing methodologies~Distributed artificial intelligence</concept_desc>
       <concept_significance>500</concept_significance>
       </concept>
   <concept>
       <concept_id>10010147.10010919.10010172</concept_id>
       <concept_desc>Computing methodologies~Distributed algorithms</concept_desc>
       <concept_significance>300</concept_significance>
       </concept>
   <concept>
       <concept_id>10010147.10010257.10010321.10010337</concept_id>
       <concept_desc>Computing methodologies~Regularization</concept_desc>
       <concept_significance>100</concept_significance>
       </concept>
 </ccs2012>
\end{CCSXML}

\ccsdesc[500]{Computing methodologies~Distributed artificial intelligence}
\ccsdesc[300]{Computing methodologies~Distributed algorithms}
\ccsdesc[100]{Computing methodologies~Regularization}


\keywords{Federated learning, partial model personalization, ADMM, client drift, global convergence.}


\maketitle
\section{Introduction}
With the widespread use of mobile devices, vast amounts of data are generated that fuel various machine learning applications \cite{Liang2018IoT, Wang2023Apache, Piyush2021Query}.
However, traditional cloud-based training methods face issues of privacy leakage and data transmission costs \cite{Peter2021Distributed, Smith2017CoCoA}. 
Additionally, with the implementation of privacy regulations like the California Consumer Privacy Act (CCPA) \cite{Wikipedia1} and General Data Protection Regulation (GDPR) \cite{Wikipedia2}, protecting user data has become even more critical. 
Federated learning (FL) is a distributed machine learning framework that protects privacy by enabling local model training without the need for data centralization \cite{mcmahan2017communication, Li2020Challenges}, addressing challenges like data silos and insufficient samples \cite{Li2022Experimental, Kairouz2021Advances}, and fostering advancements in various intelligent applications \cite{Ye2024OpenFedLLM, Kuang2024FederatedScope}.

In FL, it is typically assumed that the datasets from various devices are sampled from the same or highly similar distributions \cite{Li2020fedprox, Li2022Experimental, Luo2021Fear}. 
However, due to the distinct characteristics of users and the growing prominence of personalized on-device services, the assumption of independent and identically distributed (i.i.d.) data often does not hold in practical scenarios \cite{Ye2024Heterogeneous, zhao2018federated, Sattler2020Robust}. 
This leads to statistical heterogeneity, which can significantly impact the effectiveness of FL algorithms \cite{Ye2024Heterogeneous}.
Moreover, different clients may have data in various modalities \cite{Feng2023FedMultimodal, Chen2024FedMBridge, Chen2022FedMSplit}, such as images, videos, text, and audio, each requiring distinct models like convolutional neural networks (CNNs) for images \cite{Yann2015Deep} and recurrent neural networks (RNNs) for audio \cite{goodfellow2016deep}.
Therefore, relying on a single model for all cases is ineffective and impractical \cite{Krishna2022Partial, Sun2023FedPerfix}.

\textit{Partial model personalization} \cite{Krishna2022Partial} is a method designed to address the challenge of heterogeneous and multimodal data by dividing the model parameters into two groups: shared and personalized model parameters.
The dimensions of personalized models can vary among clients, enabling personalized components to differ in the number of parameters or even in their architecture, while the shared model is common among clients and maintains the same structure.
However, while personalized models can effectively address the impact of heterogeneous and multimodal data, the shared model cannot, leading to the common saying: \textit{it started with a bang and ended with a whimper}.
This limitation arises because each client's local objectives differ. When solving for the shared model, the result is often a stationary point with respect to the local objective, rather than a stationary point with respect to the global objective.
This issue is commonly referred to as \textit{client drift} \cite{Praneeth2020SCAFFOLD}.

Several studies consider using full model personalization to address client drift \cite{Dinh2020pfedme, Li2021Ditto, Lin2022Personalized}, which involves personalizing all model parameters. However, each model requires twice the memory footprint of the full model, which limits the size of trainable models.
Moreover, \citet{Krishna2022Partial} proposed that full model personalization may not be essential for modern deep learning architectures, which consist of multiple simple functional units often structured in layers or other complex interconnected designs. 
Focusing on personalizing the appropriate components, guided by domain expertise, can yield significant advantages with only a minimal increase in memory requirements. 
Current partial model personalization methods can be categorized into two types based on their iterative approach: one using Gauss-Seidel iteration \cite{Liam2021Exploiting, Krishna2022Partial, Singhal2021Federated}, and the other employing Jacobi iteration \cite{Liang2020Think, arivazhagan2019federated, hanzely2021personalized}, referred to as \fedalt and \fedsim, respectively.
\citet{Krishna2022Partial} theoretically demonstrated the convergence of both methods and experimentally validated that \fedalt achieves better test performance than \fedsim.
However, these methods do not fundamentally address client drifts, as multiple local updates can still lead to deviations of each client's local model from the optimal representation.
Moreover, we find that partial model personalization sometimes even exacerbates client drift, as shown in Figure \ref{fig: example_client_drift}.
The local updates in \fedalt and \fedsim, compared to \fedavg (a de facto FL framework without personalization) \cite{mcmahan2017communication}, bring the local model closer to the optimal local model while moving it further from the optimal shared model.

\begin{figure}
  \centering
  \includegraphics[width=\linewidth]{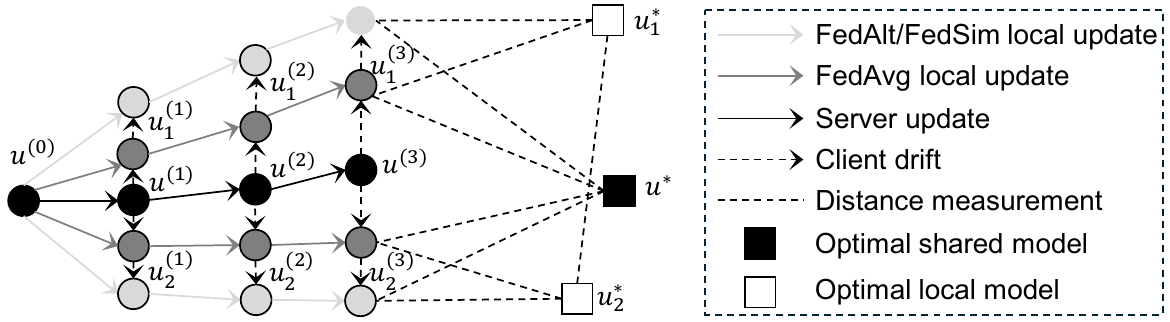}
  \caption{Client drift in \fedavg \cite{mcmahan2017communication} and \fedalt/\fedsim \cite{Krishna2022Partial} is illustrated using two clients with three local update steps. Partial model personalization reduces the gap between \(\bu_i\) and \(\bu_i^*\), but increases the gap between \(\bu_i\) and \(\bu^*\). This may cause the shared model \(\bu\) to drift further from \(\bu^*\), as \(\bu = (\bu_1 + \bu_2)/2\).}
  \label{fig: example_client_drift}
\end{figure}


Our main contribution in this paper is to address the issue that partial model personalization can exacerbate client drift.
Specifically, we propose an FL framework based on the alternating direction method of multipliers (ADMM): \fedapm.
We construct the augmented Lagrangian function by incorporating first-order and second-order proximal terms into the objective. 
The second-order term offers a fixed correction, while the first-order term provides a compensatory adjustment to fine-tune the local update.
Moreover, we demonstrate that \fedapm is more stable and efficient in terms of convergence compared to \fedalt and \fedsim, which results in improved communication efficiency. Specifically, we show that \fedalt and \fedsim can be interpreted as using the inexact penalty method, while \fedapm adopts ADMM, an augmented Lagrangian method (also known as the exact penalty method). 
In \fedapm, explicit estimates of the Lagrange multiplier are used to address the ill-conditioning that is inherent in quadratic penalty functions \cite{Numerical2006Wright}.

We theoretically establish the global convergence of \fedapm under weaker assumptions than those used in \cite{hanzely2021personalized, Krishna2022Partial, Liam2021Exploiting}.
Specifically, we establish global convergence with constant, linear, and sublinear convergence rates based on the \textit{Kurdyka-Łojasiewicz (KŁ)} \cite{kurdyka1998gradients} \textit{inequality framework}, as formulated in \cite{Attouch2013Convergence, wang2019global}. 
Our convergence analysis differs from existing work \cite{Attouch2013Convergence,wang2019global} in several key ways, enabling us to achieve the previously mentioned convergence results.
According to \cite{Attouch2013Convergence,wang2019global}, the conditions of \textit{sufficient descent}, \textit{relative error}, \textit{continuity}, and the \textit{KŁ property} guarantee the global convergence of a nonconvex algorithm. 
In contrast, we demonstrate sufficient descent and relative error conditions within an inexact ADMM optimization framework, while existing theories \cite{Attouch2013Convergence,wang2019global} require that all subproblems in ADMM be solved exactly. 
Our theoretical contributions are also of value to the optimization community.

Our experimental contributions demonstrate that partial model personalization can exacerbate client drift, while \fedapm effectively alleviates this issue.
Specifically, we validate \fedapm on several real-world datasets (including heterogeneous and multimodal data) using three partial model personalization strategies, four baselines, and six metrics.
We demonstrate that \fedapm improves the overall performance of the SOTA methods, resulting in an average improvement of 12.3\%, 16.4\%, and 18.0\% in testing accuracy, F1 score, and AUC, respectively. 
In terms of communication efficiency, \fedapm requires fewer communication rounds to converge to a lower loss.
Moreover, we explore the impact of hyperparameters on \fedapm and provide adjustment strategies based on empirical findings.


\section{Related Work}
Since partial model personalization is a special case of personalized FL approaches, we first review the existing personalized FL methods.
Subsequently, given that ADMM is a primal-dual-based optimization framework, we review the current research that utilizes primal-dual-based methods as solvers for FL.

\myparagraph{Personalized FL}
According to the model partitioning strategies, PFL can be categorized into two types: full model personalization and partial model personalization \cite{Krishna2022Partial}. 
Since the former is not the focus of this paper, interested readers can refer to \cite{Dinh2020pfedme, Li2021Ditto, zhu2024admm, Lin2022Personalized}. 
Here, we will only review partial model personalization methods.
Based on the iterative methods (Gauss-Seidel and Jacobi), partial model personalization can be divided into two categories: \fedalt \cite{Liam2021Exploiting, Singhal2021Federated} and \fedsim \cite{Liang2020Think, arivazhagan2019federated, hanzely2021personalized}.
These methods proposed different schemes for personalizing model layers.
\citet{Liang2020Think} and \citet{Liam2021Exploiting} proposed to personalize the input layers to learn a personalized representation, while \citet{arivazhagan2019federated} proposed learning a shared representation across various tasks through output layer personalization. 
However, regardless of which part of the model is personalized, the shared part still suffers from client drift, and this phenomenon can sometimes be exacerbated, leading to a decline in overall model performance.
In terms of theory, \citet{Krishna2022Partial} provided convergence guarantees for these methods, demonstrating a sublinear convergence rate under the assumptions of \textit{smoothness, bounded variance, and partial gradient diversity}, which are often too strong in practical scenarios.

\myparagraph{Primal-dual-based FL}
Existing FL frameworks can be classified into three categories based on the type of variables being solved: primal-based FL \cite{mcmahan2017communication, Li2020fedprox, Praneeth2020SCAFFOLD}, dual-based FL \cite{Ma2015Adding, Smith2017CoCoA, Smith2017Federated}, and primal-dual-based FL \cite{Zhou2023FedADMM, Gong2022FedADMM, Zhou2023FedGiA, zhang2021fedpd, Heejoo2024FedAND, Wang2022FedADMM, zhu2024admm}.
Primal-based FL solves the primal problem of FL, which makes it easy to implement \cite{zhu2024admm}. 
Consequently, it has become one of the most widely used FL frameworks \cite{Liu2021FATE, caldas2018leaf, Kallista2019Towards}.
In contrast, dual-based FL solves dual problems and has been shown to have better convergence than primal-based FL \cite{Shai2016Accelerated, Shai2013Stochastic}. 
However, dual-based FL is only applicable to convex problems.
In recent years, primal-dual-based FL has gained more attention due to its advantages from both primal-based and dual-based methods.
ADMM, as an exact penalty method \cite{boyd2011distributed, Zeng2021ADMM}, has been applied in FL optimization in recent years \cite{Zhou2023FedADMM}. 
It has been shown to offer advantages in terms of convergence \cite{Zhou2023FedGiA} and alleviating data heterogeneity \cite{Gong2022FedADMM}.
However, these studies do not explain why applying ADMM in FL outperforms other FL frameworks, a question we will discuss from an optimization perspective.

\section{Preliminaries}
This section presents the notations used throughout the paper and provides detailed definitions of both FL and ADMM.
Moreover, we present three types of partial model personalization methods.
\subsection{Notations}
\begin{table}
\centering
\setlength\tabcolsep{10pt}
\caption{Summary of notations}
\vspace{-1em}
\label{tab:notation}
\begin{tabular}{ll}
\toprule
\textbf{Notations} & \textbf{Description}\\
\midrule
$\bX_i$, $i\in[m]$ & The local dataset \\
$\bv_i$, $i\in[m]$ & The personalized model\\
$\bu_i$, $i\in[m]$ & The local model\\
$\bpi_i$, $i\in[m]$ & The dual variable\\
$\alpha_i$, $i\in[m]$ & The weight parameter\\
$\bu$ & The shared model\\
$\rho$ & The penalty parameter\\
$\mathcal{S}^t$ & The selected clients set in the $t$-th iteration\\
\bottomrule
\end{tabular}
\end{table}
We use different text-formatting styles to represent different mathematical concepts: plain letters for scalars, bold letters for vectors, and capitalized letters for matrices. For instance, $m$ represents a scalar, $\boldsymbol{v}$ represents a vector, and $\bV$ denotes a matrix. Without loss of generality, all training models in this paper are represented using vectors.
We use $[m]$ to represent the set $\{1, 2, ..., m\}$. We use ``$:=$" to indicate a definition, while $\mathbb{R}^d$ represents the $d$-dimensional Euclidean space. We represent the inner product of vectors, such as $\langle\boldsymbol{u},\boldsymbol{v}\rangle$, as the sum of the products of their corresponding elements. We use $||\cdot||$ to denote the Euclidean norm of a vector.
Table \ref{tab:notation} enumerates the notations used in this paper along with the description.

\subsection{Federated Learning}
In an FL scenario involving $m$ clients, each client $i$ holds a local dataset $\bX_i$ consisting of $n_i$ data samples drawn from the distribution $\mathcal{D}_i$. These clients collaborate through a central server to jointly train a model \( \bu \) that minimizes empirical risk as follows \cite{mcmahan2017communication}:
\begin{equation}\label{eq:problem of FL}
    \min_{\bu}\left\{\sum_{i=1}^{m}\alpha_i f_i(\bu):=\sum_{i=1}^{m}\alpha_i\mathbb{E}_{\bx\sim\mathcal{D}_i}[\ell_i(\bu;\bx)]\right\},
\end{equation}
where $\alpha_i$ is a weight parameter, typically chosen as either $1/m$ or $n_i/n$, where $n = \sum_{i=1}^m n_i$ represents the total data number, $\bx$ denotes a random sample from $\mathcal{D}_i$, $\ell_i(\bu; \bx)$ is the loss function for the model $\bu$ with respect to the sample $\bx$, and $f_i(\bu) := \mathbb{E}_{\bx \sim \mathcal{D}_i}[\ell_i(\bu; \bx)]$ represents the expected loss over the data distribution $\mathcal{D}_i$. In this paper, we consider $f_i(\bu)$ to be possibly non-convex.

\subsection{Alternating Direction Method of Multipliers}
ADMM is an optimization method within the augmented Lagrangian framework, ideally suited for addressing the following problem \cite{boyd2011distributed}:
\begin{align*}
    \min_{\bu\in\mathbb{R}^{r},\boldsymbol{v}\in\mathbb{R}^q}f(\bu)+g(\boldsymbol{v}),\quad\text{s.t. } A\bu+B\boldsymbol{v}-\boldsymbol{b}=\boldsymbol{0},
\end{align*}
where $A\in\mathbb{R}^{p\times r}$, $B\in\mathbb{R}^{p\times q}$, and $\boldsymbol{b}\in\mathbb{R}^{p}$. We directly give the augmented Lagrangian function of the problem as follows,
\begin{equation*}
    \ml(\bu,\boldsymbol{v},\bpi):=f(\bu)+g(\boldsymbol{v})+\langle\bpi,A\bu+B\boldsymbol{v}-\boldsymbol{b}\rangle+\frac{\rho}{2}||A\bu+B\boldsymbol{v}-\boldsymbol{b}||^2,
\end{equation*}
where $\bpi\in\mathbb{R}^p$ is the dual variable, and $\rho>0$ is the penalty parameter. After initializing the variables with $(\bu^0,\boldsymbol{v}^0,\bpi^0)$, ADMM iteratively performs the following steps:
\begin{equation*}
\left\{
\renewcommand{\arraystretch}{1.4} 
\begin{array}{l}
\bu^{t+1} = \operatorname{argmin}_{\bu \in \mathbb{R}^r} \mathcal{L}(\bu, \boldsymbol{v}^t, \boldsymbol{\pi}^t), \\
\boldsymbol{v}^{t+1} = \operatorname{argmin}_{\boldsymbol{v} \in \mathbb{R}^q} \mathcal{L}(\bu^{t+1}, \boldsymbol{v}, \boldsymbol{\pi}^t), \\
\boldsymbol{\pi}^{t+1} = \boldsymbol{\pi}^t + \rho (A \bu^{t+1} + B \boldsymbol{v}^{t+1} - \boldsymbol{b}).
\end{array}
\right.
\end{equation*}
ADMM offers effective distributed and parallel computing capabilities, efficiently addresses equality-constrained problems, and guarantees global convergence \cite{wang2019global}. This makes it especially suitable for large-scale optimization tasks and widely used in distributed computing \cite{boyd2011distributed} and machine learning \cite{Zeng2021ADMM}.

\begin{figure}
  \centering
  \includegraphics[width=\linewidth]{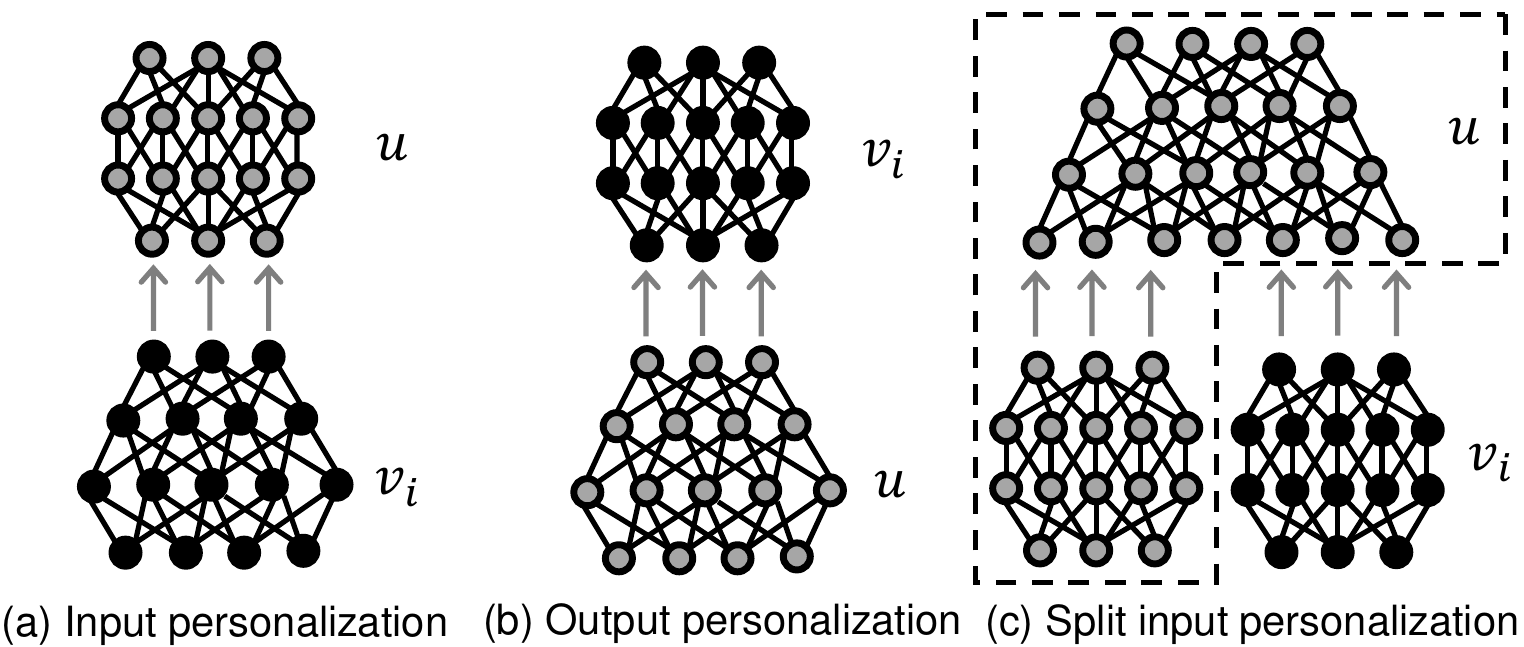}
  \caption{Three examples of partial model personalization in deep neural networks, where $\bv_i$ and $\bu$ represent the personalized and shared models, respectively.}
  \label{fig: three partial model}
\end{figure}

\subsection{Partial Model Personalization}
Partial model personalization allows clients to make personalized adjustments based on the shared global model.
\citet{Krishna2022Partial} categorized current methods for partial model personalization in deep neural networks into three types based on their applications:
\begin{itemize}
    \item \textbf{Input personalization}: The lower layers are trained locally, and the upper layers are shared among clients.
    \item \textbf{Output personalization}: The upper layers are trained locally, and the lower layers are shared among clients.
    \item \textbf{Split input personalization}: The input layers are divided horizontally into a shared and a personal part, which processes different portions of the input vector, and their outputs are concatenated before being passed to the upper layers of the model.
\end{itemize}
In Figure \ref{fig: three partial model}, we present examples of three partial model personalization strategies.
Input personalization can be seen as learning a shared representation among different clients \cite{Liang2020Think,Liam2021Exploiting}. 
Despite heterogeneous data with different labels, tasks may share common features, like those in various image types or word-prediction tasks \cite{Bengio2013Representation,Yann2015Deep}. 
After learning this representation, each client’s labels can be predicted with a linear classifier or shallow neural network \cite{Liam2021Exploiting}.
Output personalization is applicable to various tasks \cite{arivazhagan2019federated}, such as personalized image aesthetics \cite{Ren2017Personalized} and personalized highlight detection \cite{Molino2018PHD}, where explicit user features are absent from the data and must be inferred during training. As a result, the same input data may receive different labels from different users, indicating that personalized models must vary across users to accurately predict distinct labels for similar test data.
Split input personalization helps protect private user features by localizing their personalized embeddings on the device \cite{Krishna2022Partial}. Similar architectures have been proposed for context-dependent language models \cite{Mikolov2012Context}.

\section{Proposed \fedapm}
In this section, we begin by presenting the formulation of the optimization problem and defining the stationary points. 
Next, we offer a detailed algorithmic description of \fedapm. 
Finally, we discuss the advantages of \fedapm compared to other FL frameworks.
\subsection{Problem Formulation}
Let $\bV:=\{\bv_i\}_{i=1}^m$ be a set of personalized models and $\bu$ be the shared model. We consider solving the following optimization problem:
\begin{align}\label{eq: pfl problem1}
    \min_{\bV,\bu} \bigl\{f(\bV,\bu) := \sum_{i=1}^m \alpha_i f_i(\bv_i, \bu) \bigr\},
\end{align}
where $f_i(\bv_i, \bu):=\mathbb{E}_{\bx \sim \mathcal{D}_i}[\ell_i((\bv_i,\bu); \bx)]$ denotes the expected loss over the data distribution $\mathcal{D}_i$ for the combined model $(\bv_i, \bu)$.
The existing two types of methods for solving \eqref{eq: pfl problem1} are \fedalt and \fedsim \cite{Krishna2022Partial}, which are analogous to the Gauss-Seidel and Jacobi updates in numerical linear algebra \cite{Demmel1997Algebra}, respectively. However, we found that these two methods can exacerbate client drift in the shared model (see Figure \ref{fig: example_client_drift}, which will be validated in Section \ref{sec: experiments}).
To address this issue, we use ADMM to solve \eqref{eq: pfl problem1}, and we discuss the advantages of using ADMM in Section \ref{sec: discussion}.
We introduce auxiliary variables $\bU:=\{\bu_i\}_{i=1}^m$ to transform \eqref{eq: pfl problem1} into a separable form (with respect to a partition or splitting of the variable into multi-block variables):
\begin{equation}\label{eq: problem pfl2}
\begin{split}
    &\min_{\bV,\bU, \bu} \bigl\{f(\bV,\bU) := \sum_{i=1}^m \alpha_i f_i(\bv_i, \bu_i)\bigr\},\quad\text{ s.t. }\,\bu_i=\bu,\,i\in[m].
\end{split}
\end{equation}
Note that \eqref{eq: problem pfl2} is equivalent to \eqref{eq: pfl problem1} in the sense that the optimal solutions coincide (discussed in Section \ref{sec: stationary points}).
To apply ADMM for \eqref{eq: problem pfl2}, we define the augmented Lagrangian function as follows:
\begin{equation}\label{eq: lagrangian function}
\begin{split}
    \ml(\bV,\bU,\bPi,\bu)&:=\sum_{i=1}^m\ml_i(\bv_i,\bu_i,\bpi_i, \bu), \\
    \ml_i(\bv_i,\bu_i,\bpi_i,\bu)&:=\alpha_i f_i(\bv_i, \bu_i)+\langle \bpi_i, \bu_i-\bu\rangle + \frac{\rho}{2}\|\bu_i-\bu\|^2,
\end{split}
\end{equation} 
where $\Pi:=\{\bpi_i\}_{i=1}^{m}$ denotes the set of dual variables, and $\rho>0$ represents the penalty parameter. 
The ADMM algorithm to solve \eqref{eq: problem pfl2} can be outlined as follows: starting from arbitrary initialization $(\bV^0, \bU^0, \Pi^0, \bu^0)$, the update is iteratively performed for each $t \geq 0$:
\begin{align}\label{eq: admm update order}
\left\{
\begin{aligned}
\bv_i^{t+1}&=\operatorname{argmin}_{\bv_i}\ml_i(\bv_i,  \bu_i^t, \boldsymbol{\pi}^t_i,\bu^{t}),\\ 
\bu_i^{t+1} & =\operatorname{argmin}_{\bu_i} \ml_i(\bv_i^{t+1}, \bu_i, \boldsymbol{\pi}^t_i, \bu^{t}),\\ 
\boldsymbol{\pi}_i^{t+1} & =\boldsymbol{\pi}_i^t+\rho(\bu_i^{t+1}-\bu^{t}),\\
\bu^{t+1} & =\operatorname{argmin}_{\bu} \mathcal{L}(\bV^{t+1}, \bU^{t+1}, \Pi^{t+1}, \bu)\\
&=\frac{1}{m} \sum_{i=1}^m(\bu_i^{t+1}+\frac{1}{\rho}\boldsymbol{\bpi}_i^{t+1}). 
\end{aligned}
\right.
\end{align}
Due to the potentially non-convex nature of $f_i$, closed-form solutions for $\bu_i$ and $\bv_i$ may not exist. We will provide an approach to address this issue later in Section \ref{sec:algorithm design}.

\subsection{Stationary Points}\label{sec: stationary points}
We define the optimal conditions of \eqref{eq: problem pfl2} as follows:
\begin{definition}[Stationary point]\label{definition: optimal condition}
    A point $(\bV^*,\bU^*,\bu^*,\Pi^*)$ is a stationary point of \eqref{eq: problem pfl2} if it satisfies
\begin{align}\label{eq: optimal condition of pfl problem2}
\left\{\begin{aligned}
\alpha_i\nabla_{\bu_i} f_i(\bv_i^*,\bu_i^*)+\bpi_i^*+\rho(\bu_i^*-\bu^*)& =0, &  i \in[m],\\
\nabla_{\bv_i} f_i(\bv_i^*,\bu_i^*)& =0, & i \in[m], \\
\bu_i^* - \bu^*& =0, & i \in[m], \\
\sum_{i=1}^m \bpi_i^*& =0. &  \\
\end{aligned}\right.    
\end{align}
\end{definition}
If $f_i$ is convex with respect to $\bv_i$ and $\bu_i$ for any $i\in[m]$, then a point is a globally optimal solution if and only if it satisfies \eqref{eq: optimal condition of pfl problem2}.
Moreover, a stationary point $(\bV^*,\bU^*,\bu^*,\Pi^*)$ of \eqref{eq: problem pfl2} indicates that 
\vspace{-0.5em}
\begin{align}\label{eq: optimal condition of pfl problem1}
\left\{\begin{aligned}
\sum_{i=1}^m \alpha_i\nabla_{\bu} f_i(\bv_i^*,\bu^*)& =0, \\
\vspace{-5pt} 
\nabla_{\bv_i} f_i(\bv_i^*,\bu^*)& =0, & i \in[m], \\
\end{aligned}\right.    
\end{align}
That is, $(\bV^*,\bu^*)$ is also a stationary point of \eqref{eq: pfl problem1}.

\begingroup
\begin{algorithm}[t]
  \caption{\fedapm}
  \label{alg: fedapm}
  \SetAlgoLined
  \KwIn{$T:$ communication rounds, $\rho$: penalty parameter, $m$: number of clients, $\bX_i$: local dataset, $\sigma_i$: hyperparameter.}
  \KwOut{$\{\bv_i\}_{i=1}^m$ (personalized), $\{\bu_i\}_{i=1}^m$ (local), $\bu$ (shared).}
  \textbf{Initialize:} $\bv_i^0,\bu_i^0,\bpi_i^0, \bz_i^0=\bu_i^0+\frac{1}{\rho}\bpi_i^0, \xi_i^0, \mu_i\in(0,1), i\in[m]$.

  \For{$t=0,1,\cdots,T-1$}{
  \textcolor{gray}{\underline{\textit{Weights upload:}}} All clients send $\bz_i^t$ to the server \label{line 4}\;
  \textcolor{gray}{\underline{\textit{Weights average:}}} Server aggregates $\bz_i^t$ by \label{line 5}
\begin{align}
  \hspace{-3em}\bu^{t} = \frac{1}{m}\sum_{i=1}^m \bz_i^t.\hspace{-4.3em} \label{eq: solve u}
\end{align}

\textcolor{gray}{\underline{\textit{Weights feedback:}}} Broadcast $\bu^{t}$ to all clients\label{line 6}\;
\textcolor{gray}{\underline{\textit{Client selection:}}} Randomly select $s$ clients $\mathcal{S}^t\subset[m]$ \label{line 3}\;

  \For{each client $i\in\mathcal{S}^t$}{
  \textcolor{gray}{\underline{\textit{Local update:}}} client $i$ update its parameters as follows: \label{line 8}
    \begin{align}
        \hspace{-3em}&\bv_i^{t+1} = \operatorname{argmin}_{\bv_i} f_i(\bv_i,\bu_i^t) + \frac{\sigma_i}{2}\|\bv_i-\bv_i^t\|^2, \hspace{-1em} \label{eq: solve vi}  \\
        \hspace{-3em}&\xi_i^{t+1} \leq \mu_i \xi_i^t,\hspace{-1em}\label{eq: approximate solution}\\
        \hspace{-3em}&\text{Find a $\xi_i^{t+1}$-approximate solution $\bu_i^{t+1}$ (Definition \ref{definition: approximate solutions}),}\hspace{-1em}  \label{eq: solve ui}\\
        \hspace{-3em}& \bpi_i^{t+1} = \bpi_i^{t} + \rho (\bu_i^{t+1}-\bu^t),\hspace{-1em}\label{eq: solve pi}\\
        \hspace{-3em}& \bz_i^{t+1} = \bu_i^{t+1} + \frac{1}{\rho}\bpi_i^{t+1}\hspace{-1em}\label{eq: update message}
  \end{align}
  }
  \For{each client $i\notin\mathcal{S}^t$}{
  \textcolor{gray}{\underline{\textit{Local invariance:}}} client $i$ keep its parameters as follows: \label{line 10}
  \vspace{-1em}
  \begin{align}\label{eq: client unchange}
\hspace{-13em}(\zeta_i^{t+1}, \bv_i^{t+1}, \bu_i^{t+1}, \bpi_i^{t+1}, \bz_i^{t+1}) = (\zeta_i^{t}, \bv_i^{t}, \bu_i^{t}, \bpi_i^{t}, \bz_i^{t}).\hspace{-9.6em}
  \end{align}
  }
  }

\KwRet $\{\bv_i^T\}_{i=1}^m,\,\{\bu_i^T\}_{i=1}^m,\,\bu^T$.
\end{algorithm}    

\endgroup

\subsection{Algorithmic Design}\label{sec:algorithm design}
In Algorithm 1, we present the details of \fedapm. 
Figure \ref{fig: example} shows a running example of \fedapm.
The algorithm is divided into two parts, which are executed on the clients and the server, respectively.
\begin{itemize}
    \item \textbf{Server update:}
    First, the clients upload their locally computed parameters $\bz_i^{t}$ to the server (Line \ref{line 4}). 
    The server aggregates these parameters to obtain the shared model $\bu^{t}$ (Line \ref{line 5}), and then broadcasts $\bu^{t}$ to each client (Line \ref{line 6}). 
    Next, the clients are randomly divided into two groups (Line \ref{line 3}). 
    For the clients in $\mathcal{S}^t$, they perform a local update, while the clients not in $\mathcal{S}^t$ keep their local parameters unchanged.
    \item \textbf{Client update:} 
    The clients in $\mathcal{S}^t$ update the model parameters locally (Line \ref{line 8}). 
    Due to the possible non-convex nature of $f_i$, we propose using proximal update strategies for solving $\bv_i$ and $\bu_i$. 
    Specifically, we consider solving $\bv_i$ by adding a proximal term to the augmented Lagrangian function such that the new function is convex with respect to $\bv_i$, that is 
\begin{align}\label{eq: 15 solve v with proximal update}
    \bv_i^{t+1} = \operatorname{argmin}_{\bv_i}\ml_i(\bv_i,\bu_i^t,\bpi_i^t,\bu^t) + \frac{\sigma_i}{2}\|\bv_i-\bv_i^t\|^2,
\end{align}
where $\sigma_i$ is a hyperparameter employed to control the degree of approximation between $\bv_i$ and $\bv_i^t$.
The purpose of using proximal update strategies is to effectively stabilize the training process \cite{Zeng2019Global}.
For each subproblem involving $\bv_i$, we assume the minimizer can be achieved.
In a similar vein, to solve for $\bu_i$, we consider applying \textit{stochastic gradient descent} (SGD) to obtain an approximate solution for $\bu_i$, defined as follows:
    \begin{definition}[$\xi$-Approximate solution]\label{definition: approximate solutions}
        \!\! For $\xi_i^{t+1}\!\in\! (0,\!1)$, we say $\bu_i^{t+1}$ is a $\xi_i^{t+1}$-approximate solution of $\min_{\bu_i} \!\ml_i(\bv_i^{t+1}\!, \bu_i,\! \boldsymbol{\pi}^t_i, \!\bu^{t})$ if 
        \begin{align}\label{eq: approximate solution about u}
            \|\alpha_i \nabla_{\bu_i}f_i(\bv_i^{t+1}, \bu_i^{t+1})+\pi_i^t+\rho(\bu_i^{t+1}-\bu^t)\|^2\leq \xi_i^{t+1}.
        \end{align}
         Note that a smaller $\xi_i^{t+1}$ corresponds to higher accuracy.
    \end{definition}
SGD is commonly used to compute approximate solutions in a finite number of iterations and is applied in many FL frameworks \cite{Li2020fedprox, Zhou2023FedADMM, Dinh2020pfedme}.
Note that we set the local accuracy level at each iteration as $\xi_i^{t+1} \leq \mu_i \xi_i^{t}$. In the early stages of algorithm iteration, when the solution is far from stationary points, setting a larger $\xi_i$ can effectively reduce the number of iterations and improve efficiency.
As iterations increase, the solution gets closer to the stationary points. Therefore, reducing $\xi_i$ can improve the accuracy.
After computing $\bv_i$ and $\bu_i$, we follow from \eqref{eq: admm update order} to update the dual variable $\bpi_i$ and calculate the update messages $\bz_i$ that will be uploaded to the server.
For those clients that are not in $\mathcal{S}^t$, the local parameters remain unchanged (Line \ref{line 10}).

\end{itemize}

\begin{figure*}[t]
  \centering
  \includegraphics[width=0.7\linewidth]{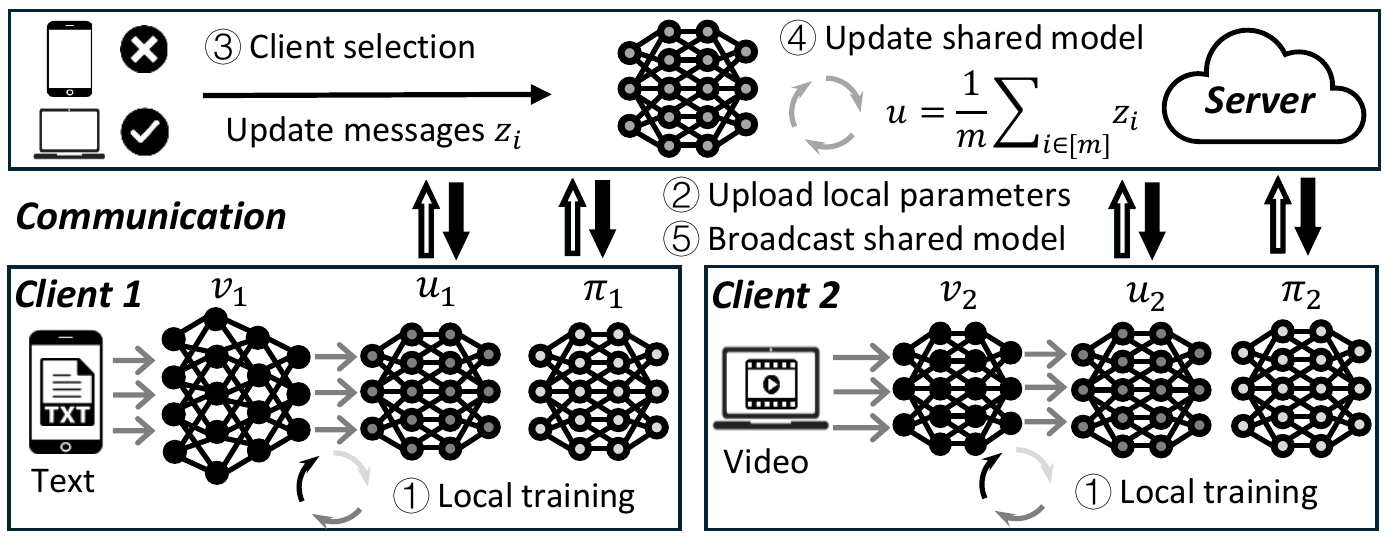}
  \caption{A running example of \fedapm. Different clients may possess heterogeneous or multimodal data. Each client updates its local parameters using local data and uploads $\bz_i$ to the server, which then updates and broadcasts $\bu$ to all clients.}
  \label{fig: example}
\end{figure*}

\subsection{Discussions}\label{sec: discussion}
In this section, we provide an intuitive discussion on why partial model personalization can exacerbate client drift in the shared model. 
Furthermore, we highlight the superiority of \fedapm in addressing client drift, as well as its improved convergence stability and efficiency compared to \fedalt and \fedsim.

In input personalization, the personalized module processes the raw input before it reaches the shared model. 
As a result, each client’s personalized transformation alters the input distribution seen by the shared model. 
Due to heterogeneity in client data and personalized modules, the shared model receives inconsistent and client-specific inputs, which leads to misaligned optimization objectives across clients.
This discrepancy causes the shared model to drift, as it is effectively optimizing for a different feature space on each client.
In output personalization, it is the backpropagated gradients from the personalized layers that influence the shared layers. 
Since these gradients are conditioned on personalized outputs (e.g., client-specific classification heads), they differ across clients even for similar inputs, thereby causing divergence in the shared representation layer updates.

\fedapm can be considered a general FL framework, while \fedalt and \fedsim are special cases of \fedapm. 
Specifically, when the dual variables \(\bpi_i,i\in[m]\) and the penalty parameter $\rho$ are set to $\boldsymbol{0}$, and the initial values of \(\bu_i\) are set to \(\bu^t\) in the corresponding subproblem, \fedapm reduces to \fedalt and \fedsim.
Therefore, \fedalt and \fedsim can be interpreted as FL approaches that solve \eqref{eq: problem pfl2} through penalty methods.
However, client drift arises in these methods: each client optimizes its own objective rather than the global one due to differing local objectives, which could hinder convergence or even lead to divergence. 
While increasing $\rho$ can mitigate this issue, it introduces another challenge: for ill-conditioned problems, an excessively large $\rho$ is required to enforce constraints effectively. 
This, in turn, may result in numerical instability during optimization, as demonstrated in \cite{Numerical2006Wright}.
This issue is particularly problematic in the FL optimization, where the number of constraints $m$ further increases the condition number of the objective function's Hessian matrix, exacerbating the risk of ill-conditioning.

In \fedapm, the second-order term provides a fixed correction for the update of $\bu_i$, causing the gradient descent direction of $\bu_i$ to shift towards $\bu$ by a fixed value $\rho(\bu_i - \bu)$, while the first-order term allows the gradient descent direction of $\bu_i$ to shift towards $\bu$ by a variable value $\bpi_i$, thereby providing compensation that brings $\bu_i$ closer to $\bu$ and effectively addressing the issue of client drift.
Moreover, the update of the dual variables corrects the bias in constraint violations, allowing ADMM to avoid relying on large \(\rho\) to enforce the constraints, thus preventing ill-conditioning.

\section{Convergence Analysis}\label{sec: convergence}
We aim to establish the global convergence and convergence rate for \fedapm, starting with the assumptions used in our analysis.
\subsection{Main Assumptions}
We present the definitions of \textit{graph}, \textit{semicontinuous}, \textit{real analytic}, and \textit{semialgebraic} functions, which are utilized in our assumptions.
\begin{definition}[Graph]
    Let $f:\mathbb{R}^p\to\mathbb{R}\cup\{+\infty\}$ be an extended real-valued function, its graph is defined by
\begin{align*}
    \text{Graph}(f):=\{(\boldsymbol{x},y)\in\mathbb{R}^p\times\mathbb{R}:y=f(\boldsymbol{x})\},
\end{align*}
and its domain is defined by $\text{dom}(f):=\{\boldsymbol{x}\in\mathbb{R}^p:f(\boldsymbol{x})<+\infty\}$. If $f$ is a proper function, i.e., $\text{dom}(f)\neq\emptyset$, then the set of its global minimizers is defined by $\argmin f:=\{\boldsymbol{x}\in\mathbb{R}^p:f(\boldsymbol{x})=\inf f\}$.
\end{definition}

\begin{definition}[Semicontinuous \cite{bochnak2013real}]
    A function $f:\mathcal{X}\to \mathbb{R}$ is lower semicontinuous if for any $x_0\in\mathcal{X}$, $\lim_{x\to x_0}\inf f(x)\geq f(x_0)$.
\end{definition}

\begin{definition}[Real analytic \cite{krantz2002primer}]\label{definition: real analytic}
    A function $f$ is real analytic on an open set $\mathcal{X}$ in the real line if for any $x_0\in \mathcal{X}$, $f(x)$ can be represented as $f(x)=\sum_{i=1}^{+\infty}a_i(x-x_0)^i$, where $\{a_i\}_{i=1}^{+\infty}$ are real numbers and the series is convergent to $f(x)$ for $x$ in a neighborhood of $x_0$.
\end{definition}

\begin{definition}[Semialgebraic set and function \cite{bochnak2013real}]\label{definition: Semialgebraic}
       \quad
\begin{itemize}
    \item[a)] A set $\mathcal{X}$ is called semialgebraic if it can be represented by  
    \begin{align*}
        \mathcal{X}=\cup_{i=1}^r\cap_{j=1}^s\{\bx\in\mathbb{R}^p: P_{ij}(\bx)=0,Q_{ij}(\bx)>0\},
    \end{align*}
    where $P_{ij}(\cdot)$ and $Q_{ij}(\cdot)$ are real polynomial functions for $1\leq i\leq r$ and $1\leq j\leq s$.
    \item[b)] A function $f$ is called semialgebraic if $\text{Graph}(f)$ is semialgebraic.
\end{itemize}
\end{definition}
As highlighted in \cite{law1965ensembles,bochnak2013real,shiota1997geometry}, semialgebraic sets are closed under several operations, including finite unions, finite intersections, Cartesian products, and complements. 
Typical examples include polynomial functions, indicator functions of a semialgebraic set, and Euclidean norm.
In the following, we outline the assumptions employed in our convergence analysis.
\begin{assumption}\label{assumption: either real analytic or semialgebraic}
Suppose the expected loss functions $f_i, i\!\in\![m]$ are
\begin{itemize}
    \item[a)] proper lower semicontinuous and nonnegative functions, and
    \item[b)] either real analytic or semialgebraic.
\end{itemize}
\end{assumption}

\begin{assumption}[Gradient Lipschitz continuity]\label{assmption lipschitz}
    For each $i\in[m]$, the expected loss function $f_i(\bv_i,\bu)$ is continuously differentiable. There exist constants $L_u$, $L_v$, $L_{uv}$, $L_{vu}$ such that for each $i\in[m]$:
\begin{itemize}
    \item $\nabla_{\bv_i}f_i(\bv_i,\bu_i)$ is $L_v$-Lipschitz with respect to $\bv_i$ and $L_{vu}$-Lipschitz with respect to $\bu_i$, i.e.
\begin{align*}
    \|\nabla_{\bv_i} f_i(\bv_1,\bu_i)-\nabla_{\bv_i} f_i(\bv_2,\bu_i)\| &\leq L_{v} \|\bv_1-\bv_2\|,\\
    \|\nabla_{\bv_i} f_i(\bv_i,\bu_1)-\nabla_{\bv_i} f_i(\bv_i,\bu_2)\| &\leq L_{vu} \|\bu_1-\bu_2\|,
\end{align*}
    \item $\nabla_{\bu_i}f_i(\bv_i,\bu_i)$ is $L_u$-Lipschitz with respect to $\bu_i$ and $L_{uv}$-Lipschitz with respect to $\bv_i$, i.e.
\begin{align*}
    \|\nabla_{\bu_i} f_i(\bv_i,\bu_1)-\nabla_{\bu_i} f_i(\bv_i,\bu_2)\| &\leq L_{u} \|\bu_1-\bu_2\|,\\
    \|\nabla_{\bu_i} f_i(\bv_1,\bu_i)-\nabla_{\bu_i} f_i(\bv_1,\bu_i)\| &\leq L_{uv} \|\bv_1-\bv_2\|,
\end{align*}
\end{itemize}

\end{assumption}

\begin{assumption}\label{assumption: coercive}
    The expected loss functions $f_i, i\in[m]$ are coercive\footnote{An extended-real-valued function $f:\mathbb{R}^p\to\mathbb{R}\cup\{+\infty\}$ is called coercive if and only if $f(\bx)\to +\infty$ as $\|\bx\|\to+\infty$.}. 
\end{assumption}
According to \cite{krantz2002primer,law1965ensembles,bochnak2013real,shiota1997geometry}, common loss functions such as squared, logistic, hinge, and cross-entropy losses can be verified to satisfy Assumption \ref{assumption: either real analytic or semialgebraic}. Assumption \ref{assmption lipschitz} is a standard assumption in the convergence analysis of FL, as noted in \cite{Krishna2022Partial}. Assumption \ref{assumption: coercive} is widely applied to establish the convergence properties of optimization algorithms, as shown in \cite{Zhou2023FedGiA, Zeng2021ADMM, Zeng2019Global}. Our assumptions are weaker than those used by \fedalt and \fedsim \cite{Krishna2022Partial}, which include \textit{gradient Lipschitz continuity, bounded variance, and bounded diversity}.

\subsection{Global Convergence}
Our global convergence analysis builds upon the analytical framework presented in \cite{Attouch2013Convergence}, which relies on four essential components: the establishment of \textit{sufficient descent} and \textit{relative error} conditions, along with the verification of the \textit{continuity condition} and the \textit{KŁ property} of the objective function.
Let $\mathcal{P}^t:=(\bV^t, \bU^t, \bPi^t, \bu^t)$, we define the Lyapunov function $\tilde{\ml}(\mathcal{P}^t)$  by adding to the original augmented Lagrangian the accuracy level of each $\bu_i^t$:
\begin{align} \label{eq: Lyapunov function}
    \tilde{\ml}(\mathcal{P}^t):=\ml(\mathcal{P}^t)+\sum_{i=1}^m \frac{29}{\rho(1-\mu_i)} \xi_i^{t}.
\end{align}
With the Lyapunov function established, we can demonstrate that the sequence generated by ADMM converges to the stationary points defined in \eqref{eq: optimal condition of pfl problem1} and \eqref{eq: optimal condition of pfl problem2} (see Theorem \ref{theorem: sequences convergence}).
Below, we present the two key lemmas, with additional details provided in the Appendix. 
Specifically, the verification of the KŁ property for FL training models satisfying Assumption \ref{assumption: either real analytic or semialgebraic} is outlined in Proposition \ref{proposition: kl property of L} (Appendix \ref{subappendix: Kurdyka-Łojasiewicz Property}), while the verification of the continuity condition is naturally satisfied by Assumption \ref{assumption: either real analytic or semialgebraic}. 
Using Lemmas \ref{lemma: sufficient descent} and \ref{lemma: relative error}, Proposition \ref{proposition: kl property of L}, and Assumption \ref{assumption: either real analytic or semialgebraic}, we establish Theorem \ref{theorem: sequences convergence}. 
\begin{lemma}[Sufficient Descent]\label{lemma: sufficient descent}
    \! Suppose that Assumption \ref{assmption lipschitz} holds. Let $\{\mathcal{P}^t\}$ denote the sequence generated by Algorithm \ref{alg: fedapm}; let each client set the hyperparameters such that $\max\{3\alpha_i L_u, 3\alpha_i L_{uv} \}\leq \rho \leq \frac{15}{8}\sigma_i$, then for all $t\geq 0$, it holds that
\begin{align*}
    &\tilde{\ml}(\mathcal{P}^{t})-\tilde{\ml}(\mathcal{P}^{t+1})\geq a \Sigma_p^{t+1},\,\text{where}\,\,a:=\min\{\frac{\rho}{60} , \, \frac{\sigma_i}{2}-\frac{4\rho}{15}\}\\
    &\text{and}\,\, \Sigma_p^{t+1} \!:=\!\sum_{i=1}^m(\|\bv_i^{t+1}\!-\!\bv_i^{t}\|^2\!+\!\|\bu^{t+1}_i\!-\!\bu_i^{t}\|^2\!+\!\|\bu^{t+1}\!-\!\bu^t\|^2).
\end{align*}

\end{lemma}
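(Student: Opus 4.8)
The plan is to prove the sufficient descent property by decomposing the one-step change in the Lyapunov function $\tilde{\ml}$ into contributions from each of the four block updates (the $\bv_i$-step, the $\bu_i$-step, the $\bpi_i$-step, and the $\bu$-step), bounding each piece, and then combining them so that the cross terms and the dual-variable terms are absorbed by the proximal and penalty terms. First I would write $\tilde{\ml}(\mathcal{P}^t) - \tilde{\ml}(\mathcal{P}^{t+1})$ as the sum of the augmented-Lagrangian decrease $\ml(\mathcal{P}^t) - \ml(\mathcal{P}^{t+1})$ plus the telescoping contribution $\sum_i \tfrac{29}{\rho(1-\mu_i)}(\xi_i^t - \xi_i^{t+1})$; since the clients outside $\mathcal{S}^t$ leave all variables unchanged by \eqref{eq: client unchange}, it suffices to track a single active client $i$ and sum. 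For the $\bv_i$-update I would exploit that \eqref{eq: solve vi} is the exact minimizer of a $\sigma_i$-strongly-convex-in-the-proximal-sense surrogate; the optimality condition plus $L_v$-Lipschitzness of $\nabla_{\bv_i} f_i$ yields a descent of order $\tfrac{\sigma_i}{2}\|\bv_i^{t+1}-\bv_i^t\|^2$ (after the standard descent-lemma manipulation that converts the proximal guarantee into a genuine decrease of $\ml_i$).

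The crux of the argument is controlling the dual-update term, since increasing $\bpi_i$ \emph{raises} the Lagrangian and must be dominated by the decreases from the primal steps. The standard ADMM device is to convert $\|\bpi_i^{t+1}-\bpi_i^t\|^2$ into a gradient difference via the (inexact) optimality condition for $\bu_i$. Here, because $\bu_i^{t+1}$ is only a $\xi_i^{t+1}$-approximate solution in the sense of Definition \ref{definition: approximate solutions}, I would write $\bpi_i^{t+1}+\alpha_i\nabla_{\bu_i}f_i(\bv_i^{t+1},\bu_i^{t+1})$ as a residual vector of squared norm at most $\xi_i^{t+1}$, so that $\|\bpi_i^{t+1}-\bpi_i^t\|$ is bounded by $\alpha_i\|\nabla_{\bu_i}f_i(\bv_i^{t+1},\bu_i^{t+1}) - \nabla_{\bu_i}f_i(\bv_i^t,\bu_i^t)\|$ plus error terms of order $\sqrt{\xi_i^{t+1}}$ and $\sqrt{\xi_i^t}$. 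Applying $L_u$- and $L_{uv}$-Lipschitzness bounds this gradient difference by $\alpha_i L_u\|\bu_i^{t+1}-\bu_i^t\| + \alpha_i L_{uv}\|\bv_i^{t+1}-\bv_i^t\|$. Squaring (with a Young's-inequality split) produces terms $\propto (\alpha_i L_u)^2\|\bu_i^{t+1}-\bu_i^t\|^2$ and $\propto (\alpha_i L_{uv})^2\|\bv_i^{t+1}-\bv_i^t\|^2$; the condition $\max\{3\alpha_i L_u, 3\alpha_i L_{uv}\} \le \rho$ is precisely what makes these coefficients, after division by $\rho$ (the factor appearing in the $\bpi$-increment), small enough to be swallowed by the primal descents.

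I expect the main obstacle to be the careful bookkeeping of the accuracy-level terms $\xi_i^t$ and the precise constants. The residual terms of order $\sqrt{\xi_i^{t+1}}$ cannot be absorbed by the primal decreases directly; instead they must be matched against the telescoping $\xi$-contribution in $\tilde{\ml}$. This is exactly why the Lyapunov function carries the extra summand $\sum_i \tfrac{29}{\rho(1-\mu_i)}\xi_i^t$: using the geometric contraction $\xi_i^{t+1}\le \mu_i\xi_i^t$ from \eqref{eq: approximate solution}, the quantity $\tfrac{29}{\rho(1-\mu_i)}(\xi_i^t-\xi_i^{t+1}) \ge \tfrac{29}{\rho}\xi_i^t$ (since $\xi_i^t - \xi_i^{t+1}\ge (1-\mu_i)\xi_i^t$), which should dominate the stray $\xi$-residuals from the dual step after they are bounded by an appropriate multiple of $\xi_i^t$ via Young's inequality. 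The constant $29$ and the numerical thresholds $\tfrac{\rho}{60}$, $\tfrac{\sigma_i}{2}-\tfrac{4\rho}{15}$, and $\rho\le\tfrac{15}{8}\sigma_i$ then arise from collecting all coefficients and requiring each of the three terms in $\Sigma_p^{t+1}$ to have a strictly positive net coefficient bounded below by $a$. Once these constants check out, summing over the active clients $i\in\mathcal{S}^t$ and recognizing that inactive clients contribute zero to both sides completes the bound $\tilde{\ml}(\mathcal{P}^t)-\tilde{\ml}(\mathcal{P}^{t+1})\ge a\,\Sigma_p^{t+1}$.
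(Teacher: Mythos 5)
Your proposal is correct and follows essentially the same route as the paper's proof: the same block-wise decomposition of the Lagrangian change, the same device of converting the dual increment $\|\bpi_i^{t+1}-\bpi_i^t\|^2$ into Lipschitz gradient differences plus inexactness residuals (the paper's Lemma~\ref{lemma: pi estimate}), the same absorption of the $\xi$-residuals by the telescoping term in $\tilde{\ml}$ via $\xi_i^t-\xi_i^{t+1}\geq(1-\mu_i)\xi_i^t$, and the same treatment of inactive clients. One minor simplification the paper makes: the descent $\frac{\sigma_i}{2}\|\bv_i^{t+1}-\bv_i^t\|^2$ from the $\bv_i$-step follows purely from the minimality of $\bv_i^{t+1}$ in \eqref{eq: solve vi}, with no need for the $L_v$-Lipschitzness you invoke.
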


\begin{lemma}[Relative Error]\label{lemma: relative error}
    Suppose that Assumption \ref{assmption lipschitz} holds. Let $\{\mathcal{P}^t\}$ denote the sequence generated by Algorithm \ref{alg: fedapm}, let $\Xi^t:=\sum_{i=1}^m \xi_i^{t}$, $\tilde{\Xi}^{t+1}:=\Xi^{t+1}-\Xi^t$, and let each client set the hyperparameters such that $\sigma_i\geq \alpha_i L_v$ and $\max\{3\alpha_i L_u, 3\alpha_i L_{uv} \}\leq \rho \leq \frac{15}{8}\sigma_i$, then for all $t\geq 0$, the following result holds:
\begin{align}
    \text{dist}(\boldsymbol{0}, \partial \ml (\mathcal{P}^t))^2\leq  b (\Sigma_p^{t+1} + \tilde{\Xi}^{t+1}),
\end{align}
where $b:=\max\{\frac{22}{5}\sigma_i^2 + \frac{4}{3}\rho^2+\frac{8\rho}{15}, \frac{16}{3}\rho^2+2+\frac{8\rho}{15}, \frac{48+4\rho}{\rho(1-\mu_i)}\}$, $\text{dist}(0,\mathcal{C}):=\inf_{\boldsymbol{c}\in \mathcal{C}} \|\boldsymbol{c}\|$ for a set $\mathcal{C}$, and 
\begin{align*}
    \partial \ml (\mathcal{P}^t):= (\{\nabla_{\bv_i}{\ml}\}_{i=1}^m,\{\nabla_{\bu_i}{\ml}\}_{i=1}^m,\{\nabla_{\bpi_i}{\ml}\}_{i=1}^m,\{\nabla_{\bu}{\ml}\})(\mathcal{P}^t).
\end{align*}
\end{lemma}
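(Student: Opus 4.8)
The plan is to leverage that, under Assumption \ref{assmption lipschitz}, every $f_i$ is continuously differentiable, so $\ml$ is smooth and $\partial\ml(\mathcal{P}^t)=\{\nabla\ml(\mathcal{P}^t)\}$; hence $\text{dist}(\boldsymbol{0},\partial\ml(\mathcal{P}^t))=\|\nabla\ml(\mathcal{P}^t)\|$ and it suffices to bound the four gradient blocks $\nabla_{\bv_i}\ml$, $\nabla_{\bu_i}\ml$, $\nabla_{\bpi_i}\ml$ and $\nabla_{\bu}\ml$ evaluated at $\mathcal{P}^t$. The driving observation is that the round-$t$ optimality conditions tie the gradients at the \emph{old} point $\mathcal{P}^t$ to the one-step increments: the proximal stationarity for $\bv_i^{t+1}$ from \eqref{eq: solve vi}, the $\xi_i^{t+1}$-approximate condition \eqref{eq: approximate solution about u} for $\bu_i^{t+1}$, the dual update \eqref{eq: solve pi}, and the aggregation \eqref{eq: solve u}, together relate $\nabla\ml(\mathcal{P}^t)$ to $\|\bv_i^{t+1}-\bv_i^t\|$, $\|\bu_i^{t+1}-\bu_i^t\|$, $\|\bu^{t+1}-\bu^t\|$ and to the inexactness residuals $\xi_i^{t+1}$.

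First I would dispatch the two easy blocks. For the shared model, because the server sets $\bu^t=\tfrac1m\sum_i\bz_i^t$ with $\bz_i^t=\bu_i^t+\tfrac1\rho\bpi_i^t$, we have $\sum_i(\bu_i^t+\tfrac1\rho\bpi_i^t)=m\bu^t$, which is exactly $\nabla_{\bu}\ml(\mathcal{P}^t)=-\sum_i(\bpi_i^t+\rho(\bu_i^t-\bu^t))=\boldsymbol{0}$, so the $\bu$-block vanishes identically. For $\nabla_{\bv_i}\ml(\mathcal{P}^t)=\alpha_i\nabla_{\bv_i}f_i(\bv_i^t,\bu_i^t)$, I would insert the $\bv$-stationarity $\alpha_i\nabla_{\bv_i}f_i(\bv_i^{t+1},\bu_i^t)=-\sigma_i(\bv_i^{t+1}-\bv_i^t)$ and apply $L_v$-Lipschitzness to obtain $\|\nabla_{\bv_i}\ml(\mathcal{P}^t)\|\le(\alpha_i L_v+\sigma_i)\|\bv_i^{t+1}-\bv_i^t\|$, where $\sigma_i\ge\alpha_i L_v$ collapses the prefactor into a multiple of $\sigma_i$.

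The coupled part is harder. For $\nabla_{\bu_i}\ml(\mathcal{P}^t)$ I would add and subtract the approximate residual $A:=\alpha_i\nabla_{\bu_i}f_i(\bv_i^{t+1},\bu_i^{t+1})+\bpi_i^t+\rho(\bu_i^{t+1}-\bu^t)$ (with $\|A\|^2\le\xi_i^{t+1}$) together with $\rho(\bu_i^{t+1}-\bu^t)$, then invoke $L_u$- and $L_{uv}$-Lipschitzness for the gradient gap, giving $\|\nabla_{\bu_i}\ml(\mathcal{P}^t)\|\le \alpha_i L_{uv}\|\bv_i^{t+1}-\bv_i^t\|+(\alpha_i L_u+\rho)\|\bu_i^{t+1}-\bu_i^t\|+\sqrt{\xi_i^{t+1}}$. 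The genuinely delicate block is $\nabla_{\bpi_i}\ml(\mathcal{P}^t)=\bu_i^t-\bu^t$, the constraint residual, since $\Sigma_p^{t+1}$ carries no dual differences. Here I would write $\bu_i^t-\bu^t=(\bu_i^t-\bu_i^{t+1})+\tfrac1\rho(\bpi_i^{t+1}-\bpi_i^t)$ via \eqref{eq: solve pi} and then apply the standard ADMM dual-by-primal device: eliminating $\bpi$ through the $\bu_i$-stationarity at rounds $t$ and $t-1$ yields $\bpi_i^{t+1}=A-\alpha_i\nabla_{\bu_i}f_i(\bv_i^{t+1},\bu_i^{t+1})$ and $\bpi_i^t=A'-\alpha_i\nabla_{\bu_i}f_i(\bv_i^t,\bu_i^t)$ with $\|A'\|^2\le\xi_i^t$, so $\|\bpi_i^{t+1}-\bpi_i^t\|$ is controlled by $\alpha_i L_u\|\bu_i^{t+1}-\bu_i^t\|+\alpha_i L_{uv}\|\bv_i^{t+1}-\bv_i^t\|+\sqrt{\xi_i^{t+1}}+\sqrt{\xi_i^t}$. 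This is the step that reaches one round back and thereby injects $\xi_i^t$, which is exactly what later produces the increment $\tilde{\Xi}^{t+1}$.

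Finally I would assemble the pieces: square each block, apply $\|\sum_k a_k\|^2\le n\sum_k\|a_k\|^2$, and sum over $i\in[m]$. The Lipschitz constants are absorbed using the stipulated relations $\alpha_i L_u\le\rho/3$, $\alpha_i L_{uv}\le\rho/3$ and $\rho\le\tfrac{15}{8}\sigma_i$, which turn every coefficient into an explicit multiple of $\sigma_i^2$ or $\rho^2$ and deliver the first two entries of $b$; the square-root terms are handled through $(\sqrt{\xi_i^{t+1}}+\sqrt{\xi_i^t})^2\le 2\xi_i^{t+1}+2\xi_i^t$ and then converted to the increment via the decay rule $\xi_i^{t+1}\le\mu_i\xi_i^t$ of \eqref{eq: approximate solution}, i.e. $\xi_i^t\le\tfrac1{1-\mu_i}(\xi_i^t-\xi_i^{t+1})$, which is where the factor $\tfrac1{1-\mu_i}$ and hence the third entry $\tfrac{48+4\rho}{\rho(1-\mu_i)}$ of $b$ arise. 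I expect the main obstacle to be twofold: first, the bookkeeping of the Young's-inequality constants so that the accumulated coefficients land \emph{exactly} on the stated $b=\max\{\cdots\}$ rather than on some looser constant; and second, the treatment of partial participation — for $i\notin\mathcal{S}^t$ all primal increments vanish by the invariance \eqref{eq: client unchange}, yet the residual $\bu_i^t-\bu^t$ can still move because $\bu^t$ is refreshed each round, so the dual-by-primal bound must be anchored at the round in which client $i$ last updated and this anchoring must still be expressible through $\Sigma_p^{t+1}$ and $\tilde{\Xi}^{t+1}$.
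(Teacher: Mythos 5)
Your plan coincides with the paper's own proof in every essential step: the same four-block decomposition of $\partial\ml(\mathcal{P}^t)$, with the $\bu$-block vanishing by the server aggregation \eqref{eq: solve u}, the $\bv_i$-block bounded via the proximal stationarity plus $L_v$-Lipschitzness, the $\bu_i$-block via the $\xi_i^{t+1}$-residual and Lipschitzness, and the $\bpi_i$-block via the dual-update identity combined with exactly the dual-difference ("dual-by-primal") bound you describe --- which the paper isolates as its Lemma \ref{lemma: pi estimate} --- followed by the same conversion of $\xi_i^t,\xi_i^{t+1}$ into the increment $\tilde{\Xi}^{t+1}$ through the decay rule $\xi_i^{t+1}\le\mu_i\xi_i^t$. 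The partial-participation obstacle you flag is genuine but is equally unaddressed in the paper, whose step \eqref{eq: 74 follows from pi update} applies the identity $\bu_i^{t+1}-\bu^t=\tfrac{1}{\rho}(\bpi_i^{t+1}-\bpi_i^t)$ to every client even though the dual update \eqref{eq: solve pi} is executed only for $i\in\mathcal{S}^t$, so your sketch is not at a disadvantage on this point.
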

Note that the relative error condition contains an error term $\tilde{\Xi}^{t+1}$ on the right-hand side of the inequality. The existing theoretical frameworks \cite{wang2019global, Zeng2021ADMM, Attouch2013Convergence} cannot account for this error term. 
In the following, we present an analysis framework of global convergence that incorporates this error term.

\begin{theorem}\label{theorem: convergence}
Suppose that Assumption \ref{assmption lipschitz} holds, let each client set the hyperparameters such that $\max\{3\alpha_i L_u, 3\alpha_i L_{uv} \}\leq \rho \leq \frac{15}{8}\sigma_i$ and $\sigma_i\geq \alpha_i L_v$, then the following results hold.
\begin{itemize}
    \item[a)] Sequence $\{\mathcal{P}^t\}$ is bounded.
    \item[b)] 
    The gradients of $f(\bV^{t+1},\bU^{t+1})$ and $f(\bV^{t+1},\bu^{t+1})$ with respect to each variable eventually vanish, i.e.,
    \begin{equation}
    \begin{split}
                 &\lim_{t\to \infty}\nabla_{\bV}f(\bV^{t+1},\bu^{t+1}) = \lim_{t\to \infty}\nabla_{\bV}f(\bV^{t+1},\bU^{t+1}) \to 0,\\
         &\lim_{t\to \infty}\nabla_{\bu}f(\bV^{t+1},\bu^{t+1})= \lim_{t\to \infty}\nabla_{\bU}f(\bV^{t+1},\bU^{t+1}) \to 0.
    \end{split}
    \end{equation}
    \item[c)] Sequences $\{\tilde{\ml}(\mathcal{P}^t)\}$, $\{\ml(\mathcal{P}^t)\}$, $\{f(\bV^t,\bU^t)\}$, and $\{f(\bV^t,\bu^t)\}$ converge to the same value, i.e.,
    \begin{align}
        \lim_{t\to \infty} \! \tilde{\ml}(\mathcal{P}^t) \!= \!\lim_{t\to \infty}\! \ml(\mathcal{P}^t) \!=\! \lim_{t\to \infty}\! f(\bV^t,\bU^t)   \!=\! \lim_{t\to \infty}\!f(\bV^t,\bu^t).
    \end{align}
\end{itemize}
\end{theorem}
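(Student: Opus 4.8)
The plan is to establish the three claims in order, bootstrapping from Lemmas~\ref{lemma: sufficient descent} and~\ref{lemma: relative error}; no KŁ machinery is needed here, since that enters only for the companion single-point convergence result. Throughout I treat nonnegativity (Assumption~\ref{assumption: either real analytic or semialgebraic}) and coercivity (Assumption~\ref{assumption: coercive}) as standing hypotheses, in addition to the stated Assumption~\ref{assmption lipschitz}.

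\emph{Part (a).} Lemma~\ref{lemma: sufficient descent} makes $\{\tilde{\ml}(\mathcal{P}^t)\}$ non-increasing, so $\tilde{\ml}(\mathcal{P}^t)\le\tilde{\ml}(\mathcal{P}^0)$; since the $\xi$-term in \eqref{eq: Lyapunov function} is a bounded positive quantity (it decays geometrically by \eqref{eq: approximate solution}), it suffices to bound $\ml$ itself. The crux is to lower-bound $\ml(\mathcal{P}^t)=\sum_i[\alpha_i f_i(\bv_i^t,\bu_i^t)+\langle\bpi_i^t,\bu_i^t-\bu^t\rangle+\frac{\rho}{2}\|\bu_i^t-\bu^t\|^2]$ in a form to which coercivity applies, and the obstacle is the sign-indefinite coupling term $\langle\bpi_i^t,\bu_i^t-\bu^t\rangle$. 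My approach is to eliminate the dual variable: combining Definition~\ref{definition: approximate solutions} with the dual update \eqref{eq: solve pi} gives $\|\alpha_i\nabla_{\bu_i}f_i(\bv_i^t,\bu_i^t)+\bpi_i^t\|^2\le\xi_i^t$, so $\bpi_i^t=-\alpha_i\nabla_{\bu_i}f_i(\bv_i^t,\bu_i^t)+e_i^t$ with $\|e_i^t\|\le\sqrt{\xi_i^t}$. Substituting this and invoking the descent inequality for $f_i$ in $\bu_i$ (licensed by Assumption~\ref{assmption lipschitz}) converts the coupling term into $\alpha_i f_i(\bv_i^t,\bu^t)-\alpha_i f_i(\bv_i^t,\bu_i^t)$ plus a multiple of $\|\bu_i^t-\bu^t\|^2$ and a Young-bounded error; after cancellation one obtains $\ml(\mathcal{P}^t)\ge\sum_i[\alpha_i f_i(\bv_i^t,\bu^t)+c\|\bu_i^t-\bu^t\|^2]-C$ with $c>0$ (thanks to $\rho\ge3\alpha_iL_u$) and $C$ bounded. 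Nonnegativity of $f_i$ then yields a lower bound on $\ml$; combining it with the upper bound $\tilde{\ml}(\mathcal{P}^0)$ shows that $f_i(\bv_i^t,\bu^t)$ and $\|\bu_i^t-\bu^t\|$ stay bounded. Coercivity of $f_i$ then bounds $(\bv_i^t,\bu^t)$, hence $\bu_i^t$, and finally $\bpi_i^t$ through the gradient identity, since gradients are bounded on bounded sets.

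\emph{Part (b).} With $\tilde{\ml}$ now bounded below and non-increasing, telescoping Lemma~\ref{lemma: sufficient descent} gives $\sum_{t\ge0}a\,\Sigma_p^{t+1}<\infty$, so $\Sigma_p^{t+1}\to0$; moreover $\Xi^t=\sum_i\xi_i^t\to0$ geometrically, whence $\tilde{\Xi}^{t+1}\to0$. Feeding these into Lemma~\ref{lemma: relative error} forces $\mathrm{dist}(\boldsymbol{0},\partial\ml(\mathcal{P}^t))\to0$, i.e.\ every block of $\partial\ml$ vanishes. Reading off the blocks: $\nabla_{\bv_i}\ml=\alpha_i\nabla_{\bv_i}f_i(\bv_i^t,\bu_i^t)\to0$; $\nabla_{\bpi_i}\ml=\bu_i^t-\bu^t\to0$ (the constraint); $\nabla_{\bu}\ml\to0$ together with the vanishing constraint gives $\sum_i\bpi_i^t\to0$; and $\nabla_{\bu_i}\ml\to0$ with the constraint gives $\alpha_i\nabla_{\bu_i}f_i(\bv_i^t,\bu_i^t)+\bpi_i^t\to0$, so summing over $i$ yields $\sum_i\alpha_i\nabla_{\bu_i}f_i(\bv_i^t,\bu_i^t)\to0$. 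Relabeling $t\mapsto t+1$ and transporting across the vanishing constraint gap by Assumption~\ref{assmption lipschitz} gives $\nabla_{\bv_i}f_i(\bv_i^{t+1},\bu^{t+1})\to0$ and $\sum_i\alpha_i\nabla_{\bu}f_i(\bv_i^{t+1},\bu^{t+1})\to0$, i.e.\ the conditions \eqref{eq: optimal condition of pfl problem1}. Note that the shared-block claim must be read in aggregated form: the individual $\bu_i$-gradients tend to $-\bpi_i^\star$ rather than to zero, which is precisely the client-drift effect that only the aggregate cancels.

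\emph{Part (c).} Being monotone and bounded below, $\{\tilde{\ml}(\mathcal{P}^t)\}$ converges to some $\tilde{\ml}^\star$; since the $\xi$-term vanishes, $\ml(\mathcal{P}^t)\to\tilde{\ml}^\star$ as well. In $\ml(\mathcal{P}^t)$ the coupling and penalty terms vanish because $\bpi_i^t$ is bounded (part (a)) and $\|\bu_i^t-\bu^t\|\to0$ (part (b)), so $f(\bV^t,\bU^t)\to\tilde{\ml}^\star$; finally, local Lipschitz continuity of $f_i$ on the bounded region together with $\|\bu_i^t-\bu^t\|\to0$ gives $|f(\bV^t,\bU^t)-f(\bV^t,\bu^t)|\to0$, closing the chain of equalities. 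I expect part~(a)---specifically the absorption of the negative dual term $-\tfrac{1}{2\rho}\|\bpi_i^t\|^2$ so that coercivity becomes applicable---to be the main obstacle, whereas parts (b) and (c) follow routinely once $\Sigma_p^{t+1}\to0$ and the vanishing of the constraint violation are in hand.
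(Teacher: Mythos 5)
Your proof is correct and, at the level of strategy, matches the paper's: the paper also (a) lower-bounds the Lyapunov function by eliminating $\bpi_i^t$ through the error term $e_i^t=\alpha_i\nabla_{\bu_i}f_i(\bv_i^t,\bu_i^t)+\bpi_i^t$, applying the descent lemma and invoking coercivity (its Lemma \ref{lemma: non increasing results}); (b) telescopes Lemma \ref{lemma: sufficient descent} to get $\Sigma_p^{t+1}\to 0$ and transports gradients across the vanishing gaps via Assumption \ref{assmption lipschitz}; and (c) squeezes the four function values together. Two local differences are worth recording. First, in part (b) you pass through Lemma \ref{lemma: relative error} to obtain $\mathrm{dist}(\boldsymbol{0},\partial\ml(\mathcal{P}^t))\to 0$ and then read off the blocks, whereas the paper re-derives the gradient bounds directly from the subproblem optimality conditions ($g_{\bv}(\bv_i^{t+1},\bu_i^t)=0$ exactly, $\|g_{\bu}(\bv_i^{t+1},\bu_i^{t+1})\|\le\sqrt{\xi_i^{t+1}}$) plus Lipschitz telescoping; your route is more modular, reaches the same conclusions, and your observation that the $\bU$-block statement only holds in aggregated form (individual $\bu_i$-gradients converge to $-\bpi_i^{\infty}$, and only $\sum_i\bpi_i^t\to 0$) is exactly how the paper's own computation must be read. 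Second, your argument for boundedness of the duals — $\bpi_i^t=e_i^t-\alpha_i\nabla_{\bu_i}f_i(\bv_i^t,\bu_i^t)$ with gradients bounded on the bounded primal set — is tighter than what the paper writes, which infers boundedness of $\{\bpi_i^t\}$ from $\|\bpi_i^{t+1}-\bpi_i^t\|\to 0$; vanishing increments alone do not imply boundedness, so your version is the one that should be kept. Finally, you are right that the theorem as stated (Assumption \ref{assmption lipschitz} alone) cannot support part (a): the paper's own proof uses nonnegativity and coercivity, precisely the standing hypotheses you declare.
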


Theorem \ref{theorem: convergence} establishes the convergence of the objective function. Next, we establish the convergence properties of the sequence $\{\mathcal{P}^t\}$.
\begin{theorem}\label{theorem: sequences convergence}
    Suppose that Assumptions \ref{assmption lipschitz} and \ref{assumption: coercive} hold, let each client $i$ set the hyperparameters such that $\max\{3\alpha_i L_u, 3\alpha_i L_{uv}, 2L_u \}\leq \rho \leq \frac{15}{8}\sigma_i$ and $\sigma_i\geq \alpha_i L_v$, then the following results hold:
    \begin{itemize}
        \item[a)] The accumulating point $\mathcal{P}^{\infty}$ of sequences $\{\mathcal{P}^t\}$ is a stationary point of \eqref{eq: problem pfl2}, and $(\bV^{\infty},\bu^{\infty})$ is a stationary point of \eqref{eq: pfl problem1}.
        \item[b)] Under Assumption \ref{assumption: either real analytic or semialgebraic}, the sequence $\{\mathcal{P}^t\}$ converges to $\mathcal{P}^{\infty}$.
    \end{itemize}
\end{theorem}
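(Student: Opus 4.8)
The plan is to follow the Kurdyka–Łojasiewicz (KŁ) convergence framework of \cite{Attouch2013Convergence, wang2019global}, but adapted so that it tolerates the inexactness term $\tilde{\Xi}^{t+1}$ appearing in Lemma~\ref{lemma: relative error}. I first establish part~(a). By Lemma~\ref{lemma: sufficient descent} the Lyapunov sequence $\{\tilde{\ml}(\mathcal{P}^t)\}$ is non-increasing; since each $f_i\ge 0$ (Assumption~\ref{assumption: either real analytic or semialgebraic}) and each $\xi_i^t\ge 0$, while coercivity (Assumption~\ref{assumption: coercive}) guarantees bounded sublevel sets and hence, consistent with Theorem~\ref{theorem: convergence}(a), confines the iterates to a compact set, the sequence $\{\tilde{\ml}(\mathcal{P}^t)\}$ is bounded below and converges. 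Telescoping the descent inequality then gives $\sum_{t\ge 0}\Sigma_p^{t+1}<\infty$, so $\Sigma_p^{t+1}\to 0$, i.e. all successive primal differences $\|\bv_i^{t+1}-\bv_i^t\|$, $\|\bu_i^{t+1}-\bu_i^t\|$, and $\|\bu^{t+1}-\bu^t\|$ vanish. Because $\xi_i^{t+1}\le\mu_i\xi_i^t$ with $\mu_i\in(0,1)$, the accuracies decay geometrically, so $\xi_i^t\to 0$ and $\tilde{\Xi}^{t+1}\to 0$. Feeding both facts into Lemma~\ref{lemma: relative error} yields $\text{dist}(\boldsymbol{0},\partial\ml(\mathcal{P}^t))\to 0$.

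Next I would extract, by boundedness, a subsequence $\mathcal{P}^{t_k}\to\mathcal{P}^{\infty}$. Since Assumption~\ref{assmption lipschitz} makes $\ml$ continuously differentiable in $(\bV,\bU,\bPi,\bu)$, the gradient map is continuous and $\partial\ml$ has closed graph, so passing to the limit in $\text{dist}(\boldsymbol{0},\partial\ml(\mathcal{P}^{t_k}))\to 0$ gives $\boldsymbol{0}\in\partial\ml(\mathcal{P}^{\infty})$. Written out blockwise, this is exactly the system \eqref{eq: optimal condition of pfl problem2}: the $\bpi_i$-block yields the feasibility $\bu_i^{\infty}=\bu^{\infty}$ and the $\bu$-block yields $\sum_i\bpi_i^{\infty}=0$. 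Hence $\mathcal{P}^{\infty}$ is a stationary point of \eqref{eq: problem pfl2}; substituting the feasibility relation into the $\bu_i$- and $\bv_i$-blocks and summing recovers \eqref{eq: optimal condition of pfl problem1}, so $(\bV^{\infty},\bu^{\infty})$ is stationary for \eqref{eq: pfl problem1}. This proves (a).

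For part~(b), the goal is to upgrade subsequential convergence to convergence of the whole sequence by proving the finite-length property $\sum_{t\ge 0}\|\mathcal{P}^{t+1}-\mathcal{P}^t\|<\infty$. The standard recipe combines three ingredients: sufficient descent (Lemma~\ref{lemma: sufficient descent}), relative error (Lemma~\ref{lemma: relative error}), and the KŁ inequality for $\ml$ at $\mathcal{P}^{\infty}$ (Proposition~\ref{proposition: kl property of L}, valid under Assumption~\ref{assumption: either real analytic or semialgebraic}). With a desingularizing function $\varphi$, the KŁ inequality gives $\varphi'(\delta^t)\cdot\text{dist}(\boldsymbol{0},\partial\ml(\mathcal{P}^t))\ge 1$, where $\delta^t:=\ml(\mathcal{P}^t)-\ml(\mathcal{P}^{\infty})$. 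Concavity of $\varphi$ applied to the descent of $\tilde{\ml}$, together with the relative-error bound, would then produce a telescoping estimate of the form $\|\mathcal{P}^{t+1}-\mathcal{P}^t\|\le C[\varphi(\delta^{t})-\varphi(\delta^{t+1})]+r^{t+1}$, where $r^{t+1}$ collects the remainders involving $\Sigma_p^{t+1}$ and $\tilde{\Xi}^{t+1}$; summing the telescoped $\varphi$-differences and invoking the geometric summability $\sum_t\xi_i^t<\infty$ (so that all $\tilde{\Xi}$-contributions are summable) closes the finite-length estimate and forces $\{\mathcal{P}^t\}$ to be Cauchy, hence convergent to the $\mathcal{P}^{\infty}$ of part~(a).

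The main obstacle is precisely the mismatch the paper emphasizes: descent holds for the Lyapunov function $\tilde{\ml}$, whereas the relative-error bound and the KŁ inequality are stated for $\ml$, and the extra error term $\tilde{\Xi}^{t+1}$ is not covered by \cite{Attouch2013Convergence, wang2019global}. Two points need care. First, since $\Sigma_p^{t+1}$ omits the dual differences $\|\bpi_i^{t+1}-\bpi_i^t\|$, I must bound those by primal differences plus $\sqrt{\xi_i^{t+1}}$ using the $\xi$-approximate optimality condition \eqref{eq: approximate solution about u} together with the Lipschitz bounds of Assumption~\ref{assmption lipschitz}; this is where the extra requirement $2L_u\le\rho$ in Theorem~\ref{theorem: sequences convergence} is consumed, ensuring the dual motion is dominated by the descent so that the full sequence (not just its primal part) has finite length. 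Second, the factor $\varphi'$ is evaluated at $\delta^t=\ml(\mathcal{P}^t)-\ml(\mathcal{P}^{\infty})$ while the descent is in $\tilde{\ml}$; I would reconcile them via Theorem~\ref{theorem: convergence}(c), which guarantees $\ml(\mathcal{P}^t)$ and $\tilde{\ml}(\mathcal{P}^t)$ share the same limit, together with the geometric decay of $\tilde{\ml}-\ml=\sum_i\frac{29}{\rho(1-\mu_i)}\xi_i^t$, so that this discrepancy contributes only a summable correction to the length estimate. Once these two gaps are absorbed, the classical Cauchy argument applies, and the unique limit coincides with the stationary point identified in part~(a).
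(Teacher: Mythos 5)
Your proposal is correct and follows essentially the same route as the paper's proof: part (a) by telescoping the sufficient-descent inequality, extracting a convergent subsequence, and identifying the blockwise limits of the vanishing (sub)gradient with \eqref{eq: optimal condition of pfl problem2} and \eqref{eq: optimal condition of pfl problem1}, and part (b) by combining the \kl inequality with Lemmas \ref{lemma: sufficient descent} and \ref{lemma: relative error}, absorbing the inexactness through the geometric summability of the $\xi_i^t$, and controlling the dual increments by primal differences plus $\sqrt{\xi_i^t}$ terms exactly as in the paper's Lemma \ref{lemma: pi estimate}, which yields finite length and hence a Cauchy sequence. Two minor corrections to your sketch: the extra condition $\rho\geq 2L_u$ is actually consumed in establishing the lower bound $\tilde{\ml}(\mathcal{P}^t)-\frac{28}{\rho}\Xi^{t}\geq f^*$ (Lemma \ref{lemma: non increasing results}), not in dominating the dual motion (which needs only $\rho\geq\max\{3\alpha_i L_u,3\alpha_i L_{uv}\}$), and the $\ml$-versus-$\tilde{\ml}$ mismatch you flag is vacuous, since the two functions differ only by terms independent of $(\bV,\bU,\bPi,\bu)$, so $\partial\tilde{\ml}=\partial\ml$ and the paper applies the \kl inequality to $\tilde{\ml}$ directly.
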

Note that the proof of Theorem \ref{theorem: sequences convergence} does not depend on the convexity of the loss function $f_i$. As a result, the sequence will reach a stationary point for \eqref{eq: problem pfl2} and \eqref{eq: pfl problem1}. Moreover, if we assume that $f_i$ is convex, the sequence will converge to the optimal solutions.

\subsection{Convergence Rate}
We establish the convergence rates when $\tilde{\ml}$ has \kl properties with a desingularizing function $\phi(x)=\frac{\sqrt{c}}{1-\theta}x^{1-\theta}$ (see Definition \ref{definition: Desingularizing Function} and \ref{definition: kl property} in Appendix \ref{subappendix: Kurdyka-Łojasiewicz Property}), where $c>0$ and $\theta\in[0,1)$. We elaborate in Proposition \ref{proposition: kl property of L} that most functions in FL are \kl functions.
\begin{theorem}\label{theorem: convergence rate based on kl property}
        Let $\{\mathcal{P}^t\}$ be the sequence generated by Algorithm \ref{alg: fedapm}, and $\mathcal{P}^{\infty}$ be its limit, then under Assumptions \ref{assumption: either real analytic or semialgebraic}, \ref{assmption lipschitz}, and \ref{assumption: coercive}, let each client set the hyperparameters $\max\{3\alpha_i L_u, 3\alpha_i L_{uv}, 2L_u \}\leq \rho \leq \frac{15}{8}\sigma_i$ and $\sigma_i\geq \alpha_i L_v$, the following results hold:
\begin{itemize}
    \item[a)] If $\theta=0$, then there exists a $t_1$ such that the sequence $\{\tilde{\ml}(\mathcal{P}^t)\}$, $t\geq t_1$ converges in a finite number of iterations.
    \item[b)] If $\theta\in(0,1/2]$, then there exists a $t_2$ such that for any $t\geq t_2$,
\begin{align*}
    \tilde{\ml}(\mathcal{P}^{t+1}) \!- \!\tilde{\ml}(\mathcal{P}^{\infty}) \!\leq \! \Bigl(\frac{bc}{a+bc}\Bigr)^{t-t_2+1}\Bigl(\tilde{\ml}(\mathcal{P}^{t_2})-f^*\Bigr) \! +  \!(a+bc) \Xi^{t_2-1} .
\end{align*}
   \item[c)] If $\theta \in(1/2,1)$, then there exists a $t_3$ such that for any $t
    \geq t_3$, 
\begin{align*}
    \tilde{\ml}(\mathcal{P}^{t+1})-\tilde{\ml}(\mathcal{P}^{\infty})\leq\Bigl(\frac{bc}{(2\theta-1)\kappa a(t-t_3)}\Bigr)^{\frac{1}{2\theta-1}},
\end{align*}
where $\kappa>0$ is a constant.
\end{itemize}
\end{theorem}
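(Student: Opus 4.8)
The plan is to run the Łojasiewicz-type argument of \cite{Attouch2013Convergence} on the Lyapunov function $\tilde{\ml}$, but to carry the inexactness term $\tilde{\Xi}^{t+1}$ through every estimate. Write $r_t := \tilde{\ml}(\mathcal{P}^t) - \tilde{\ml}(\mathcal{P}^{\infty})$. By Lemma~\ref{lemma: sufficient descent} the sequence $\{\tilde{\ml}(\mathcal{P}^t)\}$ is nonincreasing, and by Theorem~\ref{theorem: convergence} it converges to $\tilde{\ml}(\mathcal{P}^{\infty})=f^*$, so $r_t\ge 0$ and $r_t\downarrow 0$. By Theorem~\ref{theorem: sequences convergence} the iterates satisfy $\mathcal{P}^t\to\mathcal{P}^{\infty}$, hence for $t$ large enough $\mathcal{P}^t$ lies in the \kl neighborhood of $\mathcal{P}^{\infty}$ supplied by Proposition~\ref{proposition: kl property of L}. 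There, with $\phi(x)=\frac{\sqrt{c}}{1-\theta}x^{1-\theta}$ and $\phi'(x)=\sqrt{c}\,x^{-\theta}$, the \kl inequality reads $\sqrt{c}\,r_t^{-\theta}\,\mathrm{dist}(0,\partial\ml(\mathcal{P}^t))\ge 1$, i.e.\ $r_t^{2\theta}\le c\,\mathrm{dist}(0,\partial\ml(\mathcal{P}^t))^2$. Substituting Lemma~\ref{lemma: relative error} and then Lemma~\ref{lemma: sufficient descent} gives the master inequality $r_t^{2\theta}\le \frac{bc}{a}(r_t-r_{t+1})+bc\,\tilde{\Xi}^{t+1}$, which is the single recursion driving all three regimes.

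For $\theta=0$ I would argue by contradiction: the \kl inequality forces $\sqrt{c}\,\mathrm{dist}(0,\partial\ml(\mathcal{P}^t))\ge 1$ whenever $r_t>0$, yet the right-hand side of the master inequality tends to $0$ (since $r_t-r_{t+1}\to 0$ and $\tilde{\Xi}^{t+1}\to 0$), so $\mathrm{dist}(0,\partial\ml(\mathcal{P}^t))\to 0$; hence $r_t=0$ for all $t\ge t_1$, giving finite termination. For $\theta\in(0,1/2]$ I would use $2\theta\le 1$ together with $r_t\to 0$: once $r_t\le 1$ we have $r_t\le r_t^{2\theta}$, so the master inequality becomes $r_t\le\frac{bc}{a}(r_t-r_{t+1})+bc\,\tilde{\Xi}^{t+1}$. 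Rearranging and using $r_{t+1}\le r_t$ yields the linear recursion
\[
r_{t+1}\le \frac{bc}{a+bc}\,r_t+\frac{abc}{a+bc}\,\tilde{\Xi}^{t+1}.
\]
Unrolling from $t_2$, the homogeneous part contracts at rate $\frac{bc}{a+bc}$, while the forcing terms, fed by the geometrically decaying accuracy $\xi_i^{t+1}\le\mu_i\xi_i^t$, telescope into an additive constant controlled by $\Xi^{t_2-1}$, reproducing the bound in part~(b).

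For $\theta\in(1/2,1)$ I would discard the (vanishing) error to leading order and work with $r_t-r_{t+1}\ge\frac{a}{bc}\,r_t^{2\theta}$. Setting $g(s)=s^{1-2\theta}$ (with $1-2\theta<0$, so $g$ is decreasing and convex on $(0,\infty)$) and applying the mean value theorem between $r_{t+1}$ and $r_t$, together with the standard two-regime bound on consecutive iterates, gives $g(r_{t+1})-g(r_t)\ge \frac{(2\theta-1)\kappa a}{bc}$ for a uniform $\kappa>0$. Summing from $t_3$ to $t$ produces $r_t^{1-2\theta}\ge \frac{(2\theta-1)\kappa a}{bc}(t-t_3)$, which inverts to the stated sublinear rate $r_{t+1}\le\bigl(\frac{bc}{(2\theta-1)\kappa a(t-t_3)}\bigr)^{1/(2\theta-1)}$. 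The delicate point here is the constant $\kappa$, which arises precisely from bounding the ratio $r_{t+1}/r_t$ away from $0$ on the tail.

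The main obstacle throughout is the error term $\tilde{\Xi}^{t+1}$. The frameworks of \cite{Attouch2013Convergence,wang2019global} assume every subproblem is solved exactly, so their relative-error estimate carries no additive slack and their recursions are purely homogeneous; here the inexact $\bu_i$-updates (Definition~\ref{definition: approximate solutions}) inject $\tilde{\Xi}^{t+1}$ into Lemma~\ref{lemma: relative error}, and hence into the master inequality. The resolution is exactly the design of $\tilde{\ml}$: folding $\sum_{i=1}^m\frac{29}{\rho(1-\mu_i)}\xi_i^t$ into $\ml$ restores sufficient descent, and the imposed geometric decay $\xi_i^{t+1}\le\mu_i\xi_i^t$ renders $\{\Xi^t\}$ summable, so the accumulated inexactness stays bounded by $\Xi^{t_2-1}$ and leaves the exponent intact in all three regimes. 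Verifying that these forcing terms can be absorbed without spoiling the contraction of case~(b) or the difference inequality of case~(c) is the crux of the argument.
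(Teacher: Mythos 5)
Your master inequality, your contradiction argument for case (a), and your contraction-plus-unrolling argument for case (b) coincide with the paper's own proof. (Two cosmetic remarks: the paper applies $\phi'$ at $\tilde{\ml}(\mathcal{P}^{t+1})-\tilde{\ml}(\mathcal{P}^{\infty})$ rather than at the $t$-th gap, which makes the linear recursion in (b) drop out immediately; your indexing also works after the relaxation $\frac{bc-a}{bc}\le\frac{bc}{a+bc}$, and your slightly-off forcing coefficient is harmless since the geometric sum of forcing terms is still at most $(a+bc)\Xi^{t_2-1}$. Also, $\tilde{\ml}(\mathcal{P}^{\infty})$ need not equal $f^*$; the paper only uses $\tilde{\ml}(\mathcal{P}^{\infty})\ge f^*$, via Lemma \ref{lemma: non increasing results}, to replace the limit by $f^*$ in the final bound of (b).)

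The genuine gap is case (c). You propose to ``discard the (vanishing) error to leading order and work with $r_t-r_{t+1}\ge\frac{a}{bc}r_t^{2\theta}$,'' where $r_t:=\tilde{\ml}(\mathcal{P}^t)-\tilde{\ml}(\mathcal{P}^{\infty})$; that inequality is not available. What \eqref{eq: kl final} gives is $r_t-r_{t+1}+a\tilde{\Xi}^{t+1}\ge\frac{a}{bc}\,r_{t+1}^{2\theta}$, and for $\theta\in(1/2,1)$ both $r_{t+1}^{2\theta}$ and $\tilde{\Xi}^{t+1}$ tend to zero, so ``leading order'' is meaningless until their decay rates are compared: if the error decays more slowly than $r_{t+1}^{2\theta}$, the per-step decrease $r_t-r_{t+1}$ can be entirely swamped by $a\tilde{\Xi}^{t+1}$, and the uniform decrement of $g(r_t)=r_t^{1-2\theta}$ that you intend to sum simply fails. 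This is exactly where the paper must add something beyond the standard two-regime argument you invoke. It fixes a constant $R$ with $1>\frac{1}{R}>\min_i\mu_i$ --- the regime threshold is coupled to the inexactness decay rates $\mu_i$, not to the KŁ data --- and splits according to whether $r_{t+1}^{2\theta}\ge r_t^{2\theta}/R$. In that regime it rules out, by an order-of-infinitesimal comparison, that $\Xi^t$ is a lower-order infinitesimal of $r_{t+1}^{2\theta}$: since $\Xi^t\le\sum_i\mu_i^t\xi_i^0$, that would force $r^{2\theta}$ to contract faster than rate $\min_i\mu_i<\frac{1}{R}$ per step, contradicting the regime's defining inequality. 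Hence $\Xi^t/r_{t+1}^{2\theta}\le Q$ for some constant $Q$, giving the uniform decrease $\phi(r_t)-\phi(r_{t+1})\ge\frac{1}{R}-\frac{bcQ}{R}$; the complementary regime yields a uniform decrease $\ge\frac{\overline{\kappa}bc}{(2\theta-1)a}$ by direct computation, and $\kappa$ is the minimum of the two. So your $\kappa$, which you attribute to ``bounding the ratio $r_{t+1}/r_t$ away from $0$ on the tail,'' is the exact-subproblem mechanism; the missing idea is the coupling of $R$ to the $\mu_i$'s and the resulting control of $\Xi^t/r_{t+1}^{2\theta}$, without which the difference inequality in case (c) cannot be established in the inexact setting.
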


Theorem \ref{theorem: convergence rate based on kl property} demonstrates that when $\theta=0$, the convergence rate becomes constant. For $\theta \in (0,1/2]$, the rate is linear, while for $\theta \in (1/2,1)$, it is sublinear.
Note that although Theorem \ref{theorem: convergence rate based on kl property} relies on Assumptions \ref{assumption: either real analytic or semialgebraic}, \ref{assmption lipschitz} and \ref{assumption: coercive}, these conditions are satisfied by most loss functions and are less restrictive compared to those in \cite{Krishna2022Partial}.

\section{Numerical Experiments}\label{sec: experiments}
In this section, we compare the performance of \fedapm with SOTA methods and examine the effects of various parameters on \fedapm. 
Our codes are open-sourced\footnote{\textcolor{black}{\href{https://github.com/whu-totemdb/FedAPM}{https://github.com/whu-totemdb/FedAPM}}}.
We present key design details and results, with complete content provided in Appendix \ref{appendix: Experimental Details}.
\subsection{Setups}

\begin{table}[t]
\centering
\renewcommand{\arraystretch}{1.1}
\setlength{\tabcolsep}{5.5pt} 
\caption{An overview of the datasets (Acc and Gyro are abbreviations of Accelerometer and Gyroscope, respectively).}
\vspace{-1em}
\label{table: datasets and models}
\begin{tabular}{ccccc}
\toprule
\textbf{Ref.} & \textbf{Datasets} & \textbf{Modalities} & \textbf{\# samples} & \textbf{\# clients} \\ \midrule
\cite{krizhevsky2009learning}             & \cifar            & Image               & 60,000              & 20                 \\
\cite{Alam2018CrisisMMD}             & \crisis           & Image, Text         & 18,100              & 20                 \\
\cite{Niloy2021KU}             & \kuhar            & Acc, Gyro         & 10,300              & 63                 \\
\cite{Houwei2014CREMA}             & \crema            & Audio, VIdeo        & 4,798               & 72                 \\ \bottomrule
\end{tabular}\
\vspace{-0.5em}
\end{table}

\myparagraph{Datasets}
We evaluate \fedapm as well as other comparable algorithms on four datasets from common heterogeneous and multimodal FL benchmarks \cite{Li2022Experimental, Feng2023FedMultimodal}, which are \cifar \cite{krizhevsky2009learning}, \crisis \cite{Alam2018CrisisMMD}, \kuhar \cite{Niloy2021KU}, and \crema \cite{Houwei2014CREMA}. Among these, \cifar is unimodal, while the others are multimodal.
For each client, the training and testing data are pre-specified: $80\%$ of data is randomly extracted to construct a training set, keeping the remaining $20\%$ as the testing set.
Table \ref{table: datasets and models} presents an overview of the datasets.

\myparagraph{Partitions}
To incorporate data heterogeneity, we implement various data partitioning strategies as outlined in \cite{Li2022Experimental,Feng2023FedMultimodal}. For \cifar and \crisis, we allocate a portion of samples from each class to every client based on the Dirichlet distribution, which is commonly used for partitioning non-i.i.d. data \cite{Li2022Experimental}.
For \crema and \kuhar, we partition data by the speaker and participant IDs, respectively.

\myparagraph{Models}
We provide different classification models for various datasets. Specifically, we employ a CNN-only model architecture for the image modality, an RNN-only model for the video and text modalities, and a Conv-RNN model for other modalities. 

\myparagraph{Metrics and Baselines}
We use the \textit{accuracy, F1 score, AUC performance, training loss, gap between local and shared models, and communication rounds} as metrics to measure the performance of each competing method.
To demonstrate the empirical performance of our proposed method, we compare \fedapm with two state-of-the-art personalization methods, \fedalt and \fedsim \cite{Krishna2022Partial}, as well as two non-personalized methods, \fedavg \cite{mcmahan2017communication} and \fedprox \cite{Li2020fedprox}.

\begin{table}[t]
\caption{Performance comparison of various methods across multiple datasets. We report the mean and standard deviation for top-1 Accuracy, F1 Score, and AUC over 20 trials. 
Bold values highlight the best performance for each metric.}
\label{table: highest}
 \resizebox{\columnwidth}{!}{
\begin{tabular}{ccccc}
\toprule
\textbf{Datasets}                 & \textbf{Methods} & \textbf{Accuracy}                      & \textbf{F1 Score}             & \textbf{{AUC}}                   \\ \midrule
\multirow{5}{*}{\textbf{\cifar}}  & \fedapm    & \textbf{.738}$\boldsymbol{_{\pm.005}}$ & \textbf{.725}$\boldsymbol{_{\pm.005}}$ & \textbf{.873}$\boldsymbol{_{\pm.010}}$ \\
                         & \fedavg    & $.435_{\pm.012}$          & $.345_{\pm.010}$          & $.602_{\pm.011}$          \\
                         & \fedalt    & $.672_{\pm.006}$          & $.649_{\pm.005}$          & $.839_{\pm.011}$          \\
                         & \fedsim    & $.592_{\pm.005}$          & $.565_{\pm.004}$          & $.808_{\pm.010}$          \\
                         & \fedprox   & $.406_{\pm.010}$          & $.307_{\pm.005}$          & $.571_{\pm.011}$          \\ \midrule
\multirow{5}{*}{\textbf{\crisis}} & \fedapm    & $.357_{\pm.028}$          & \textbf{.294}$\boldsymbol{_{\pm.008}}$ & \textbf{.519}$\boldsymbol{_{\pm.015}}$ \\
                         & \fedavg    & $.374_{\pm.023}$          & $.288_{\pm.012}$          & $.510_{\pm.027}$          \\
                         & \fedalt    & $.364_{\pm.025}$          & $.289_{\pm.010}$          & $.513_{\pm.019}$          \\
                         & \fedsim    & $.364_{\pm.025}$          & $.291_{\pm.009}$          & $.514_{\pm.019}$          \\
                         & \fedprox   & \textbf{.380}$\boldsymbol{_{\pm.017}}$ & $.291_{\pm.009}$          & $.507_{\pm.024}$          \\ \midrule
\multirow{5}{*}{\textbf{\kuhar}}  & \fedapm    & \textbf{.437}$\boldsymbol{_{\pm.051}}$ & \textbf{.396}$\boldsymbol{_{\pm.061}}$ & \textbf{.595}$\boldsymbol{_{\pm.002}}$ \\
                         & \fedavg    & $.142_{\pm.004}$          & $.050_{\pm.004}$          & $.464_{\pm.023}$          \\
                         & \fedalt    & $.196_{\pm.017}$          & $.105_{\pm.022}$          & $.562_{\pm.006}$          \\
                         & \fedsim    & $.196_{\pm.012}$          & $.105_{\pm.017}$          & $.566_{\pm.007}$          \\
                         & \fedprox   & $.143_{\pm.005}$          & $.049_{\pm.005}$          & $.453_{\pm.023}$          \\ \midrule
\multirow{5}{*}{\textbf{\crema}}  & \fedapm    & {\textbf{.651}$\boldsymbol{_{\pm.007}}$} & {\textbf{.590}$\boldsymbol{_{\pm.009}}$} & {\textbf{.695}$\boldsymbol{_{\pm.018}}$} \\
                         & \fedavg    & $.434_{\pm.025}$          & $.295_{\pm.041}$          & $.604_{\pm.044}$          \\
                         & \fedalt    & $.444_{\pm.031}$          & $.304_{\pm.042}$          & $.642_{\pm.020}$          \\
                         & \fedsim    & $.444_{\pm.031}$          & $.304_{\pm.042}$          & $.643_{\pm.020}$          \\
                         & \fedprox   & $.438_{\pm.032}$          & $.297_{\pm.045}$          & $.600_{\pm.046}$          \\ \bottomrule
\end{tabular}}
\end{table}

\subsection{Comparison of Multiple Methods}
For each competing algorithm, different hyperparameters need to be tuned. 
We provide several candidates for each hyperparameter and perform a grid search on all possible combinations based on the accuracy performance on the validation dataset.
Further details on parameter settings can be found in Appendix \ref{appendix: Parameter Settings for the Algorithms}.
\subsubsection{Overall Comparison}
Table \ref{table: highest} reports the top-1 accuracy, F1 score, and AUC for various methods.
It can be observed that \fedapm outperforms all other methods across the datasets on most of the metrics. 
Specifically, compared to the best-performing method among the other four, \fedapm achieves an average improvement of 12.3\%, 16.4\%, and 18.0\% in accuracy, F1 score, and AUC, respectively.
This characteristic indicates that \fedapm is advantageous for FL applications involving heterogeneous or multimodal client data.

\subsubsection{Convergence and Communication Efficiency}
Due to space limitations, we show the results on two datasets here with the full results left in Figure \ref{fig: cifar10_crisis_mmd_ku_har_crema_d_training_loss} in Appendix \ref{sec: complete results}.
Figure \ref{fig: cifar10_crisis_mmd_training_loss} shows the variation in training loss across different methods as the communication rounds increase.
First, it can be observed that the methods based on partial model personalization (\fedapm, \fedalt, \fedsim) tend to converge more effectively compared to non-personalized methods (\fedavg and \fedprox). 
Second, \fedapm demonstrates lower training loss compared to other methods and achieves faster convergence, highlighting its superior convergence performance. 
Finally, for a given loss, \fedapm reaches this target with fewer communication rounds, emphasizing its communication efficiency over other methods.

\begin{figure}[t]
  \centering
  \includegraphics[width=\linewidth]{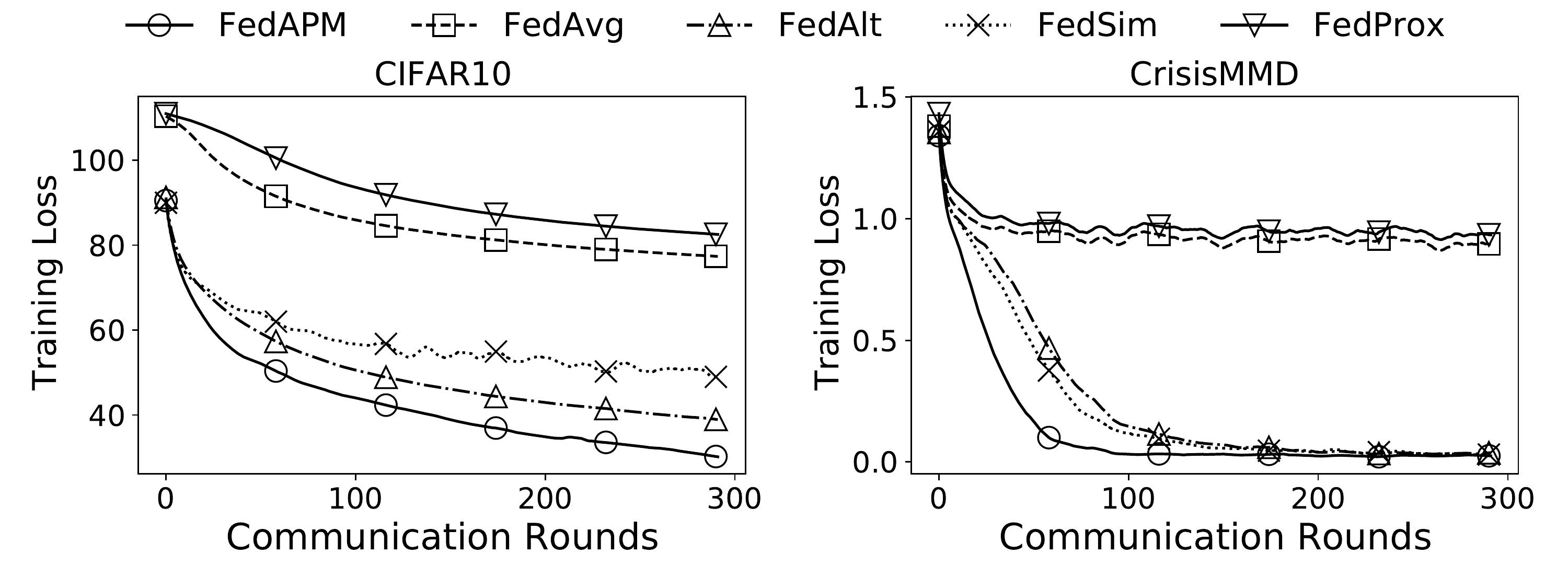}
  \caption{Comparison of training loss across various methods.}
  \label{fig: cifar10_crisis_mmd_training_loss}
\end{figure}

\subsubsection{Model Deviation}
Figure \ref{fig: grouped_norm_bar_chart} illustrates the distance comparison between local and shared models for different methods \footnote{For non-personalized methods, the shared model components we selected are the same as those in the personalized methods.}. 
First, partial model personalization can sometimes exacerbate client drift. In the \kuhar dataset, both \fedalt and \fedsim cause the local model to diverge further from the shared model compared to \fedavg and \fedprox.
Second, we observe that the local model trained with \fedapm is closer to the shared model than those trained with other methods, indicating that \fedapm can effectively address client drift.

\subsection{Comparison of Multiple Hyperparameters}
\subsubsection{Effect of Penalty Parameter}
We show the results on two datasets here with the full results shown in Figure \ref{fig: cifar10_crisis_mmd_ku_har_crema_d_FedAPM_loss_varying_rho} in Appendix \ref{sec: complete results}.
Figure \ref{fig: cifar10_crisis_mmd_FedAPM_loss_varying_rho} illustrates how the training loss of \fedapm changes with communication rounds under different values of $\rho$, where $\rho$ takes the values of $\{0.001, 0.01, 0.02, 0.05, 0.1\}$. 
The experimental results on \cifar and \crisis show that \fedapm effectively addresses ill-conditioning, with smaller $\rho$ leading to faster convergence without the need for an excessively large $\rho$. 
This supports the analysis presented in Section \ref{sec: discussion}, where we show that the dual variable effectively corrects the bias in constraint violations, allowing ADMM to avoid relying on large $\rho$ to enforce the constraints.


\begin{figure}[t]
  \centering
  \includegraphics[width=\linewidth]{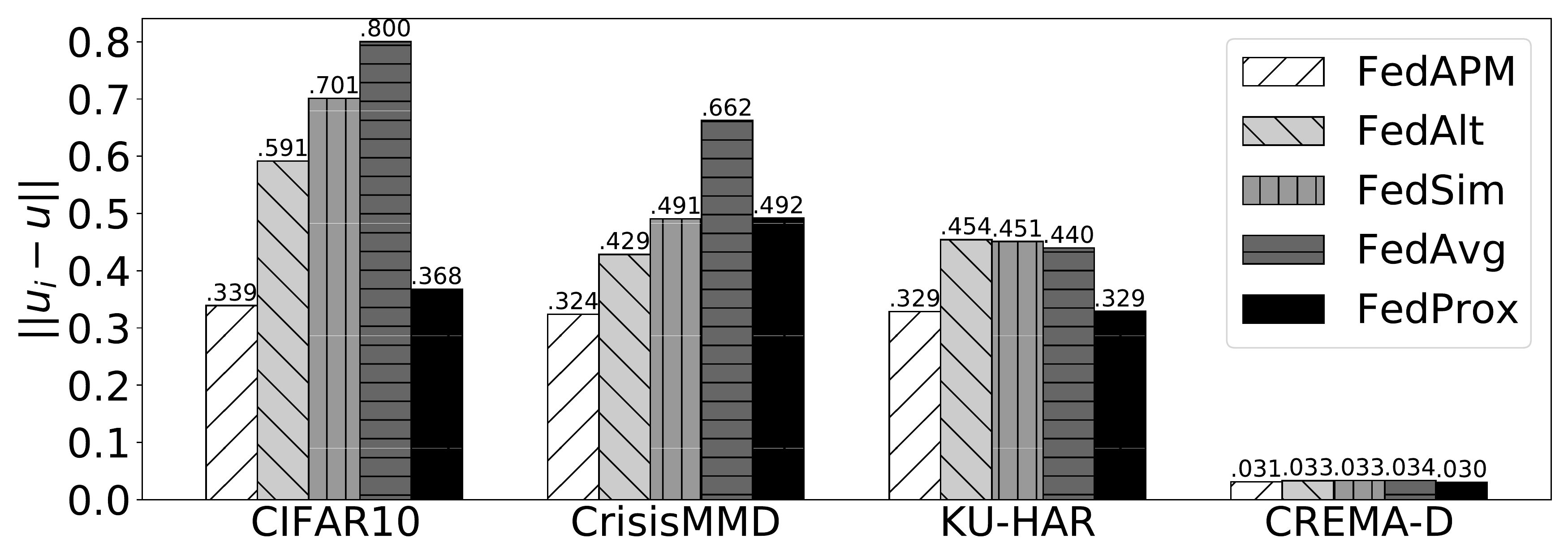}
  \caption{The distance between the local and the shared models across different FL methods. A larger distance indicates that the local model deviates further from the shared model.}
  \label{fig: grouped_norm_bar_chart}
\end{figure}
\subsubsection{Effect of Client Selection}
We show the results on two datasets here with the full results presented in Figure \ref{fig: cifar10_crisis_mmd_ku_har_crema_d_FedAPM_loss_varying_frac} in Appendix \ref{sec: complete results}.
Figure \ref{fig: cifar10_crisis_mmd_FedAPM_loss_varying_frac} illustrates the variation in the training loss of \fedapm with different numbers of selected clients, where the fraction of selected clients is $\{0.1, 0.2, 0.3, 0.5\}$. 
It can be observed that as the number of selected clients increases, the training loss decreases more rapidly. 
This is because a greater number of clients participating enables more frequent model updates, thereby enhancing model accuracy. 
Therefore, when communication capacity allows, increasing the number of selected clients can improve convergence speed.

\subsection{Insights on Experimental Results}
We have compared the performance of \fedapm with SOTA methods and validated the impact of hyperparameters on the performance of \fedapm. 
Our findings have led to the following insights:
\begin{itemize}
    \item \fedapm demonstrates superior accuracy, F1 score, AUC, convergence, and communication efficiency compared to state-of-the-art methods in heterogeneous and multimodal settings.
    \item Partial model personalization can exacerbate client drift, causing the local model to diverge further from the shared model. The proposed \fedapm effectively mitigates client drift, bringing the local model closer to the shared model.
    \item Selecting appropriate hyperparameters for \fedapm can effectively enhance convergence. Specifically, choosing a relatively small value for $\rho$ and increasing the number of clients participating in each training round can yield faster convergence.
\end{itemize}

\begin{figure}[t]
  \centering
  \includegraphics[width=\linewidth]{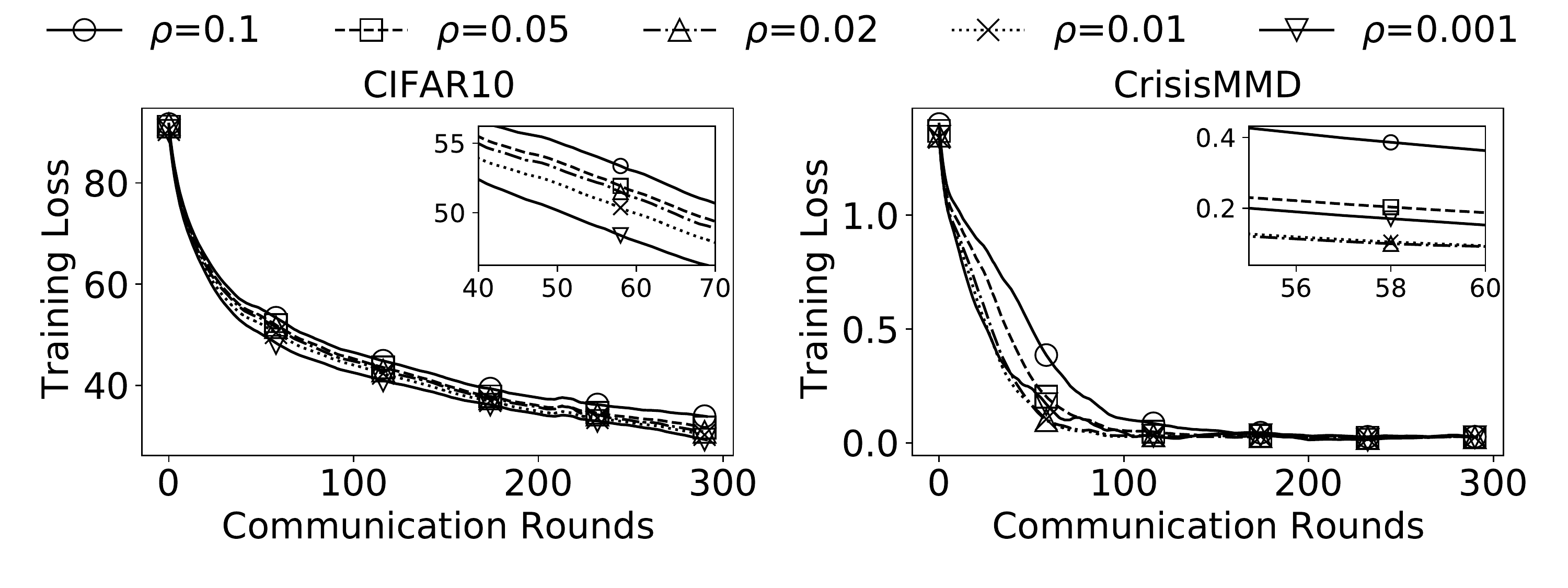}
  \caption{Convergence v.s. penalty parameter.}
    \vspace{2em}
  \label{fig: cifar10_crisis_mmd_FedAPM_loss_varying_rho}
  \includegraphics[width=\linewidth]{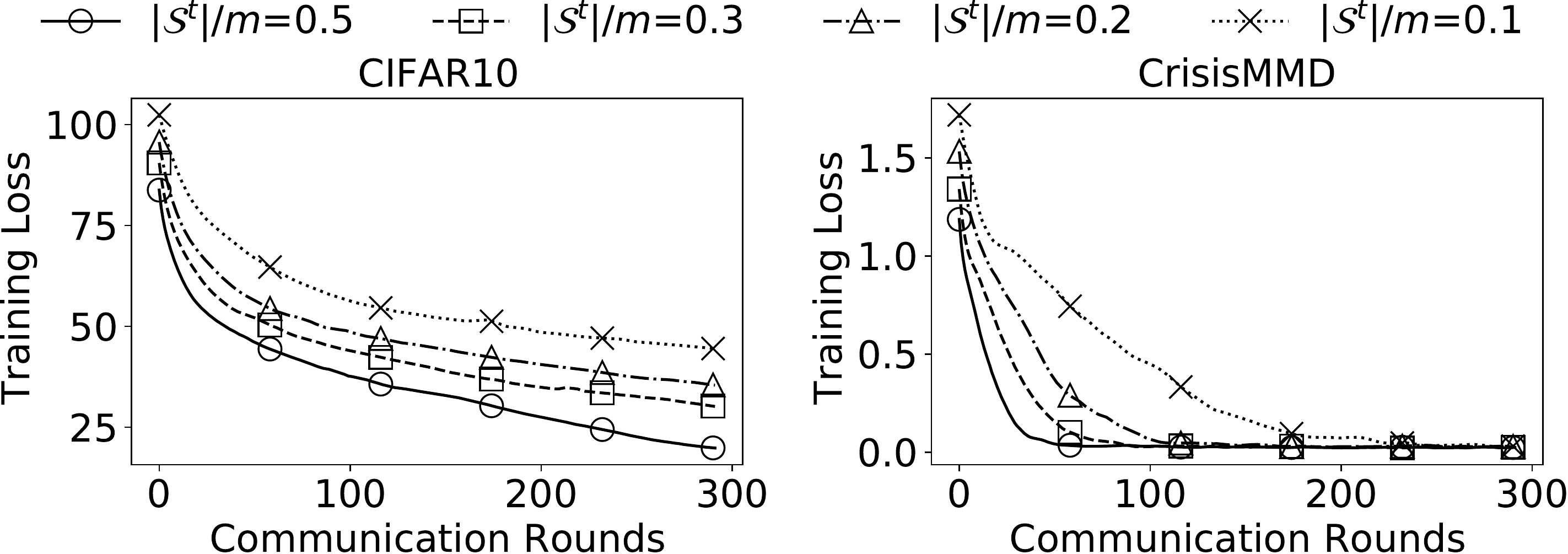}
  \caption{Convergence v.s. client selection.}
  \label{fig: cifar10_crisis_mmd_FedAPM_loss_varying_frac}
\end{figure}

\section{Conclusions and Future Work}
In this paper, we proposed an FL framework based on ADMM, called \fedapm, which performed well in heterogeneous and multimodal settings. 
\fedapm effectively addressed client drift and ill-conditioning issues in other FL frameworks.
We established global convergence for \fedapm with three convergence rates under weaker conditions than existing theoretical frameworks. 
Our experimental results demonstrated the superior performance of \fedapm across multiple datasets and metrics, and we also validated the impact of different hyperparameters, providing a strategy for their selection.

In future work, we plan to improve \fedapm with robust privacy-preserving mechanisms, such as encryption techniques or differential privacy, to prevent adversaries from inferring users' sensitive information from the shared model parameters. 
Additionally, we will explore the theoretical foundations and practical applications of federated learning and ADMM methods within foundation models.


\begin{acks}
This work was supported by the National Key R\&D Program of China (2023YFB4503600), National Natural Science Foundation of China (62202338), Key R\&D Program of Hubei Province (2023BAB081), and Ant Group through CCF-Ant Research Fund.
Jinshan Zeng was partially supported by the National Natural Science Foundation of China under Grant No.62376110, the Jiangxi Provincial Natural Science Foundation for Distinguished Young Scholars under Grant No. 20224ACB212004.
Yao Yuan was partially supported by the Research Grants Council (RGC) of Hong Kong, SAR, China (GRF-16308321) and the NSFC/RGC Joint Research Scheme Grant N\_HKUST635/20.
\end{acks}

\newpage

\balance
\bibliographystyle{ACM-Reference-Format}
\bibliography{sample}


\begin{thebibliography}{81}


\ifx \showCODEN    \undefined \def \showCODEN     #1{\unskip}     \fi
\ifx \showDOI      \undefined \def \showDOI       #1{#1}\fi
\ifx \showISBNx    \undefined \def \showISBNx     #1{\unskip}     \fi
\ifx \showISBNxiii \undefined \def \showISBNxiii  #1{\unskip}     \fi
\ifx \showISSN     \undefined \def \showISSN      #1{\unskip}     \fi
\ifx \showLCCN     \undefined \def \showLCCN      #1{\unskip}     \fi
\ifx \shownote     \undefined \def \shownote      #1{#1}          \fi
\ifx \showarticletitle \undefined \def \showarticletitle #1{#1}   \fi
\ifx \showURL      \undefined \def \showURL       {\relax}        \fi
\providecommand\bibfield[2]{#2}
\providecommand\bibinfo[2]{#2}
\providecommand\natexlab[1]{#1}
\providecommand\showeprint[2][]{arXiv:#2}

\bibitem[Wik(2018a)]%
        {Wikipedia1}
 \bibinfo{year}{2018}\natexlab{a}.
\newblock \bibinfo{title}{California {C}onsumer {P}rivacy {A}ct}.
\newblock \bibinfo{howpublished}{\url{https://en.wikipedia.org/wiki/California_Consumer_Privacy_Act}}.
\newblock


\bibitem[Wik(2018b)]%
        {Wikipedia2}
 \bibinfo{year}{2018}\natexlab{b}.
\newblock \bibinfo{title}{{G}eneral {D}ata {P}rotection {R}egulation}.
\newblock \bibinfo{howpublished}{\url{https://en.wikipedia.org/wiki/General_Data_Protection_Regulation}}.
\newblock


\bibitem[Alam et~al\mbox{.}(2018)]%
        {Alam2018CrisisMMD}
\bibfield{author}{\bibinfo{person}{Firoj Alam}, \bibinfo{person}{Ferda Ofli}, {and} \bibinfo{person}{Muhammad Imran}.} \bibinfo{year}{2018}\natexlab{}.
\newblock \showarticletitle{CrisisMMD: Multimodal Twitter Datasets from Natural Disasters}. In \bibinfo{booktitle}{\emph{ICWSM}}. \bibinfo{pages}{465--473}.
\newblock


\bibitem[Arivazhagan et~al\mbox{.}(2019)]%
        {arivazhagan2019federated}
\bibfield{author}{\bibinfo{person}{Manoj~Ghuhan Arivazhagan}, \bibinfo{person}{Vinay Aggarwal}, \bibinfo{person}{Aaditya~Kumar Singh}, {and} \bibinfo{person}{Sunav Choudhary}.} \bibinfo{year}{2019}\natexlab{}.
\newblock \showarticletitle{Federated learning with personalization layers}.
\newblock \bibinfo{journal}{\emph{arXiv preprint arXiv:1912.00818}} (\bibinfo{year}{2019}).
\newblock


\bibitem[Attouch and Bolte(2009)]%
        {Attouch2009convergence}
\bibfield{author}{\bibinfo{person}{H{\'{e}}dy Attouch} {and} \bibinfo{person}{J{\'{e}}r{\^{o}}me Bolte}.} \bibinfo{year}{2009}\natexlab{}.
\newblock \showarticletitle{On the convergence of the proximal algorithm for nonsmooth functions involving analytic features}.
\newblock \bibinfo{journal}{\emph{Math. Program.}} \bibinfo{volume}{116}, \bibinfo{number}{1-2} (\bibinfo{year}{2009}), \bibinfo{pages}{5--16}.
\newblock


\bibitem[Attouch et~al\mbox{.}(2013)]%
        {Attouch2013Convergence}
\bibfield{author}{\bibinfo{person}{H{\'{e}}dy Attouch}, \bibinfo{person}{J{\'{e}}r{\^{o}}me Bolte}, {and} \bibinfo{person}{Benar~Fux Svaiter}.} \bibinfo{year}{2013}\natexlab{}.
\newblock \showarticletitle{Convergence of descent methods for semi-algebraic and tame problems: proximal algorithms, forward-backward splitting, and regularized Gauss-Seidel methods}.
\newblock \bibinfo{journal}{\emph{Math. Program.}} \bibinfo{volume}{137}, \bibinfo{number}{1-2} (\bibinfo{year}{2013}), \bibinfo{pages}{91--129}.
\newblock


\bibitem[Bengio et~al\mbox{.}(2013)]%
        {Bengio2013Representation}
\bibfield{author}{\bibinfo{person}{Yoshua Bengio}, \bibinfo{person}{Aaron~C. Courville}, {and} \bibinfo{person}{Pascal Vincent}.} \bibinfo{year}{2013}\natexlab{}.
\newblock \showarticletitle{Representation Learning: {A} Review and New Perspectives}.
\newblock \bibinfo{journal}{\emph{{IEEE} Trans. Pattern Anal. Mach. Intell.}} \bibinfo{volume}{35}, \bibinfo{number}{8} (\bibinfo{year}{2013}), \bibinfo{pages}{1798--1828}.
\newblock


\bibitem[Bochnak et~al\mbox{.}(1998)]%
        {bochnak2013real}
\bibfield{author}{\bibinfo{person}{Jacek Bochnak}, \bibinfo{person}{Michel Coste}, {and} \bibinfo{person}{Marie-Fran{\c{c}}oise Roy}.} \bibinfo{year}{1998}\natexlab{}.
\newblock \bibinfo{booktitle}{\emph{Real algebraic geometry}}.
\newblock


\bibitem[Bolte et~al\mbox{.}(2007)]%
        {bolte2007lojasiewicz}
\bibfield{author}{\bibinfo{person}{J{\'e}r{\^o}me Bolte}, \bibinfo{person}{Aris Daniilidis}, {and} \bibinfo{person}{Adrian Lewis}.} \bibinfo{year}{2007}\natexlab{}.
\newblock \showarticletitle{The {\L}ojasiewicz inequality for nonsmooth subanalytic functions with applications to subgradient dynamical systems}.
\newblock \bibinfo{journal}{\emph{SIAM J. Optim.}} \bibinfo{volume}{17}, \bibinfo{number}{4} (\bibinfo{year}{2007}), \bibinfo{pages}{1205--1223}.
\newblock


\bibitem[Bonawitz et~al\mbox{.}(2019)]%
        {Kallista2019Towards}
\bibfield{author}{\bibinfo{person}{Kallista~A. Bonawitz}, \bibinfo{person}{Hubert Eichner}, \bibinfo{person}{Wolfgang Grieskamp}, \bibinfo{person}{Dzmitry Huba}, \bibinfo{person}{Alex Ingerman}, \bibinfo{person}{Vladimir Ivanov}, \bibinfo{person}{Chlo{\'{e}} Kiddon}, \bibinfo{person}{Jakub Kone{\v{c}}n{\'y}}, \bibinfo{person}{Stefano Mazzocchi}, \bibinfo{person}{Brendan McMahan}, \bibinfo{person}{Timon~Van Overveldt}, \bibinfo{person}{David Petrou}, \bibinfo{person}{Daniel Ramage}, {and} \bibinfo{person}{Jason Roselander}.} \bibinfo{year}{2019}\natexlab{}.
\newblock \showarticletitle{Towards Federated Learning at Scale: System Design}. In \bibinfo{booktitle}{\emph{SysML}}.
\newblock


\bibitem[Boyd et~al\mbox{.}(2011)]%
        {boyd2011distributed}
\bibfield{author}{\bibinfo{person}{Stephen Boyd}, \bibinfo{person}{Neal Parikh}, \bibinfo{person}{Eric Chu}, \bibinfo{person}{Borja Peleato}, \bibinfo{person}{Jonathan Eckstein}, {et~al\mbox{.}}} \bibinfo{year}{2011}\natexlab{}.
\newblock \showarticletitle{Distributed optimization and statistical learning via the alternating direction method of multipliers}.
\newblock \bibinfo{journal}{\emph{Found. Trends Mach. Learn.}} \bibinfo{volume}{3}, \bibinfo{number}{1} (\bibinfo{year}{2011}), \bibinfo{pages}{1--122}.
\newblock


\bibitem[Caldas et~al\mbox{.}(2018)]%
        {caldas2018leaf}
\bibfield{author}{\bibinfo{person}{Sebastian Caldas}, \bibinfo{person}{Sai Meher~Karthik Duddu}, \bibinfo{person}{Peter Wu}, \bibinfo{person}{Tian Li}, \bibinfo{person}{Jakub Kone{\v{c}}n{\`y}}, \bibinfo{person}{H~Brendan McMahan}, \bibinfo{person}{Virginia Smith}, {and} \bibinfo{person}{Ameet Talwalkar}.} \bibinfo{year}{2018}\natexlab{}.
\newblock \showarticletitle{Leaf: A benchmark for federated settings}.
\newblock \bibinfo{journal}{\emph{arXiv preprint arXiv:1812.01097}} (\bibinfo{year}{2018}).
\newblock


\bibitem[Cao et~al\mbox{.}(2014)]%
        {Houwei2014CREMA}
\bibfield{author}{\bibinfo{person}{Houwei Cao}, \bibinfo{person}{David~G. Cooper}, \bibinfo{person}{Michael~K. Keutmann}, \bibinfo{person}{Ruben~C. Gur}, \bibinfo{person}{Ani Nenkova}, {and} \bibinfo{person}{Ragini Verma}.} \bibinfo{year}{2014}\natexlab{}.
\newblock \showarticletitle{{CREMA-D:} Crowd-Sourced Emotional Multimodal Actors Dataset}.
\newblock \bibinfo{journal}{\emph{{IEEE} Trans. Affect. Comput.}} \bibinfo{volume}{5}, \bibinfo{number}{4} (\bibinfo{year}{2014}), \bibinfo{pages}{377--390}.
\newblock


\bibitem[Chen and Chao(2022)]%
        {Chen2022Bridging}
\bibfield{author}{\bibinfo{person}{Hong{-}You Chen} {and} \bibinfo{person}{Wei{-}Lun Chao}.} \bibinfo{year}{2022}\natexlab{}.
\newblock \showarticletitle{On Bridging Generic and Personalized Federated Learning for Image Classification}. In \bibinfo{booktitle}{\emph{ICLR}}.
\newblock


\bibitem[Chen and Zhang(2022)]%
        {Chen2022FedMSplit}
\bibfield{author}{\bibinfo{person}{Jiayi Chen} {and} \bibinfo{person}{Aidong Zhang}.} \bibinfo{year}{2022}\natexlab{}.
\newblock \showarticletitle{FedMSplit: Correlation-Adaptive Federated Multi-Task Learning across Multimodal Split Networks}. In \bibinfo{booktitle}{\emph{KDD}}. \bibinfo{pages}{87--96}.
\newblock


\bibitem[Chen and Zhang(2024)]%
        {Chen2024FedMBridge}
\bibfield{author}{\bibinfo{person}{Jiayi Chen} {and} \bibinfo{person}{Aidong Zhang}.} \bibinfo{year}{2024}\natexlab{}.
\newblock \showarticletitle{FedMBridge: Bridgeable Multimodal Federated Learning}. In \bibinfo{booktitle}{\emph{ICML}}.
\newblock


\bibitem[Chen and Rudnicky(2023)]%
        {chen2023exploring}
\bibfield{author}{\bibinfo{person}{Li-Wei Chen} {and} \bibinfo{person}{Alexander Rudnicky}.} \bibinfo{year}{2023}\natexlab{}.
\newblock \showarticletitle{Exploring wav2vec 2.0 fine tuning for improved speech emotion recognition}. In \bibinfo{booktitle}{\emph{ICASSP}}. \bibinfo{pages}{1--5}.
\newblock


\bibitem[Cho et~al\mbox{.}(2014)]%
        {Cho2014Learning}
\bibfield{author}{\bibinfo{person}{Kyunghyun Cho}, \bibinfo{person}{Bart van Merrienboer}, \bibinfo{person}{{\c{C}}aglar G{\"{u}}l{\c{c}}ehre}, \bibinfo{person}{Dzmitry Bahdanau}, \bibinfo{person}{Fethi Bougares}, \bibinfo{person}{Holger Schwenk}, {and} \bibinfo{person}{Yoshua Bengio}.} \bibinfo{year}{2014}\natexlab{}.
\newblock \showarticletitle{Learning Phrase Representations using {RNN} Encoder-Decoder for Statistical Machine Translation}. In \bibinfo{booktitle}{\emph{EMNLP}}. \bibinfo{pages}{1724--1734}.
\newblock


\bibitem[Collins et~al\mbox{.}(2021)]%
        {Liam2021Exploiting}
\bibfield{author}{\bibinfo{person}{Liam Collins}, \bibinfo{person}{Hamed Hassani}, \bibinfo{person}{Aryan Mokhtari}, {and} \bibinfo{person}{Sanjay Shakkottai}.} \bibinfo{year}{2021}\natexlab{}.
\newblock \showarticletitle{Exploiting Shared Representations for Personalized Federated Learning}. In \bibinfo{booktitle}{\emph{ICML}}, Vol.~\bibinfo{volume}{139}. \bibinfo{pages}{2089--2099}.
\newblock


\bibitem[del Molino and Gygli(2018)]%
        {Molino2018PHD}
\bibfield{author}{\bibinfo{person}{Ana~Garcia del Molino} {and} \bibinfo{person}{Michael Gygli}.} \bibinfo{year}{2018}\natexlab{}.
\newblock \showarticletitle{PHD-GIFs: Personalized Highlight Detection for Automatic {GIF} Creation}. In \bibinfo{booktitle}{\emph{ACM MM}}. \bibinfo{pages}{600--608}.
\newblock


\bibitem[Demmel(1997)]%
        {Demmel1997Algebra}
\bibfield{author}{\bibinfo{person}{James Demmel}.} \bibinfo{year}{1997}\natexlab{}.
\newblock \bibinfo{booktitle}{\emph{Applied Numerical Linear Algebra}}.
\newblock \bibinfo{publisher}{{SIAM}}.
\newblock


\bibitem[Devlin(2018)]%
        {devlin2018bert}
\bibfield{author}{\bibinfo{person}{Jacob Devlin}.} \bibinfo{year}{2018}\natexlab{}.
\newblock \showarticletitle{Bert: Pre-training of deep bidirectional transformers for language understanding}.
\newblock \bibinfo{journal}{\emph{arXiv preprint arXiv:1810.04805}} (\bibinfo{year}{2018}).
\newblock


\bibitem[Dinh et~al\mbox{.}(2020)]%
        {Dinh2020pfedme}
\bibfield{author}{\bibinfo{person}{Canh~T. Dinh}, \bibinfo{person}{Nguyen~Hoang Tran}, {and} \bibinfo{person}{Tuan~Dung Nguyen}.} \bibinfo{year}{2020}\natexlab{}.
\newblock \showarticletitle{Personalized Federated Learning with Moreau Envelopes}. In \bibinfo{booktitle}{\emph{NeurIPS}}. \bibinfo{pages}{21394--21405}.
\newblock


\bibitem[Feng et~al\mbox{.}(2023)]%
        {Feng2023FedMultimodal}
\bibfield{author}{\bibinfo{person}{Tiantian Feng}, \bibinfo{person}{Digbalay Bose}, \bibinfo{person}{Tuo Zhang}, \bibinfo{person}{Rajat Hebbar}, \bibinfo{person}{Anil Ramakrishna}, \bibinfo{person}{Rahul Gupta}, \bibinfo{person}{Mi Zhang}, \bibinfo{person}{Salman Avestimehr}, {and} \bibinfo{person}{Shrikanth Narayanan}.} \bibinfo{year}{2023}\natexlab{}.
\newblock \showarticletitle{FedMultimodal: {A} Benchmark for Multimodal Federated Learning}. In \bibinfo{booktitle}{\emph{KDD}}. \bibinfo{pages}{4035--4045}.
\newblock


\bibitem[Gong et~al\mbox{.}(2022)]%
        {Gong2022FedADMM}
\bibfield{author}{\bibinfo{person}{Yonghai Gong}, \bibinfo{person}{Yichuan Li}, {and} \bibinfo{person}{Nikolaos~M. Freris}.} \bibinfo{year}{2022}\natexlab{}.
\newblock \showarticletitle{FedADMM: {A} Robust Federated Deep Learning Framework with Adaptivity to System Heterogeneity}. In \bibinfo{booktitle}{\emph{ICDE}}. \bibinfo{pages}{2575--2587}.
\newblock


\bibitem[Goodfellow(2016)]%
        {goodfellow2016deep}
\bibfield{author}{\bibinfo{person}{Ian Goodfellow}.} \bibinfo{year}{2016}\natexlab{}.
\newblock \bibinfo{title}{Deep learning}.
\newblock
\newblock


\bibitem[Hanzely et~al\mbox{.}(2021)]%
        {hanzely2021personalized}
\bibfield{author}{\bibinfo{person}{Filip Hanzely}, \bibinfo{person}{Boxin Zhao}, {and} \bibinfo{person}{Mladen Kolar}.} \bibinfo{year}{2021}\natexlab{}.
\newblock \showarticletitle{Personalized federated learning: A unified framework and universal optimization techniques}.
\newblock \bibinfo{journal}{\emph{arXiv preprint arXiv:2102.09743}} (\bibinfo{year}{2021}).
\newblock


\bibitem[Howard(2017)]%
        {howard2017mobilenets}
\bibfield{author}{\bibinfo{person}{Andrew~G Howard}.} \bibinfo{year}{2017}\natexlab{}.
\newblock \showarticletitle{Mobilenets: Efficient convolutional neural networks for mobile vision applications}.
\newblock \bibinfo{journal}{\emph{arXiv preprint arXiv:1704.04861}} (\bibinfo{year}{2017}).
\newblock


\bibitem[Kairouz et~al\mbox{.}(2021a)]%
        {Peter2021Distributed}
\bibfield{author}{\bibinfo{person}{Peter Kairouz}, \bibinfo{person}{Ziyu Liu}, {and} \bibinfo{person}{Thomas Steinke}.} \bibinfo{year}{2021}\natexlab{a}.
\newblock \showarticletitle{The Distributed Discrete Gaussian Mechanism for Federated Learning with Secure Aggregation}. In \bibinfo{booktitle}{\emph{ICML}}, Vol.~\bibinfo{volume}{139}. \bibinfo{pages}{5201--5212}.
\newblock


\bibitem[Kairouz et~al\mbox{.}(2021b)]%
        {Kairouz2021Advances}
\bibfield{author}{\bibinfo{person}{Peter Kairouz}, \bibinfo{person}{H.~Brendan McMahan}, \bibinfo{person}{Brendan Avent}, \bibinfo{person}{Aur{\'{e}}lien Bellet}, \bibinfo{person}{Mehdi Bennis}, \bibinfo{person}{Arjun~Nitin Bhagoji}, \bibinfo{person}{Kallista~A. Bonawitz}, \bibinfo{person}{Zachary Charles}, \bibinfo{person}{Graham Cormode}, \bibinfo{person}{Rachel Cummings}, \bibinfo{person}{Rafael G.~L. D'Oliveira}, \bibinfo{person}{Hubert Eichner}, \bibinfo{person}{Salim~El Rouayheb}, \bibinfo{person}{David Evans}, \bibinfo{person}{Josh Gardner}, \bibinfo{person}{Zachary Garrett}, \bibinfo{person}{Adri{\`{a}} Gasc{\'{o}}n}, \bibinfo{person}{Badih Ghazi}, \bibinfo{person}{Phillip~B. Gibbons}, \bibinfo{person}{Marco Gruteser}, \bibinfo{person}{Za{\"{\i}}d Harchaoui}, \bibinfo{person}{Chaoyang He}, \bibinfo{person}{Lie He}, \bibinfo{person}{Zhouyuan Huo}, \bibinfo{person}{Ben Hutchinson}, \bibinfo{person}{Justin Hsu}, \bibinfo{person}{Martin Jaggi}, \bibinfo{person}{Tara Javidi}, \bibinfo{person}{Gauri Joshi},
  \bibinfo{person}{Mikhail Khodak}, \bibinfo{person}{Jakub Kone{\v{c}}n{\'y}}, \bibinfo{person}{Aleksandra Korolova}, \bibinfo{person}{Farinaz Koushanfar}, \bibinfo{person}{Sanmi Koyejo}, \bibinfo{person}{Tancr{\`{e}}de Lepoint}, \bibinfo{person}{Yang Liu}, \bibinfo{person}{Prateek Mittal}, \bibinfo{person}{Mehryar Mohri}, \bibinfo{person}{Richard Nock}, \bibinfo{person}{Ayfer {\"{O}}zg{\"{u}}r}, \bibinfo{person}{Rasmus Pagh}, \bibinfo{person}{Hang Qi}, \bibinfo{person}{Daniel Ramage}, \bibinfo{person}{Ramesh Raskar}, \bibinfo{person}{Mariana Raykova}, \bibinfo{person}{Dawn Song}, \bibinfo{person}{Weikang Song}, \bibinfo{person}{Sebastian~U. Stich}, \bibinfo{person}{Ziteng Sun}, \bibinfo{person}{Ananda~Theertha Suresh}, \bibinfo{person}{Florian Tram{\`{e}}r}, \bibinfo{person}{Praneeth Vepakomma}, \bibinfo{person}{Jianyu Wang}, \bibinfo{person}{Li Xiong}, \bibinfo{person}{Zheng Xu}, \bibinfo{person}{Qiang Yang}, \bibinfo{person}{Felix~X. Yu}, \bibinfo{person}{Han Yu}, {and} \bibinfo{person}{Sen Zhao}.}
  \bibinfo{year}{2021}\natexlab{b}.
\newblock \showarticletitle{Advances and Open Problems in Federated Learning}.
\newblock \bibinfo{journal}{\emph{Found. Trends Mach. Learn.}} \bibinfo{volume}{14}, \bibinfo{number}{1-2} (\bibinfo{year}{2021}), \bibinfo{pages}{1--210}.
\newblock


\bibitem[Kang et~al\mbox{.}(2024)]%
        {Heejoo2024FedAND}
\bibfield{author}{\bibinfo{person}{Heejoo Kang}, \bibinfo{person}{Minsoo Kim}, \bibinfo{person}{Bumsuk Lee}, {and} \bibinfo{person}{Hongseok Kim}.} \bibinfo{year}{2024}\natexlab{}.
\newblock \showarticletitle{FedAND: Federated Learning Exploiting Consensus ADMM by Nulling Drift}.
\newblock \bibinfo{journal}{\emph{{IEEE} Trans. Ind. Informatics}} \bibinfo{volume}{20}, \bibinfo{number}{7} (\bibinfo{year}{2024}), \bibinfo{pages}{9837--9849}.
\newblock


\bibitem[Karimireddy et~al\mbox{.}(2020)]%
        {Praneeth2020SCAFFOLD}
\bibfield{author}{\bibinfo{person}{Sai~Praneeth Karimireddy}, \bibinfo{person}{Satyen Kale}, \bibinfo{person}{Mehryar Mohri}, \bibinfo{person}{Sashank~J. Reddi}, \bibinfo{person}{Sebastian~U. Stich}, {and} \bibinfo{person}{Ananda~Theertha Suresh}.} \bibinfo{year}{2020}\natexlab{}.
\newblock \showarticletitle{{SCAFFOLD:} Stochastic Controlled Averaging for Federated Learning}. In \bibinfo{booktitle}{\emph{ICML}}, Vol.~\bibinfo{volume}{119}. \bibinfo{pages}{5132--5143}.
\newblock


\bibitem[Krantz and Parks(2002)]%
        {krantz2002primer}
\bibfield{author}{\bibinfo{person}{Steven~G Krantz} {and} \bibinfo{person}{Harold~R Parks}.} \bibinfo{year}{2002}\natexlab{}.
\newblock \bibinfo{booktitle}{\emph{A primer of real analytic functions}}.
\newblock


\bibitem[Krizhevsky et~al\mbox{.}(2009)]%
        {krizhevsky2009learning}
\bibfield{author}{\bibinfo{person}{Alex Krizhevsky}, \bibinfo{person}{Geoffrey Hinton}, {et~al\mbox{.}}} \bibinfo{year}{2009}\natexlab{}.
\newblock \showarticletitle{Learning multiple layers of features from tiny images}.
\newblock  (\bibinfo{year}{2009}).
\newblock


\bibitem[Kuang et~al\mbox{.}(2024)]%
        {Kuang2024FederatedScope}
\bibfield{author}{\bibinfo{person}{Weirui Kuang}, \bibinfo{person}{Bingchen Qian}, \bibinfo{person}{Zitao Li}, \bibinfo{person}{Daoyuan Chen}, \bibinfo{person}{Dawei Gao}, \bibinfo{person}{Xuchen Pan}, \bibinfo{person}{Yuexiang Xie}, \bibinfo{person}{Yaliang Li}, \bibinfo{person}{Bolin Ding}, {and} \bibinfo{person}{Jingren Zhou}.} \bibinfo{year}{2024}\natexlab{}.
\newblock \showarticletitle{FederatedScope-LLM: {A} Comprehensive Package for Fine-tuning Large Language Models in Federated Learning}. In \bibinfo{booktitle}{\emph{KDD}}. \bibinfo{publisher}{{ACM}}, \bibinfo{pages}{5260--5271}.
\newblock


\bibitem[Kurdyka(1998)]%
        {kurdyka1998gradients}
\bibfield{author}{\bibinfo{person}{Krzysztof Kurdyka}.} \bibinfo{year}{1998}\natexlab{}.
\newblock \showarticletitle{On gradients of functions definable in o-minimal structures}. In \bibinfo{booktitle}{\emph{Annales de l'institut Fourier}}, Vol.~\bibinfo{volume}{48}. \bibinfo{pages}{769--783}.
\newblock


\bibitem[LeCun et~al\mbox{.}(2015)]%
        {Yann2015Deep}
\bibfield{author}{\bibinfo{person}{Yann LeCun}, \bibinfo{person}{Yoshua Bengio}, {and} \bibinfo{person}{Geoffrey~E. Hinton}.} \bibinfo{year}{2015}\natexlab{}.
\newblock \showarticletitle{Deep learning}.
\newblock \bibinfo{journal}{\emph{Nat.}} \bibinfo{volume}{521}, \bibinfo{number}{7553} (\bibinfo{year}{2015}), \bibinfo{pages}{436--444}.
\newblock


\bibitem[Li et~al\mbox{.}(2022)]%
        {Li2022Experimental}
\bibfield{author}{\bibinfo{person}{Qinbin Li}, \bibinfo{person}{Yiqun Diao}, \bibinfo{person}{Quan Chen}, {and} \bibinfo{person}{Bingsheng He}.} \bibinfo{year}{2022}\natexlab{}.
\newblock \showarticletitle{Federated Learning on Non-IID Data Silos: An Experimental Study}. In \bibinfo{booktitle}{\emph{ICDE}}. \bibinfo{pages}{965--978}.
\newblock


\bibitem[Li et~al\mbox{.}(2021)]%
        {Li2021Ditto}
\bibfield{author}{\bibinfo{person}{Tian Li}, \bibinfo{person}{Shengyuan Hu}, \bibinfo{person}{Ahmad Beirami}, {and} \bibinfo{person}{Virginia Smith}.} \bibinfo{year}{2021}\natexlab{}.
\newblock \showarticletitle{Ditto: Fair and Robust Federated Learning Through Personalization}. In \bibinfo{booktitle}{\emph{ICML}}, Vol.~\bibinfo{volume}{139}. \bibinfo{pages}{6357--6368}.
\newblock


\bibitem[Li et~al\mbox{.}(2020a)]%
        {Li2020Challenges}
\bibfield{author}{\bibinfo{person}{Tian Li}, \bibinfo{person}{Anit~Kumar Sahu}, \bibinfo{person}{Ameet Talwalkar}, {and} \bibinfo{person}{Virginia Smith}.} \bibinfo{year}{2020}\natexlab{a}.
\newblock \showarticletitle{Federated Learning: Challenges, Methods, and Future Directions}.
\newblock \bibinfo{journal}{\emph{{IEEE} Signal Process. Mag.}} \bibinfo{volume}{37}, \bibinfo{number}{3} (\bibinfo{year}{2020}), \bibinfo{pages}{50--60}.
\newblock


\bibitem[Li et~al\mbox{.}(2020b)]%
        {Li2020fedprox}
\bibfield{author}{\bibinfo{person}{Tian Li}, \bibinfo{person}{Anit~Kumar Sahu}, \bibinfo{person}{Manzil Zaheer}, \bibinfo{person}{Maziar Sanjabi}, \bibinfo{person}{Ameet Talwalkar}, {and} \bibinfo{person}{Virginia Smith}.} \bibinfo{year}{2020}\natexlab{b}.
\newblock \showarticletitle{Federated Optimization in Heterogeneous Networks}. In \bibinfo{booktitle}{\emph{MLSys}}.
\newblock


\bibitem[Liang et~al\mbox{.}(2020)]%
        {Liang2020Think}
\bibfield{author}{\bibinfo{person}{Paul~Pu Liang}, \bibinfo{person}{Terrance Liu}, \bibinfo{person}{Ziyin Liu}, \bibinfo{person}{Ruslan Salakhutdinov}, {and} \bibinfo{person}{Louis{-}Philippe Morency}.} \bibinfo{year}{2020}\natexlab{}.
\newblock \showarticletitle{Think Locally, Act Globally: Federated Learning with Local and Global Representations}. In \bibinfo{booktitle}{\emph{NeurIPS}}.
\newblock


\bibitem[Lin et~al\mbox{.}(2022)]%
        {Lin2022Personalized}
\bibfield{author}{\bibinfo{person}{Shiyun Lin}, \bibinfo{person}{Yuze Han}, \bibinfo{person}{Xiang Li}, {and} \bibinfo{person}{Zhihua Zhang}.} \bibinfo{year}{2022}\natexlab{}.
\newblock \showarticletitle{Personalized Federated Learning towards Communication Efficiency, Robustness and Fairness}. In \bibinfo{booktitle}{\emph{NeurIPS}}. \bibinfo{pages}{30471--30485}.
\newblock


\bibitem[Liu et~al\mbox{.}(2021)]%
        {Liu2021FATE}
\bibfield{author}{\bibinfo{person}{Yang Liu}, \bibinfo{person}{Tao Fan}, \bibinfo{person}{Tianjian Chen}, \bibinfo{person}{Qian Xu}, {and} \bibinfo{person}{Qiang Yang}.} \bibinfo{year}{2021}\natexlab{}.
\newblock \showarticletitle{{FATE:} An Industrial Grade Platform for Collaborative Learning With Data Protection}.
\newblock \bibinfo{journal}{\emph{J. Mach. Learn. Res.}}  \bibinfo{volume}{22} (\bibinfo{year}{2021}), \bibinfo{pages}{226:1--226:6}.
\newblock


\bibitem[Lojasiewicz(1965)]%
        {law1965ensembles}
\bibfield{author}{\bibinfo{person}{Stanis Lojasiewicz}.} \bibinfo{year}{1965}\natexlab{}.
\newblock \showarticletitle{Ensembles semi-analytiques}.
\newblock \bibinfo{journal}{\emph{Institut des Hautes Etudes Scientifiques}} (\bibinfo{year}{1965}).
\newblock


\bibitem[Luo et~al\mbox{.}(2021)]%
        {Luo2021Fear}
\bibfield{author}{\bibinfo{person}{Mi Luo}, \bibinfo{person}{Fei Chen}, \bibinfo{person}{Dapeng Hu}, \bibinfo{person}{Yifan Zhang}, \bibinfo{person}{Jian Liang}, {and} \bibinfo{person}{Jiashi Feng}.} \bibinfo{year}{2021}\natexlab{}.
\newblock \showarticletitle{No Fear of Heterogeneity: Classifier Calibration for Federated Learning with Non-IID Data}. In \bibinfo{booktitle}{\emph{NeurIPS}}. \bibinfo{pages}{5972--5984}.
\newblock


\bibitem[Ma et~al\mbox{.}(2015)]%
        {Ma2015Adding}
\bibfield{author}{\bibinfo{person}{Chenxin Ma}, \bibinfo{person}{Virginia Smith}, \bibinfo{person}{Martin Jaggi}, \bibinfo{person}{Michael~I. Jordan}, \bibinfo{person}{Peter Richt{\'{a}}rik}, {and} \bibinfo{person}{Martin Tak{\'{a}}c}.} \bibinfo{year}{2015}\natexlab{}.
\newblock \showarticletitle{Adding vs. Averaging in Distributed Primal-Dual Optimization}. In \bibinfo{booktitle}{\emph{ICML}}, Vol.~\bibinfo{volume}{37}. \bibinfo{pages}{1973--1982}.
\newblock


\bibitem[McMahan et~al\mbox{.}(2017)]%
        {mcmahan2017communication}
\bibfield{author}{\bibinfo{person}{Brendan McMahan}, \bibinfo{person}{Eider Moore}, \bibinfo{person}{Daniel Ramage}, \bibinfo{person}{Seth Hampson}, {and} \bibinfo{person}{Blaise~Aguera y Arcas}.} \bibinfo{year}{2017}\natexlab{}.
\newblock \showarticletitle{Communication-efficient learning of deep networks from decentralized data}. In \bibinfo{booktitle}{\emph{AISTATS}}. \bibinfo{pages}{1273--1282}.
\newblock


\bibitem[Mikolov and Zweig(2012)]%
        {Mikolov2012Context}
\bibfield{author}{\bibinfo{person}{Tom{\'{a}}s Mikolov} {and} \bibinfo{person}{Geoffrey Zweig}.} \bibinfo{year}{2012}\natexlab{}.
\newblock \showarticletitle{Context dependent recurrent neural network language model}. In \bibinfo{booktitle}{\emph{SLT}}. \bibinfo{pages}{234--239}.
\newblock


\bibitem[Mordukhovich(2006)]%
        {Mordukhovich2006Variational}
\bibfield{author}{\bibinfo{person}{Boris~S Mordukhovich}.} \bibinfo{year}{2006}\natexlab{}.
\newblock \bibinfo{booktitle}{\emph{Variational analysis and generalized differentiation I: Basic Theory}}.
\newblock


\bibitem[Nocedal and Wright(1999)]%
        {Numerical2006Wright}
\bibfield{author}{\bibinfo{person}{Jorge Nocedal} {and} \bibinfo{person}{Stephen~J. Wright}.} \bibinfo{year}{1999}\natexlab{}.
\newblock \bibinfo{booktitle}{\emph{Numerical Optimization}}.
\newblock \bibinfo{publisher}{Springer}.
\newblock


\bibitem[Pillutla et~al\mbox{.}(2022)]%
        {Krishna2022Partial}
\bibfield{author}{\bibinfo{person}{Krishna Pillutla}, \bibinfo{person}{Kshitiz Malik}, \bibinfo{person}{Abdelrahman Mohamed}, \bibinfo{person}{Michael~G. Rabbat}, \bibinfo{person}{Maziar Sanjabi}, {and} \bibinfo{person}{Lin Xiao}.} \bibinfo{year}{2022}\natexlab{}.
\newblock \showarticletitle{Federated Learning with Partial Model Personalization}. In \bibinfo{booktitle}{\emph{ICML}}. \bibinfo{pages}{17716--17758}.
\newblock


\bibitem[Ren et~al\mbox{.}(2017)]%
        {Ren2017Personalized}
\bibfield{author}{\bibinfo{person}{Jian Ren}, \bibinfo{person}{Xiaohui Shen}, \bibinfo{person}{Zhe~L. Lin}, \bibinfo{person}{Radom{\'{\i}}r Mech}, {and} \bibinfo{person}{David~J. Foran}.} \bibinfo{year}{2017}\natexlab{}.
\newblock \showarticletitle{Personalized Image Aesthetics}. In \bibinfo{booktitle}{\emph{ICCV}}. \bibinfo{pages}{638--647}.
\newblock


\bibitem[Rockafellar and Wets(1998)]%
        {rockafellar2009variational}
\bibfield{author}{\bibinfo{person}{R~Tyrrell Rockafellar} {and} \bibinfo{person}{Roger J-B Wets}.} \bibinfo{year}{1998}\natexlab{}.
\newblock \bibinfo{booktitle}{\emph{Variational analysis}}.
\newblock


\bibitem[Sattler et~al\mbox{.}(2020)]%
        {Sattler2020Robust}
\bibfield{author}{\bibinfo{person}{Felix Sattler}, \bibinfo{person}{Simon Wiedemann}, \bibinfo{person}{Klaus{-}Robert M{\"{u}}ller}, {and} \bibinfo{person}{Wojciech Samek}.} \bibinfo{year}{2020}\natexlab{}.
\newblock \showarticletitle{Robust and Communication-Efficient Federated Learning From Non-i.i.d. Data}.
\newblock \bibinfo{journal}{\emph{{IEEE} Trans. Neural Networks Learn. Syst.}} \bibinfo{volume}{31}, \bibinfo{number}{9} (\bibinfo{year}{2020}), \bibinfo{pages}{3400--3413}.
\newblock


\bibitem[Shalev{-}Shwartz and Zhang(2013)]%
        {Shai2013Stochastic}
\bibfield{author}{\bibinfo{person}{Shai Shalev{-}Shwartz} {and} \bibinfo{person}{Tong Zhang}.} \bibinfo{year}{2013}\natexlab{}.
\newblock \showarticletitle{Stochastic dual coordinate ascent methods for regularized loss}.
\newblock \bibinfo{journal}{\emph{J. Mach. Learn. Res.}} \bibinfo{volume}{14}, \bibinfo{number}{1} (\bibinfo{year}{2013}), \bibinfo{pages}{567--599}.
\newblock


\bibitem[Shalev{-}Shwartz and Zhang(2016)]%
        {Shai2016Accelerated}
\bibfield{author}{\bibinfo{person}{Shai Shalev{-}Shwartz} {and} \bibinfo{person}{Tong Zhang}.} \bibinfo{year}{2016}\natexlab{}.
\newblock \showarticletitle{Accelerated proximal stochastic dual coordinate ascent for regularized loss minimization}.
\newblock \bibinfo{journal}{\emph{Math. Program.}} \bibinfo{volume}{155}, \bibinfo{number}{1-2} (\bibinfo{year}{2016}), \bibinfo{pages}{105--145}.
\newblock


\bibitem[Shiota(1997)]%
        {shiota1997geometry}
\bibfield{author}{\bibinfo{person}{Masahiro Shiota}.} \bibinfo{year}{1997}\natexlab{}.
\newblock \bibinfo{booktitle}{\emph{Geometry of subanalytic and semialgebraic sets}}.
\newblock


\bibitem[Sikder and Nahid(2021)]%
        {Niloy2021KU}
\bibfield{author}{\bibinfo{person}{Niloy Sikder} {and} \bibinfo{person}{Abdullah~Al Nahid}.} \bibinfo{year}{2021}\natexlab{}.
\newblock \showarticletitle{{KU-HAR:} An open dataset for heterogeneous human activity recognition}.
\newblock \bibinfo{journal}{\emph{Pattern Recognit. Lett.}}  \bibinfo{volume}{146} (\bibinfo{year}{2021}), \bibinfo{pages}{46--54}.
\newblock


\bibitem[Singhal et~al\mbox{.}(2021)]%
        {Singhal2021Federated}
\bibfield{author}{\bibinfo{person}{Karan Singhal}, \bibinfo{person}{Hakim Sidahmed}, \bibinfo{person}{Zachary Garrett}, \bibinfo{person}{Shanshan Wu}, \bibinfo{person}{John Rush}, {and} \bibinfo{person}{Sushant Prakash}.} \bibinfo{year}{2021}\natexlab{}.
\newblock \showarticletitle{Federated Reconstruction: Partially Local Federated Learning}. In \bibinfo{booktitle}{\emph{NeurIPS}}. \bibinfo{pages}{11220--11232}.
\newblock


\bibitem[Smith et~al\mbox{.}(2017a)]%
        {Smith2017Federated}
\bibfield{author}{\bibinfo{person}{Virginia Smith}, \bibinfo{person}{Chao{-}Kai Chiang}, \bibinfo{person}{Maziar Sanjabi}, {and} \bibinfo{person}{Ameet Talwalkar}.} \bibinfo{year}{2017}\natexlab{a}.
\newblock \showarticletitle{Federated Multi-Task Learning}. In \bibinfo{booktitle}{\emph{NeurIPS}}. \bibinfo{pages}{4424--4434}.
\newblock


\bibitem[Smith et~al\mbox{.}(2017b)]%
        {Smith2017CoCoA}
\bibfield{author}{\bibinfo{person}{Virginia Smith}, \bibinfo{person}{Simone Forte}, \bibinfo{person}{Chenxin Ma}, \bibinfo{person}{Martin Tak{\'{a}}c}, \bibinfo{person}{Michael~I. Jordan}, {and} \bibinfo{person}{Martin Jaggi}.} \bibinfo{year}{2017}\natexlab{b}.
\newblock \showarticletitle{CoCoA: {A} General Framework for Communication-Efficient Distributed Optimization}.
\newblock \bibinfo{journal}{\emph{J. Mach. Learn. Res.}}  \bibinfo{volume}{18} (\bibinfo{year}{2017}), \bibinfo{pages}{230:1--230:49}.
\newblock


\bibitem[Sun et~al\mbox{.}(2023)]%
        {Sun2023FedPerfix}
\bibfield{author}{\bibinfo{person}{Guangyu Sun}, \bibinfo{person}{Mat{\'{\i}}as Mendieta}, \bibinfo{person}{Jun Luo}, \bibinfo{person}{Shandong Wu}, {and} \bibinfo{person}{Chen Chen}.} \bibinfo{year}{2023}\natexlab{}.
\newblock \showarticletitle{FedPerfix: Towards Partial Model Personalization of Vision Transformers in Federated Learning}. In \bibinfo{booktitle}{\emph{ICCV}}. \bibinfo{pages}{4965--4975}.
\newblock


\bibitem[Sun et~al\mbox{.}(2020)]%
        {sun2020mobilebert}
\bibfield{author}{\bibinfo{person}{Zhiqing Sun}, \bibinfo{person}{Hongkun Yu}, \bibinfo{person}{Xiaodan Song}, \bibinfo{person}{Renjie Liu}, \bibinfo{person}{Yiming Yang}, {and} \bibinfo{person}{Denny Zhou}.} \bibinfo{year}{2020}\natexlab{}.
\newblock \showarticletitle{Mobilebert: a compact task-agnostic bert for resource-limited devices}.
\newblock \bibinfo{journal}{\emph{arXiv preprint arXiv:2004.02984}} (\bibinfo{year}{2020}).
\newblock


\bibitem[Wang et~al\mbox{.}(2023)]%
        {Wang2023Apache}
\bibfield{author}{\bibinfo{person}{Chen Wang}, \bibinfo{person}{Jialin Qiao}, \bibinfo{person}{Xiangdong Huang}, \bibinfo{person}{Shaoxu Song}, \bibinfo{person}{Haonan Hou}, \bibinfo{person}{Tian Jiang}, \bibinfo{person}{Lei Rui}, \bibinfo{person}{Jianmin Wang}, {and} \bibinfo{person}{Jiaguang Sun}.} \bibinfo{year}{2023}\natexlab{}.
\newblock \showarticletitle{Apache IoTDB: {A} Time Series Database for IoT Applications}.
\newblock \bibinfo{journal}{\emph{Proc. {ACM} Manag. Data}} \bibinfo{volume}{1}, \bibinfo{number}{2} (\bibinfo{year}{2023}), \bibinfo{pages}{195:1--195:27}.
\newblock


\bibitem[Wang et~al\mbox{.}(2022)]%
        {Wang2022FedADMM}
\bibfield{author}{\bibinfo{person}{Han Wang}, \bibinfo{person}{Siddartha Marella}, {and} \bibinfo{person}{James Anderson}.} \bibinfo{year}{2022}\natexlab{}.
\newblock \showarticletitle{FedADMM: {A} federated primal-dual algorithm allowing partial participation}. In \bibinfo{booktitle}{\emph{CDC}}. \bibinfo{publisher}{{IEEE}}, \bibinfo{pages}{287--294}.
\newblock


\bibitem[Wang et~al\mbox{.}(2019)]%
        {wang2019global}
\bibfield{author}{\bibinfo{person}{Yu Wang}, \bibinfo{person}{Wotao Yin}, {and} \bibinfo{person}{Jinshan Zeng}.} \bibinfo{year}{2019}\natexlab{}.
\newblock \showarticletitle{Global convergence of ADMM in nonconvex nonsmooth optimization}.
\newblock \bibinfo{journal}{\emph{J. Sci. Comput.}} \bibinfo{volume}{78}, \bibinfo{number}{1} (\bibinfo{year}{2019}), \bibinfo{pages}{29--63}.
\newblock


\bibitem[Xiao et~al\mbox{.}(2018)]%
        {Liang2018IoT}
\bibfield{author}{\bibinfo{person}{Liang Xiao}, \bibinfo{person}{Xiaoyue Wan}, \bibinfo{person}{Xiaozhen Lu}, \bibinfo{person}{Yanyong Zhang}, {and} \bibinfo{person}{Di Wu}.} \bibinfo{year}{2018}\natexlab{}.
\newblock \showarticletitle{IoT Security Techniques Based on Machine Learning: How Do IoT Devices Use {AI} to Enhance Security?}
\newblock \bibinfo{journal}{\emph{{IEEE} Signal Process. Mag.}} \bibinfo{volume}{35}, \bibinfo{number}{5} (\bibinfo{year}{2018}), \bibinfo{pages}{41--49}.
\newblock


\bibitem[Xu and Yin(2013)]%
        {Xu2013Block}
\bibfield{author}{\bibinfo{person}{Yangyang Xu} {and} \bibinfo{person}{Wotao Yin}.} \bibinfo{year}{2013}\natexlab{}.
\newblock \showarticletitle{A Block Coordinate Descent Method for Regularized Multiconvex Optimization with Applications to Nonnegative Tensor Factorization and Completion}.
\newblock \bibinfo{journal}{\emph{{SIAM} J. Imaging Sci.}} \bibinfo{volume}{6}, \bibinfo{number}{3} (\bibinfo{year}{2013}), \bibinfo{pages}{1758--1789}.
\newblock


\bibitem[Yadav et~al\mbox{.}(2021)]%
        {Piyush2021Query}
\bibfield{author}{\bibinfo{person}{Piyush Yadav}, \bibinfo{person}{Dhaval Salwala}, \bibinfo{person}{Felipe~Arruda Pontes}, \bibinfo{person}{Praneet Dhingra}, {and} \bibinfo{person}{Edward Curry}.} \bibinfo{year}{2021}\natexlab{}.
\newblock \showarticletitle{Query-Driven Video Event Processing for the Internet of Multimedia Things}.
\newblock \bibinfo{journal}{\emph{Proc. {VLDB} Endow.}} \bibinfo{volume}{14}, \bibinfo{number}{12} (\bibinfo{year}{2021}), \bibinfo{pages}{2847--2850}.
\newblock


\bibitem[Ye et~al\mbox{.}(2024a)]%
        {Ye2024Heterogeneous}
\bibfield{author}{\bibinfo{person}{Mang Ye}, \bibinfo{person}{Xiuwen Fang}, \bibinfo{person}{Bo Du}, \bibinfo{person}{Pong~C. Yuen}, {and} \bibinfo{person}{Dacheng Tao}.} \bibinfo{year}{2024}\natexlab{a}.
\newblock \showarticletitle{Heterogeneous Federated Learning: State-of-the-art and Research Challenges}.
\newblock \bibinfo{journal}{\emph{{ACM} Comput. Surv.}} \bibinfo{volume}{56}, \bibinfo{number}{3} (\bibinfo{year}{2024}), \bibinfo{pages}{79:1--79:44}.
\newblock


\bibitem[Ye et~al\mbox{.}(2024b)]%
        {Ye2024OpenFedLLM}
\bibfield{author}{\bibinfo{person}{Rui Ye}, \bibinfo{person}{Wenhao Wang}, \bibinfo{person}{Jingyi Chai}, \bibinfo{person}{Dihan Li}, \bibinfo{person}{Zexi Li}, \bibinfo{person}{Yinda Xu}, \bibinfo{person}{Yaxin Du}, \bibinfo{person}{Yanfeng Wang}, {and} \bibinfo{person}{Siheng Chen}.} \bibinfo{year}{2024}\natexlab{b}.
\newblock \showarticletitle{OpenFedLLM: Training Large Language Models on Decentralized Private Data via Federated Learning}. In \bibinfo{booktitle}{\emph{KDD}}. \bibinfo{publisher}{{ACM}}, \bibinfo{pages}{6137--6147}.
\newblock


\bibitem[Zeng et~al\mbox{.}(2019)]%
        {Zeng2019Global}
\bibfield{author}{\bibinfo{person}{Jinshan Zeng}, \bibinfo{person}{Tim~Tsz{-}Kit Lau}, \bibinfo{person}{Shaobo Lin}, {and} \bibinfo{person}{Yuan Yao}.} \bibinfo{year}{2019}\natexlab{}.
\newblock \showarticletitle{Global Convergence of Block Coordinate Descent in Deep Learning}. In \bibinfo{booktitle}{\emph{ICML}}, Vol.~\bibinfo{volume}{97}. \bibinfo{pages}{7313--7323}.
\newblock


\bibitem[Zeng et~al\mbox{.}(2021)]%
        {Zeng2021ADMM}
\bibfield{author}{\bibinfo{person}{Jinshan Zeng}, \bibinfo{person}{Shao{-}Bo Lin}, \bibinfo{person}{Yuan Yao}, {and} \bibinfo{person}{Ding{-}Xuan Zhou}.} \bibinfo{year}{2021}\natexlab{}.
\newblock \showarticletitle{On {ADMM} in Deep Learning: Convergence and Saturation-Avoidance}.
\newblock \bibinfo{journal}{\emph{J. Mach. Learn. Res.}}  \bibinfo{volume}{22} (\bibinfo{year}{2021}), \bibinfo{pages}{199:1--199:67}.
\newblock


\bibitem[Zhang et~al\mbox{.}(2023a)]%
        {Zhang2023FedALA}
\bibfield{author}{\bibinfo{person}{Jianqing Zhang}, \bibinfo{person}{Yang Hua}, \bibinfo{person}{Hao Wang}, \bibinfo{person}{Tao Song}, \bibinfo{person}{Zhengui Xue}, \bibinfo{person}{Ruhui Ma}, {and} \bibinfo{person}{Haibing Guan}.} \bibinfo{year}{2023}\natexlab{a}.
\newblock \showarticletitle{FedALA: Adaptive Local Aggregation for Personalized Federated Learning}. In \bibinfo{booktitle}{\emph{AAAI}}. \bibinfo{pages}{11237--11244}.
\newblock


\bibitem[Zhang et~al\mbox{.}(2023b)]%
        {Zhang2023FedCP}
\bibfield{author}{\bibinfo{person}{Jianqing Zhang}, \bibinfo{person}{Yang Hua}, \bibinfo{person}{Hao Wang}, \bibinfo{person}{Tao Song}, \bibinfo{person}{Zhengui Xue}, \bibinfo{person}{Ruhui Ma}, {and} \bibinfo{person}{Haibing Guan}.} \bibinfo{year}{2023}\natexlab{b}.
\newblock \showarticletitle{FedCP: Separating Feature Information for Personalized Federated Learning via Conditional Policy}. In \bibinfo{booktitle}{\emph{KDD}}. \bibinfo{pages}{3249--3261}.
\newblock


\bibitem[Zhang et~al\mbox{.}(2021)]%
        {zhang2021fedpd}
\bibfield{author}{\bibinfo{person}{Xinwei Zhang}, \bibinfo{person}{Mingyi Hong}, \bibinfo{person}{Sairaj Dhople}, \bibinfo{person}{Wotao Yin}, {and} \bibinfo{person}{Yang Liu}.} \bibinfo{year}{2021}\natexlab{}.
\newblock \showarticletitle{FedPD: A federated learning framework with adaptivity to non-iid data}.
\newblock \bibinfo{journal}{\emph{IEEE Trans. Signal Process.}}  \bibinfo{volume}{69} (\bibinfo{year}{2021}), \bibinfo{pages}{6055--6070}.
\newblock


\bibitem[Zhao et~al\mbox{.}(2018)]%
        {zhao2018federated}
\bibfield{author}{\bibinfo{person}{Yue Zhao}, \bibinfo{person}{Meng Li}, \bibinfo{person}{Liangzhen Lai}, \bibinfo{person}{Naveen Suda}, \bibinfo{person}{Damon Civin}, {and} \bibinfo{person}{Vikas Chandra}.} \bibinfo{year}{2018}\natexlab{}.
\newblock \showarticletitle{Federated learning with non-iid data}.
\newblock \bibinfo{journal}{\emph{arXiv preprint arXiv:1806.00582}} (\bibinfo{year}{2018}).
\newblock


\bibitem[Zhou and Li(2023a)]%
        {Zhou2023FedADMM}
\bibfield{author}{\bibinfo{person}{Shenglong Zhou} {and} \bibinfo{person}{Geoffrey~Ye Li}.} \bibinfo{year}{2023}\natexlab{a}.
\newblock \showarticletitle{Federated Learning Via Inexact {ADMM}}.
\newblock \bibinfo{journal}{\emph{{IEEE} Trans. Pattern Anal. Mach. Intell.}} \bibinfo{volume}{45}, \bibinfo{number}{8} (\bibinfo{year}{2023}), \bibinfo{pages}{9699--9708}.
\newblock


\bibitem[Zhou and Li(2023b)]%
        {Zhou2023FedGiA}
\bibfield{author}{\bibinfo{person}{Shenglong Zhou} {and} \bibinfo{person}{Geoffrey~Ye Li}.} \bibinfo{year}{2023}\natexlab{b}.
\newblock \showarticletitle{FedGiA: An Efficient Hybrid Algorithm for Federated Learning}.
\newblock \bibinfo{journal}{\emph{{IEEE} Trans. Signal Process.}}  \bibinfo{volume}{71} (\bibinfo{year}{2023}), \bibinfo{pages}{1493--1508}.
\newblock


\bibitem[Zhu et~al\mbox{.}(2024)]%
        {zhu2024admm}
\bibfield{author}{\bibinfo{person}{Shengkun Zhu}, \bibinfo{person}{Jinshan Zeng}, \bibinfo{person}{Sheng Wang}, \bibinfo{person}{Yuan Sun}, \bibinfo{person}{Xiaodong Li}, \bibinfo{person}{Yuan Yao}, {and} \bibinfo{person}{Zhiyong Peng}.} \bibinfo{year}{2024}\natexlab{}.
\newblock \showarticletitle{On ADMM in Heterogeneous Federated Learning: Personalization, Robustness, and Fairness}.
\newblock \bibinfo{journal}{\emph{arXiv preprint arXiv:2407.16397}} (\bibinfo{year}{2024}).
\newblock


\end{thebibliography}

\newpage
\onecolumn
\appendix
\begin{Huge}
    \textbf{Appendix}\\
\end{Huge}

\begin{Large}
    We provide a simple table of contents below for easier navigation of the appendix.\\
\end{Large}

\begin{LARGE}
    \textbf{CONTENTS}\\
\end{LARGE}

\begin{Large}
\textbf{Section \ref{appendix: convergence}: Convergence}\\

\quad \quad Section \ref{appendix: some useful}: Some Useful Properties\\

\quad \quad Section \ref{subappendix: Kurdyka-Łojasiewicz Property}: Kurdyka-Łojasiewicz Property in FL Training with ADMM\\

\quad \quad Section \ref{appendix: Sufficient Descent property}: Sufficient Descent\\

\quad \quad Section \ref{appendix: Relative Error property}: Relative Error\\

\quad \quad Section \ref{proof of theorem 1(theorem: convergence)}: Proof of Theorem \ref{theorem: convergence}\\

\quad \quad Section \ref{appendix: proof of theorem 2theorem: sequences convergence}: Proof of Theorem \ref{theorem: sequences convergence}\\

\textbf{Section \ref{appendix: Experimental Details}: Experimental Details and Extra Results}\\

\quad \quad Section \ref{sec: dataset details}: Full Details on the Datasets\\

\quad \quad Section \ref{sec: model details}: Neural Network Architecture for the Models used in Numerical Experiments\\

\quad \quad Section \ref{appendix: Parameter Settings for the Algorithms}: Parameter Settings for the Algorithms\\

\quad \quad Section \ref{appendix: Implementations on fedapm and Comparison Methods}: Implementations on \fedapm and Comparison Methods\\

\quad \quad Section \ref{sec: complete results}: Complete Results\\
\end{Large}

\newpage

\section{Convergence }\label{appendix: convergence}
\subsection{Some Useful Properties}\label{appendix: some useful}
We provide a list of useful properties and lemmas that are necessary for proving our theorems. Lemma \ref{jensen} presents commonly employed inequalities, while Lemma \ref{lemma: lipschitz} provides an exposition of the property of the smooth functions. Lemma \ref{lemma: pi estimate} will be employed in the proof of our main theorems.
\begin{lemma}\label{jensen}
For any vectors $\bv_1$, $\bv_2$ and $k>0$, we have
\begin{align}
    2\langle\bv_1,\bv_2\rangle&\leq k\|\bv_1\|^2+\frac{1}{k}\|\bv_2\|^2,\\
    \|\bv_1-\bv_2\|^2&\leq (1+k)\|\bv_1\|^2+(1+\frac{1}{k})\|\bv_2\|^2,\\
    \|\sum_{i=1}^k \bv_i\|^2&\leq k\sum_{i=1}^k\|\bv_i\|^2.
\end{align}
\end{lemma}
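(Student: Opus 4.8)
The plan is to derive all three inequalities from a single elementary fact: the squared Euclidean norm of any vector is nonnegative, combined with the bilinearity of the inner product. None of the three requires anything beyond expanding a square and rearranging, so the proof is short; the only care needed is in tracking coefficients and signs.

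First I would establish the first inequality, which is the weighted Young inequality. The key observation is that for any $k>0$ the vector $\sqrt{k}\,\bv_1-\tfrac{1}{\sqrt{k}}\,\bv_2$ has nonnegative squared norm. Expanding $\|\sqrt{k}\,\bv_1-\tfrac{1}{\sqrt{k}}\,\bv_2\|^2 = k\|\bv_1\|^2-2\langle\bv_1,\bv_2\rangle+\tfrac{1}{k}\|\bv_2\|^2\geq 0$ and moving the cross term to the other side gives $2\langle\bv_1,\bv_2\rangle\leq k\|\bv_1\|^2+\tfrac{1}{k}\|\bv_2\|^2$ immediately.

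Next I would prove the second inequality by combining the expansion $\|\bv_1-\bv_2\|^2=\|\bv_1\|^2-2\langle\bv_1,\bv_2\rangle+\|\bv_2\|^2$ with the first inequality applied to the cross term, that is, applying the first bound to $\bv_1$ and $-\bv_2$ to obtain $-2\langle\bv_1,\bv_2\rangle\leq k\|\bv_1\|^2+\tfrac{1}{k}\|\bv_2\|^2$. Substituting this bound and collecting terms yields the stated coefficients $(1+k)$ and $(1+\tfrac{1}{k})$. Finally, the third inequality follows either from the convexity of the squared norm (Jensen's inequality applied to the average $\tfrac{1}{k}\sum_{i=1}^k\bv_i$, then multiplying through by $k^2$) or, more directly, from expanding $\|\sum_{i=1}^k\bv_i\|^2=\sum_{i,j}\langle\bv_i,\bv_j\rangle$ and bounding each cross term by $\langle\bv_i,\bv_j\rangle\leq\tfrac{1}{2}(\|\bv_i\|^2+\|\bv_j\|^2)$, which is the special case $k=1$ of the first inequality; summing over the $k^2$ index pairs collapses to $k\sum_{i=1}^k\|\bv_i\|^2$.

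Since every step reduces to elementary algebra on inner products, there is no genuine obstacle here; the only point requiring mild care is the coefficient bookkeeping in the second inequality and handling the sign of the cross term correctly when invoking the first inequality.
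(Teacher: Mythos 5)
Your proof is correct in all three parts: the expansion of $\|\sqrt{k}\,\bv_1-\tfrac{1}{\sqrt{k}}\,\bv_2\|^2\geq 0$ gives the weighted Young inequality, applying it to the pair $(\bv_1,-\bv_2)$ inside the expansion of $\|\bv_1-\bv_2\|^2$ gives the second bound with the stated coefficients, and either the convexity (Jensen) argument or the double-sum expansion with the $k=1$ cross-term bound gives the third. Note that the paper states this lemma without any proof, treating it as a standard collection of facts, so there is no paper argument to compare against; your derivation is the standard elementary one and fills that gap correctly.
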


\begin{lemma}\label{lemma: lipschitz}
    Suppose that $f_i, i\in[m]$ satisfy Assumption \ref{assmption lipschitz}, then it holds that
\begin{align}
    f_i(\bv_i',\bu_i) - f_i(\bv_i,\bu_i) \leq \langle \nabla_{\bv} f_i(\bv_i,\bu_i), \bv_i'-\bv_i  \rangle + \frac{L_v}{2}\|\bv_i'-\bv_i\|^2,\\
    f_i(\bv_i,\bu_i') - f_i(\bv_i,\bu_i) \leq \langle \nabla_{\bu} f_i(\bv_i,\bu_i), \bu_i'-\bu_i  \rangle + \frac{L_u}{2}\|\bu_i'-\bu_i\|^2.
\end{align}
\end{lemma}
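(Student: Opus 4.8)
The statement is the classical descent lemma (quadratic upper bound), so the plan is to derive it directly from the integral form of the fundamental theorem of calculus together with the gradient-Lipschitz property in Assumption \ref{assmption lipschitz}. I would prove the two inequalities separately but by an identical argument, so I focus on the first one and obtain the second by symmetry.

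First, fix $\bu_i$ and parametrize the segment joining $\bv_i$ and $\bv_i'$ by $\phi(t) := f_i(\bv_i + t(\bv_i' - \bv_i), \bu_i)$ for $t \in [0,1]$. Since Assumption \ref{assmption lipschitz} guarantees that $f_i$ is continuously differentiable, $\phi$ is $C^1$ and the chain rule gives $\phi'(t) = \langle \nabla_{\bv} f_i(\bv_i + t(\bv_i'-\bv_i), \bu_i), \bv_i' - \bv_i \rangle$. Integrating from $0$ to $1$ yields
\begin{align*}
    f_i(\bv_i',\bu_i) - f_i(\bv_i,\bu_i) = \int_0^1 \langle \nabla_{\bv} f_i(\bv_i + t(\bv_i'-\bv_i), \bu_i), \bv_i' - \bv_i \rangle \, dt.
\end{align*}

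Next, I would subtract and add the base gradient, rewriting the right-hand side as $\langle \nabla_{\bv} f_i(\bv_i, \bu_i), \bv_i' - \bv_i \rangle$ plus the integral of $\langle \nabla_{\bv} f_i(\bv_i + t(\bv_i'-\bv_i), \bu_i) - \nabla_{\bv} f_i(\bv_i, \bu_i), \bv_i' - \bv_i \rangle$. Applying Cauchy--Schwarz to the integrand and then the $L_v$-Lipschitz continuity of $\nabla_{\bv} f_i$ in its first argument bounds the gradient difference by $L_v \| t(\bv_i'-\bv_i) \| = L_v t \|\bv_i' - \bv_i\|$, so the integrand is at most $L_v t \|\bv_i'-\bv_i\|^2$. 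Since $\int_0^1 L_v t \, dt = L_v/2$, this produces exactly the claimed $\frac{L_v}{2}\|\bv_i'-\bv_i\|^2$ remainder term. The second inequality follows from the same computation with $\bv_i$ held fixed, using instead the $L_u$-Lipschitz continuity of $\nabla_{\bu} f_i$ in its second argument.

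There is essentially no hard step here: the result is entirely standard, and the only thing to verify is that the hypotheses of the fundamental theorem of calculus are met, which is immediate from the continuous differentiability asserted in Assumption \ref{assmption lipschitz}. The only mild care needed is to invoke the correct Lipschitz constant ($L_v$ for the $\bv$-gradient along the $\bv$-direction, and $L_u$ for the $\bu$-gradient along the $\bu$-direction) rather than the cross constants $L_{uv}$ and $L_{vu}$, which do not enter this lemma.
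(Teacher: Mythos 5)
Your proof is correct: it is the classical descent-lemma argument (parametrize the segment, apply the fundamental theorem of calculus, add and subtract the base gradient, then bound the remainder via Cauchy--Schwarz and the coordinate-wise Lipschitz constants $L_v$, $L_u$ from Assumption \ref{assmption lipschitz}). The paper itself states Lemma \ref{lemma: lipschitz} without proof, treating it as a standard property of gradient-Lipschitz functions, so your argument supplies exactly the standard derivation the paper implicitly relies on, including the correct observation that the cross constants $L_{uv}$ and $L_{vu}$ play no role here.
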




\begin{lemma} \label{lemma: pi estimate}
    Suppose that Assumption \ref{assmption lipschitz} holds, let each client set $\rho\geq \max\{3\alpha_i L_u, 3\alpha_i L_{uv}\}$, then for any client $i\in[m]$, it holds that 
    \begin{align}
        \|\bpi_i^{t+1}-\bpi_i^t\|^2\leq  \frac{24}{1-\mu_i}(\xi_i^{t} - \xi_i^{t+1} )+ \frac{4}{15}\rho^2\Bigl(\|\bu_i^{t+1} - \bu_i^{t}\|^2 + \|\bv_i^{t+1} - \bv_i^{t}\|^2 \Bigr)
    \end{align}
\end{lemma}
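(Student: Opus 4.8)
The goal is to bound the dual variable increment $\|\bpi_i^{t+1}-\bpi_i^t\|^2$. My plan is to exploit the dual update rule together with the $\xi$-approximate stationarity condition for $\bu_i^{t+1}$, which together pin down the size of $\bpi_i^{t+1}$ in terms of gradient quantities. Concretely, from the dual step \eqref{eq: solve pi} we have $\bpi_i^{t+1}-\bpi_i^t = \rho(\bu_i^{t+1}-\bu^t)$, and the same quantity $\rho(\bu_i^{t+1}-\bu^t)=\bpi_i^{t+1}-\bpi_i^t$ appears inside the approximate-solution inequality \eqref{eq: approximate solution about u}. Substituting the dual relation into Definition \ref{definition: approximate solutions} gives
\begin{align*}
    \|\alpha_i\nabla_{\bu_i}f_i(\bv_i^{t+1},\bu_i^{t+1})+\bpi_i^{t+1}\|^2\leq \xi_i^{t+1}.
\end{align*}
So I would first solve for $\bpi_i^{t+1}$ as being, up to an error of magnitude $\sqrt{\xi_i^{t+1}}$, equal to $-\alpha_i\nabla_{\bu_i}f_i(\bv_i^{t+1},\bu_i^{t+1})$, and write the analogous identity one step earlier for $\bpi_i^t$ in terms of $-\alpha_i\nabla_{\bu_i}f_i(\bv_i^{t},\bu_i^{t})$ plus an error of magnitude $\sqrt{\xi_i^t}$.

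The core of the argument is then to subtract these two relations. Taking the difference, $\bpi_i^{t+1}-\bpi_i^t$ equals $-\alpha_i\bigl(\nabla_{\bu_i}f_i(\bv_i^{t+1},\bu_i^{t+1})-\nabla_{\bu_i}f_i(\bv_i^{t},\bu_i^{t})\bigr)$ modulo two error terms. I would bound the squared norm using Lemma \ref{jensen} (splitting into three pieces with a suitable constant $k$), and control the gradient-difference piece via gradient Lipschitz continuity (Assumption \ref{assmption lipschitz}): inserting the intermediate point $(\bv_i^{t+1},\bu_i^t)$ and applying the triangle inequality yields
\begin{align*}
    \|\nabla_{\bu_i}f_i(\bv_i^{t+1},\bu_i^{t+1})-\nabla_{\bu_i}f_i(\bv_i^{t},\bu_i^{t})\|\leq L_u\|\bu_i^{t+1}-\bu_i^t\|+L_{uv}\|\bv_i^{t+1}-\bv_i^t\|.
\end{align*}
Squaring this and using the condition $\rho\geq\max\{3\alpha_iL_u,3\alpha_iL_{uv}\}$ lets me convert the factors $\alpha_i^2L_u^2$ and $\alpha_i^2L_{uv}^2$ into bounds of the form $(\rho/3)^2$, which is exactly what produces the coefficient $\tfrac{4}{15}\rho^2$ after the Young-type splitting constants are chosen.

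For the error terms I would use the accuracy-decay schedule \eqref{eq: approximate solution} $\xi_i^{t+1}\leq\mu_i\xi_i^t$ to telescope: the two contributions proportional to $\xi_i^{t+1}$ and $\xi_i^t$ must be combined into a quantity of the form $\frac{24}{1-\mu_i}(\xi_i^t-\xi_i^{t+1})$. The key manipulation is that since $\xi_i^{t+1}\leq\mu_i\xi_i^t$, we have $\xi_i^t-\xi_i^{t+1}\geq(1-\mu_i)\xi_i^t\geq(1-\mu_i)\xi_i^{t+1}$, so both $\xi_i^t$ and $\xi_i^{t+1}$ can be absorbed into a single multiple of $(\xi_i^t-\xi_i^{t+1})/(1-\mu_i)$. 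I expect the main obstacle to be the careful bookkeeping of constants: one must choose the splitting parameter $k$ in Lemma \ref{jensen} so that the gradient-Lipschitz piece collapses to precisely $\tfrac{4}{15}\rho^2$ while simultaneously the error-term coefficients telescope to precisely $\tfrac{24}{1-\mu_i}$. This is a matter of tracking the Young's-inequality weights rather than any conceptual difficulty, but the target constants are tight enough that the right choice of $k$ is not obvious a priori and must be reverse-engineered from the desired bound.
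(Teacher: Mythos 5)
Your proposal is correct and follows essentially the same route as the paper: the paper's proof defines the error terms $e_i^{t}=\alpha_i\nabla_{\bu_i}f_i(\bv_i^{t},\bu_i^{t})+\bpi_i^{t}$ and $e_i^{t+1}=\alpha_i\nabla_{\bu_i}f_i(\bv_i^{t+1},\bu_i^{t+1})+\bpi_i^{t+1}$ (exactly your "up to $\sqrt{\xi}$" identities via the dual update), subtracts them, splits with the Young-type inequality (weight $k=5$, giving coefficients $6$ and $\tfrac{6}{5}$), inserts the intermediate point $(\bv_i^{t+1},\bu_i^t)$, applies the Lipschitz bounds and $\rho\geq\max\{3\alpha_iL_u,3\alpha_iL_{uv}\}$ to get $\tfrac{4}{15}\rho^2$, and absorbs $\xi_i^{t+1}+\xi_i^t\leq\tfrac{1+\mu_i}{1-\mu_i}(\xi_i^t-\xi_i^{t+1})$ into $\tfrac{24}{1-\mu_i}(\xi_i^t-\xi_i^{t+1})$, exactly as you outlined. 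The constant bookkeeping you flagged as the only obstacle indeed resolves with the splitting weight $k=5$, and the only detail you omitted is the trivial case $i\notin\mathcal{S}^t$, where both sides reduce to the inequality $0\leq 0$.
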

\begin{proof}
    If $i\in\mathcal{S}^t$, we define the local error term as
\begin{align}
    e_i^{t+1}&:=\alpha_i \nabla_{\bu_i}f_i(\bv^{t+1}_i, \bu_i^{t+1})+\bpi_i^t+\rho(\bu_i^{t+1}-\bu^t)\notag\\
    &=\alpha_i \nabla_{\bu_i}f_i(\bv^{t+1}_i, \bu_i^{t+1})+\bpi_i^{t+1}.\label{eq: error term}
\end{align}
where \eqref{eq: error term} follows from \eqref{eq: solve pi}. Then we obtain
\begin{align}
    \|\bpi_i^{t+1}-\bpi_i^t\|^2 = &\|e_i^{t+1} -e_i^{t} + \alpha_i \nabla_{\bu_i}f_i(\bv^{t}_i, \bu_i^{t}) - \alpha_i \nabla_{\bu_i}f_i(\bv^{t+1}_i, \bu_i^{t+1})\|^2\\
    \leq & 6\|e_i^{t+1} -e_i^{t} \|^2 +  \frac{6}{5}\|\alpha_i \nabla_{\bu_i}f_i(\bv^{t}_i, \bu_i^{t}) -\alpha_i \nabla_{\bu_i}f_i(\bv^{t+1}_i, \bu_i^{t+1})\|^2   \label{eq: 27}\\
    \leq & 12(\xi_i^{t+1} + \xi_i^{t} )+  \frac{6}{5}\|\alpha_i \nabla_{\bu_i}f_i(\bv^{t+1}_i, \bu_i^{t+1})- \alpha_i \nabla_{\bu_i}f_i(\bv^{t+1}_i, \bu_i^{t}) + \alpha_i \nabla_{\bu_i}f_i(\bv^{t+1}_i, \bu_i^{t})  - \alpha_i \nabla_{\bu_i}f_i(\bv^{t}_i, \bu_i^{t})\|^2  \label{eq: 28}\\
    \leq & \frac{12(1+\mu_i)}{1-\mu_i}(\xi_i^{t} - \xi_i^{t+1} )+ \frac{12}{5}\Bigl(\| \alpha_i  \nabla_{\bu_i}f_i(\bv^{t+1}_i, \bu_i^{t+1})- \alpha_i \nabla_{\bu_i}f_i(\bv^{t+1}_i, \bu_i^{t})\|^2 + \|\alpha_i \nabla_{\bu_i}f_i(\bv^{t+1}_i, \bu_i^{t})  - \alpha_i \nabla_{\bu_i}f_i(\bv^{t}_i, \bu_i^{t})\|^2\Bigr)\label{eq: 29}\\
    \leq & \frac{12(1+\mu_i)}{1-\mu_i}(\xi_i^{t} - \xi_i^{t+1} )+ \frac{12}{5}\Bigl(\alpha_i^2 L_{u}^2\|\bu_i^{t+1} - \bu_i^{t}\|^2 + \alpha_i^2 L_{uv} ^2\|\bv_i^{t+1} - \bv_i^{t}\|^2 \Bigr) \label{eq: 30}\\
    \leq & \frac{24}{1-\mu_i}(\xi_i^{t} - \xi_i^{t+1} )+ \frac{4}{15}\rho^2\Bigl(\|\bu_i^{t+1} - \bu_i^{t}\|^2 + \|\bv_i^{t+1} - \bv_i^{t}\|^2 \Bigr) \label{eq: 31}.
\end{align}
where \eqref{eq: 27}, \eqref{eq: 28}, and \eqref{eq: 29} follow from Lemma \ref{jensen}, \eqref{eq: 30} follows from Assumption \ref{assmption lipschitz}, and \eqref{eq: 31} follows from the fact that $\rho\geq \max\{3\alpha_i L_u, 3\alpha_i L_{uv}\}$.
If $i\notin\mathcal{S}^t$, we have $\|\bpi_i^{t+1}-\bpi_i^t\|^2 =0 \leq \frac{24}{1-\mu_i}(\xi_i^{t} - \xi_i^{t+1} )+ \frac{4}{15}\rho^2\Bigl(\|\bu_i^{t+1} - \bu_i^{t}\|^2 + \|\bv_i^{t+1} - \bv_i^{t}\|^2 \Bigr)$, which completes the proof.
\end{proof}

\subsection{Kurdyka-Łojasiewicz Property in FL Training with ADMM}\label{subappendix: Kurdyka-Łojasiewicz Property}
To prove Theorem 1, we need to demonstrate that the KŁ property holds for the considered Lagrangian function $\ml$.
We begin by presenting the definition of KŁ property and elucidating some properties of real analytic and semialgebraic functions.
\begin{definition}[Desingularizing Function]\label{definition: Desingularizing Function}
    A function $\phi\!:\![0,\theta)\!\to\!(0,+\infty)$ satisfying the following conditions is a desingularizing function:
\begin{itemize}
    \item[a)] $\phi$ is concave and continuously differentiable on $(0,\theta)$;
    \item[b)] $\phi$ is continuous at 0 and $\phi(0)=0$;
    \item[c)] For any $x\in(0,\theta), $ $\phi'(x)>0$.
\end{itemize}
\end{definition}

\begin{definition}[Fr\'{e}chet Subdifferential \cite{rockafellar2009variational} and Limiting Subdifferential \cite{Mordukhovich2006Variational}]
    For any $\bx,\boldsymbol{y}\in\text{dom}(h)$, the Fr\'{e}chet subdifferential of $h$ at $\bx$, represented by $\hat{\partial}h(\bx)$, is the set of vectors $\boldsymbol{z}$ which satisfies 
\begin{align*}
    \lim_{\boldsymbol{y}\neq \boldsymbol{x},}\inf_{\boldsymbol{y}\to \bx}\frac{h(\boldsymbol{y})-h(\bx)-\langle
    \boldsymbol{z},\boldsymbol{y}-\bx\rangle}{\|\bx-\boldsymbol{y}\|}\geq 0.
\end{align*}
When $\bx\notin \text{dom}(h)$, we set $\hat{\partial}h(\bx)=\emptyset$. The limiting subdifferential (or simply subdifferential) of $h$, represented by $\partial h(\bx)$ at $\bx\in\text{dom}(h)$ is defined by
\begin{align*}
    \partial h(\bx):=\{\boldsymbol{z}\in\mathbb{R}^p:\exists\bx^t\to\bx, h(\bx^t)\to h(\bx), \boldsymbol{z}^t\in\hat{\partial}h(\bx)\to\boldsymbol{z}\}.
\end{align*}
\end{definition}

\begin{definition}[Kurdyka-Łojasiewicz Property \cite{bolte2007lojasiewicz}]\label{definition: kl property}
    A function $h:\mathbb{R}^p\to\mathbb{R}\cup\{+\infty\}$ is said to have Kurdyka-Łojasiewicz (KŁ) property at $\bx^*\in \text{dom}(\partial h)$ if there exist a neighborhood $U$ of $\bx^*$, a constant $\eta$, and a desingularizing function $\phi$, such that for all $\bx\in U\cap \text{dom}(\partial h)$ and $h(\bx^*)<h(\bx)<h(\bx^*)+\eta$, it holds that,
\begin{align}\label{eq: kl inequality}
    \phi'(h(\bx)-h(\bx^*))\text{dist}(0,\partial h(\bx))\geq 1,
\end{align}
where $\text{dist}(0,\partial h(\bx)):=\inf\{\|\boldsymbol{z}\|:\boldsymbol{z}\in\partial h(\bx)\}$ represents the distance between zero to the set $\partial h(\bx)$.
If $h$ satisfies the KŁ property, then $f$ is called a KŁ function.
\end{definition}

The definition of the KŁ property means that the function under consideration is sharp up to a reparametrization \cite{Attouch2009convergence}. Particularly, when $h$ is smooth, finite-valued, and $h(\bx^*)=0$, then \eqref{eq: kl inequality} can be represented as $\|\nabla(\phi\circ h)(\bx)\|\geq 1$ for each $\bx\in\mathbb{R}^p$.
This inequality can be interpreted as follows: by reparametrizing the values of $h$ through $\phi$, we obtain a well-defined function. The function $\phi$ serves the purpose of transforming a singular region, where gradients are arbitrarily small, into a regular region, characterized by gradients bounded away from zero.
The class of KŁ functions encompasses a diverse range of function types, comprising real analytic functions (as defined in Definition \ref{definition: real analytic}), semialgebraic functions (as delineated in Definition \ref{definition: Semialgebraic}), as well as tame functions defined within certain o-minimal structures \cite{kurdyka1998gradients}. Additionally, it encompasses continuous subanalytic functions \cite{bolte2007lojasiewicz} and locally strongly convex functions \cite{Xu2013Block}.

In the following, we establish the KŁ property of the FL training  with ADMM, i.e. the function $\ml$ defined in \eqref{eq: lagrangian function} and $\tilde{\ml}$ defined in \eqref{eq: Lyapunov function}.
\begin{proposition}[\kl property in FL training with ADMM]\label{proposition: kl property of L}
    Suppose that Assumption \ref{assumption: either real analytic or semialgebraic} holds, then the Lagrangian function $\ml$ defined in \eqref{eq: lagrangian function} and $\tilde{\ml}$ defined in \eqref{eq: Lyapunov function} are \kl functions.
\end{proposition}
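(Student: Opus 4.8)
The plan is to show that both $\ml$ and $\tilde{\ml}$ belong to a class of functions that is known to satisfy the \kl inequality, and then appeal to the established base results rather than verifying the inequality \eqref{eq: kl inequality} directly. The starting observation is that $\ml$ is a finite sum: each summand $\ml_i(\bv_i,\bu_i,\bpi_i,\bu)=\alpha_i f_i(\bv_i,\bu_i)+\langle\bpi_i,\bu_i-\bu\rangle+\frac{\rho}{2}\|\bu_i-\bu\|^2$ splits into (i) the scaled loss $\alpha_i f_i$, which by Assumption \ref{assumption: either real analytic or semialgebraic} is real analytic or semialgebraic, and (ii) the coupling terms $\langle\bpi_i,\bu_i-\bu\rangle$ and $\frac{\rho}{2}\|\bu_i-\bu\|^2$, which are polynomials in their arguments and hence semialgebraic. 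First I would record the two base facts: every real analytic function satisfies the \L ojasiewicz gradient inequality (so is \kl), and every semialgebraic function is \kl; both are instances of the more general theorem that functions definable in an o-minimal structure are \kl, with a definable desingularizing function $\phi$ as in Definition \ref{definition: Desingularizing Function}.

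The main obstacle is that the \kl property is \emph{not} preserved under arbitrary sums: a sum of a real analytic and a semialgebraic function (as may occur here, since the $f_i$ need not be of the same type across $i$) is in general neither real analytic nor semialgebraic, so one cannot simply add the base facts. To resolve this I would embed every summand into a single, fixed o-minimal structure. Semialgebraic functions are definable in every o-minimal expansion of the real field, while globally subanalytic (in particular, restricted real analytic) functions are definable in the structure $\mathbb{R}_{\mathrm{an}}$; if the losses involve $\exp$ or $\log$, as for logistic or cross-entropy losses, one instead works in the larger o-minimal structure $\mathbb{R}_{\mathrm{an},\exp}$. Within such a fixed structure the class of definable functions is closed under scalar multiplication, finite sums, and composition, so each $\ml_i$ and hence $\ml=\sum_{i=1}^m\ml_i$ is definable. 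Applying Kurdyka's theorem and its nonsmooth extension (cf. \cite{kurdyka1998gradients,bolte2007lojasiewicz}), every definable function is \kl at each point of its domain, which establishes that $\ml$ is a \kl function.

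Finally, for $\tilde{\ml}$ I would note that it augments $\ml$ only by the terms $\sum_{i=1}^m\frac{29}{\rho(1-\mu_i)}\xi_i$, which are linear in the auxiliary variables $\xi_i$ and therefore polynomial, hence semialgebraic and definable in the same structure. Since adding definable terms preserves definability, $\tilde{\ml}$ is definable and therefore \kl, completing the proof. The bulk of the work is thus conceptual rather than computational: the only nontrivial point is choosing the common o-minimal structure so that the heterogeneous summands (analytic, semialgebraic, and polynomial) can be treated uniformly, after which closure under finite sums and the definable-implies-\kl theorem finish the argument.
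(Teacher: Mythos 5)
Your identification of the central obstacle---that the \kl property is not closed under sums, so one cannot simply combine ``real analytic is \kl'' with ``semialgebraic is \kl''---is exactly right, and it is the same obstacle the paper confronts. However, your resolution has a genuine gap. You propose to embed all summands into a single o-minimal structure ($\mathbb{R}_{\mathrm{an}}$ or $\mathbb{R}_{\mathrm{an},\exp}$) and invoke closure of definable functions under finite sums together with Kurdyka's theorem. But Assumption \ref{assumption: either real analytic or semialgebraic} (via Definition \ref{definition: real analytic}) permits $f_i$ to be an \emph{arbitrary} real analytic function on an open, possibly unbounded, set, and such functions need not be definable in $\mathbb{R}_{\mathrm{an}}$, in $\mathbb{R}_{\mathrm{an},\exp}$, or in any o-minimal structure: the standard counterexample is $f_i(x)=\sin^2(x)$, which is real analytic and nonnegative (so it satisfies the assumption), yet its zero set $\pi\mathbb{Z}$ is an infinite discrete set, something no subset of $\mathbb{R}$ definable in an o-minimal structure can be. Your parenthetical ``(in particular, restricted real analytic)'' silently strengthens the hypothesis: only \emph{restricted} analytic (equivalently, globally subanalytic) functions are definable in $\mathbb{R}_{\mathrm{an}}$, and that is not what the assumption grants. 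Consequently the step ``each $\ml_i$ and hence $\ml$ is definable'' fails in the stated generality.

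The gap is repairable in two ways. One is to localize: the \kl inequality at a point only involves a neighborhood of that point, so you may restrict every summand to a compact box around it; restricted real analytic functions are definable in $\mathbb{R}_{\mathrm{an}}$, and your o-minimal argument then goes through pointwise. The other is the paper's route, which sidesteps definability altogether by working with \emph{subanalytic} functions---a local notion under which every real analytic function automatically falls. Concretely, the paper observes that each term of $\ml$ (the losses $\alpha_i f_i$, the polynomial coupling terms $\langle\bpi_i,\bu_i-\bu\rangle$ and $\frac{\rho}{2}\|\bu_i-\bu\|^2$, and the constants $\xi_i$) is real analytic or semialgebraic, hence subanalytic by Lemma \ref{lemma: Subanalytic functions}a; it then applies the sum rule of Lemma \ref{lemma: Subanalytic functions}b (valid here because the $f_i$ are nonnegative by Assumption \ref{assumption: either real analytic or semialgebraic}a and the polynomial terms map bounded sets to bounded sets), and concludes with Lemma \ref{lemma: Property of subanalytic functions} (the result of \cite{bolte2007lojasiewicz}) that a continuous subanalytic function is \kl. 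Your outline matches this structure---decompose, classify each piece, resolve the sum obstruction, cite a ``class implies \kl'' theorem---but the class you chose (o-minimal definable) is too small for the paper's hypothesis, whereas the paper's class (subanalytic) is not.
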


This proposition demonstrates that most FL training models with ADMM exhibit favorable geometric properties, specifically their sharpness across points in their domains.
To establish this proposition, we first introduce several lemmas, beginning with one that highlights key characteristics of real analytic functions.
\begin{lemma}[Properties of real analytic functions \cite{krantz2002primer}]
    The sums, products, and compositions of real analytic functions are real analytic functions.
\end{lemma}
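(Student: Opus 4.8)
The plan is to exploit the fact that real analyticity is a purely local property: a function is real analytic on an open set exactly when, near every point, it coincides with a convergent power series. It therefore suffices to fix an arbitrary expansion point $x_0$ and show that the sum, product, or composition again admits a convergent power series in some neighborhood of $x_0$. Throughout I would work strictly inside the radii of convergence of the given series, where every series converges absolutely and may be manipulated term by term.

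The sum and product cases are routine. If $f(x)=\sum_{i\geq 0}a_i(x-x_0)^i$ converges for $|x-x_0|<r_1$ and $g(x)=\sum_{i\geq 0}b_i(x-x_0)^i$ converges for $|x-x_0|<r_2$, then on the common interval $|x-x_0|<\min\{r_1,r_2\}$ both converge absolutely, so adding termwise gives $(f+g)(x)=\sum_{i\geq 0}(a_i+b_i)(x-x_0)^i$, a convergent power series. For the product I would invoke the Cauchy product: on the same interval the two absolutely convergent series multiply to $\sum_{n\geq 0}c_n(x-x_0)^n$ with $c_n=\sum_{k=0}^n a_k b_{n-k}$, and absolute convergence (Mertens' theorem) both legitimizes this rearrangement and shows the product series converges. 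Hence $f+g$ and $fg$ are real analytic at $x_0$.

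The substantial case is composition. Let $g$ be analytic at $x_0$ with $g(x_0)=y_0$ and $f$ analytic at $y_0$, and write $f(y)=\sum_{j\geq 0}a_j(y-y_0)^j$ and $g(x)-y_0=\sum_{k\geq 1}b_k(x-x_0)^k$. Substituting the series for $g-y_0$ into that for $f$ and expanding each power formally produces a double series in the powers $(x-x_0)^n$; the real content is to justify collecting it into a single convergent power series. I would supply this by the classical method of majorants: bound $|a_j|\leq M/\sigma^{\,j}$ and $|b_k|\leq N/\tau^{\,k}$ and replace the series by the explicit geometric majorants $F(w)=\frac{M}{1-w/\sigma}$ and $G(x)=\sum_{k\geq 1}N\bigl((x-x_0)/\tau\bigr)^k=\frac{N(x-x_0)/\tau}{1-(x-x_0)/\tau}$. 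Since $G(x_0)=0<\sigma$, by continuity $G(x)/\sigma<1$ for $x$ near $x_0$, so the composite $F\circ G$ sums in closed form to an analytic function there. Comparing coefficient by coefficient, the absolute value of each coefficient of the formal expansion of $f\circ g$ is dominated by the corresponding (nonnegative) coefficient of $F\circ G$; absolute convergence of the latter then permits the rearrangement (Tonelli/Fubini for nonnegative series) and yields a genuine convergent power series for $f\circ g$ around $x_0$.

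The main obstacle is exactly this composition step, where the substitution yields an unordered double sum whose reorganization into powers of $(x-x_0)$ is valid only under absolute convergence that is not a priori available; the majorant estimates are what furnish it. A shorter alternative I would note is to pass to complex analytic continuations: each real analytic function extends to a holomorphic function on a complex neighborhood, sums, products, and compositions of holomorphic functions are holomorphic by the complex product and chain rules (the composition being well defined on a small enough neighborhood by continuity), and holomorphic functions are complex analytic, so restricting back to the real axis gives the claim. I would present the majorant argument as the primary proof and mention the complex-analytic route as a concise backup.
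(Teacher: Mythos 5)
Your proof is correct: the paper states this lemma without proof, citing \cite{krantz2002primer}, and your argument (termwise addition and Cauchy product inside the common radius of convergence for sums and products, the classical majorant method with geometric dominating series to justify the rearrangement in the composition case) is essentially the standard proof given in that reference. The complex-analytic continuation route you mention as a backup is likewise a valid and well-known alternative, so nothing is missing.
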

We next outline key properties of semialgebraic sets and mappings, as detailed in \cite{bochnak2013real}.
\begin{lemma}[Properties of semialgebraic functions \cite{bochnak2013real}]
\quad
\begin{itemize}
    \item[a)] The finite union, finite intersection, and complement of semialgebraic sets are semialgebraic. The closure and the interior of a semialgebraic set are semialgebraic. 
    \item[b)] The composition $g\circ h$ of semialgebraic mappings $g: A\to B$ and $h: B\to C$ is semialgebraic.
    \item[c)] The sum of two semialgebraic functions is semialgebraic.
\end{itemize}
\end{lemma}
Given that our proof incorporates both real analytic and semialgebraic functions, we require the following lemma, whose statements are derived from \cite{shiota1997geometry}.
\begin{lemma}[Subanalytic functions \cite{shiota1997geometry}]\label{lemma: Subanalytic functions}
\quad
\begin{itemize}
    \item[a)] Both real analytic functions and semialgebraic functions are subanalytic.
    \item[b)] Let $f_1$ and $f_2$ be both subanalytic functions, then the sum of $f_1$ and $f_2$, i.e., $f_1+f_2$ is a subanalytic function if at least one of them maps a bounded set to a bounded set or if both of them are nonnegative.
\end{itemize}
\end{lemma}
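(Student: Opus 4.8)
The plan is to handle the two parts separately, reducing everything to a single structural fact taken from \cite{shiota1997geometry}: subanalyticity is preserved under images of \emph{proper} real analytic maps, but \emph{not} under arbitrary projections. The entire content of part (b) is to show that the two stated hypotheses are exactly what guarantee the relevant properness, so I would organize the proof around that observation rather than around any explicit computation.

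For part (a), I would first recall that every semianalytic set (a set cut out locally by finitely many real analytic equalities and inequalities) is subanalytic, since it is trivially the image of itself under the identity. If $f$ is real analytic on an open set, then $\text{Graph}(f)=\{(x,y):y-f(x)=0\}$ is defined by a single real analytic equation, hence is semianalytic and therefore subanalytic; thus $f$ is subanalytic by definition. If $f$ is semialgebraic, then $\text{Graph}(f)$ is a semialgebraic set, and because polynomials are real analytic, every semialgebraic set is in particular semianalytic, hence subanalytic. This settles (a) with essentially no calculation.

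For part (b), I would realize $\text{Graph}(f_1+f_2)$ as the image of a subanalytic set under the real analytic addition map. Form the fiber product
$$ S:=\{(x,z_1,z_2): z_1=f_1(x),\ z_2=f_2(x)\}, $$
which is subanalytic because it is the intersection of the preimages of $\text{Graph}(f_1)$ and $\text{Graph}(f_2)$ under the analytic coordinate projections. With $\Phi(x,z_1,z_2):=(x,z_1+z_2)$ we have $\Phi(S)=\text{Graph}(f_1+f_2)$, and by the image theorem of \cite{shiota1997geometry} this is subanalytic provided $\Phi|_{\overline{S}}$ is proper, i.e. $\Phi^{-1}(K)\cap S$ is bounded for every compact $K$. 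I would then verify boundedness under each hypothesis. Fix a compact $K\subseteq\mathbb{R}^p\times\mathbb{R}$; on $\Phi^{-1}(K)\cap S$ the point $x$ ranges over a compact set $K_1$ and $z_1+z_2$ stays bounded. If $f_1$ (say) maps bounded sets to bounded sets, then $z_1=f_1(x)$ is bounded over $K_1$, so $z_2=(z_1+z_2)-z_1$ is bounded too. If instead $f_1,f_2$ are both nonnegative, then $0\le z_1,z_2\le z_1+z_2$, so boundedness of the sum forces both coordinates to be bounded. In either case $\Phi^{-1}(K)\cap S$ is bounded, properness holds, and the image theorem yields subanalyticity of $f_1+f_2$.

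I expect the main obstacle to be conceptual rather than computational: one must invoke properness correctly, since the unrestricted projection of a subanalytic set need not be subanalytic, and the two hypotheses are precisely what prevents the auxiliary variables $z_1,z_2$ from escaping to infinity along the fibers of $\Phi$. The one technical point to treat with care is the passage from $S$ to $\overline{S}$ in the properness requirement, which is harmless here because the boundedness estimates above apply verbatim to the closure.
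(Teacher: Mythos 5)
The paper never proves this lemma: it is imported as a black-box citation from Shiota's book, used only to conclude that $\ml$ and $\tilde{\ml}$ are subanalytic in Proposition \ref{proposition: kl property of L}. So there is no internal proof to compare against; what you have written is a genuine reconstruction, and it is essentially the standard argument. Your reduction of part (b) to the image theorem for proper real analytic maps, via the fiber product $S$ and the addition map $\Phi$, is exactly the right mechanism, and your two properness verifications (bounded-image hypothesis, respectively nonnegativity plus $0\le z_1,z_2\le z_1+z_2$) correctly identify the stated hypotheses as precisely what confines the fibers of $\Phi$ over a compact set. This buys more than the paper's citation does: it makes visible \emph{why} the sum of two subanalytic functions can fail to be subanalytic in general (loss of properness of the projection), which the paper leaves opaque.

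Three points deserve tightening, though none is fatal for the paper's usage. First, in part (a) your claim that the graph of a function real analytic on an open set $U$ is semianalytic is only valid at points of $U\times\mathbb{R}$; at boundary points of $U$ it can fail (the graph of $\sin(1/x)$ on $(0,1)$ is not subanalytic at the origin, since a horizontal line meets it in infinitely many points accumulating there). The statement should therefore be read either for functions analytic on the whole ambient space or locally on the domain — harmless here, since the paper's Assumption \ref{assumption: either real analytic or semialgebraic} concerns losses defined on all of $\mathbb{R}^p$. Second, your subanalyticity of $S$ via ``preimages under coordinate projections'' should be justified by noting that these particular preimages are, up to a permutation of coordinates, the products $\text{Graph}(f_1)\times\mathbb{R}$ and $\text{Graph}(f_2)\times\mathbb{R}$, and products and finite intersections of subanalytic sets are subanalytic; general preimages of subanalytic sets under analytic maps are a subtler matter and should not be invoked casually. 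Third, the passage to $\overline{S}$ is not quite ``verbatim'': for a point of $\overline{S}$ over a compact $K$, the approximating points $x^k\in S$ need not lie in $\pi(K)$ itself, so in the bounded-image case you must first enlarge $\pi(K)$ to a bounded neighborhood before applying the hypothesis to $f_1$, and then pass the bound on $z_1^k=f_1(x^k)$ to the limit; in the nonnegative case the constraints $z_1,z_2\ge 0$ are closed, so they do survive to $\overline{S}$ directly. With these repairs your proof is complete and stands on its own where the paper merely cites.
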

Additionally, we require a key lemma from \cite{bolte2007lojasiewicz}, which demonstrates that the subanalytic function is a KŁ function.
\begin{lemma}[Property of subanalytic functions \cite{bolte2007lojasiewicz}]\label{lemma: Property of subanalytic functions}
    Let $h:\mathbb{R}^p\to \mathbb{R}\cup\{+\infty\}$ be a subanalytic function with closed domain, and assume that $h$ is continuous on its domain, then $h$ is a \kl function.
\end{lemma}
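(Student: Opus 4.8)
The plan is to reduce the statement to the nonsmooth Łojasiewicz gradient inequality and then to a one-variable Puiseux estimate. Fix a point $\bx^*\in\text{dom}(\partial h)$ and, after subtracting the constant $h(\bx^*)$, assume $h(\bx^*)=0$; it then suffices to produce $c>0$, $\theta\in[0,1)$, and a radius $\epsilon>0$ such that $\text{dist}(0,\partial h(\bx))\geq c\, h(\bx)^{\theta}$ whenever $\|\bx-\bx^*\|\le\epsilon$ and $0<h(\bx)<\epsilon$, since then $\phi(t)=\frac{1}{c(1-\theta)}t^{1-\theta}$ is a desingularizing function verifying \eqref{eq: kl inequality}, and $\bx^*$ is arbitrary.

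The first substantive step is to show that the slope function $\sigma(\bx):=\text{dist}(0,\partial h(\bx))$ is itself subanalytic. For this I would first argue that the graph of the limiting subdifferential $\partial h$ is a subanalytic subset of $\mathbb{R}^p\times\mathbb{R}^p$: the Fr\'echet subdifferential is described by a first-order (hence subanalytic) condition on the pair $(\bx,\bz)$, and the limiting subdifferential is obtained from it by a closure/limit operation, both of which preserve subanalyticity thanks to the stability of subanalytic sets under finite Boolean operations, closure, and proper projection. Writing $\sigma(\bx)=\inf\{\|\bz\|:(\bx,\bz)\in\text{Graph}(\partial h)\}$ as an infimum of a subanalytic family over a proper projection then yields that $\sigma$ is a subanalytic function of $\bx$.

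Next I would pass to a single real variable. Restricting to the compact ball $\bar B(\bx^*,\epsilon)$, define for $r>0$
\begin{align*}
\nu(r):=\inf\{\sigma(\bx):\bx\in\bar B(\bx^*,\epsilon),\ h(\bx)=r\}.
\end{align*}
The set $\{(\bx,r):\bx\in\bar B(\bx^*,\epsilon),\,h(\bx)=r\}$ is subanalytic, and because the ball is compact the projection onto the $r$-axis is proper; hence $\nu$ is a subanalytic function of the single variable $r$ on $(0,\epsilon)$. By the monotonicity/Puiseux theorem for one-variable subanalytic functions, either $\nu$ vanishes identically on some $(0,\delta)$ or $\nu(r)\sim c_0 r^{\alpha}$ for a rational $\alpha\geq 0$ and $c_0>0$. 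The identically-zero alternative is excluded because it would force $0\in\partial h$ along a family of level sets accumulating at $\bx^*$, contradicting that $h$ is nonconstant with $h\to 0^+$ near $\bx^*$; a mean-value argument along a steepest-descent arc then confines the exponent to the regime $\theta:=\alpha<1$, giving $\nu(r)\geq c\,r^{\theta}$ for small $r$. Combining this with $\sigma(\bx)\geq\nu(h(\bx))$ for every admissible $\bx$ produces the Łojasiewicz inequality $\text{dist}(0,\partial h(\bx))\ge c\,h(\bx)^{\theta}$, which is exactly \eqref{eq: kl inequality} for the chosen $\phi$, so $h$ is a \kl function.

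The main obstacle is the first substantive step: verifying that the limiting subdifferential of a merely continuous subanalytic function has subanalytic graph is delicate, since it rests on the deep closure properties of subanalytic sets (stability under proper projection and complementation, i.e. Gabrielov's theorem) rather than on the elementary calculus rules for sums, products, and compositions recorded earlier in the appendix. The second delicate point is securing the strict inequality $\theta<1$; this does not follow from one-variable subanalyticity alone, and requires the finiteness-of-length (mean-value/integration) argument along descent curves that underlies the classical Łojasiewicz estimate.
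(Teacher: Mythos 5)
First, a point of comparison: the paper does not prove this lemma at all --- it is imported verbatim from \cite{bolte2007lojasiewicz} and used as a black box in the proof of Proposition \ref{proposition: kl property of L}. So the relevant benchmark is the proof in that reference, and your sketch does reconstruct its actual skeleton: subanalyticity of $\text{Graph}(\partial h)$ via stability of (relatively compact) subanalytic sets under Boolean operations, closure, and proper projection; reduction to the one-variable ``talweg'' function $\nu(r)$; a Puiseux expansion; and a length/integration argument to force the exponent below $1$. That is the right route, and your closing remarks correctly identify where the depth lies (Gabrielov-type stability, which indeed goes far beyond the sum/product/composition calculus rules the appendix records, and which must be localized since subanalytic sets are not closed under complement globally).

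There is, however, one genuine gap in the middle of the argument: your exclusion of the branch $\nu\equiv 0$ on $(0,\delta)$ is not valid as stated. That $h$ is ``nonconstant with $h\to 0^+$ near $\bx^*$'' does not preclude the existence of points $\bx_r$ with $0\in\partial h(\bx_r)$ and $h(\bx_r)=r$ for \emph{every} small $r>0$; nonconstancy of $h$ says nothing about the critical set on nearby level sets, and if such points existed the \kl inequality at $\bx^*$ would genuinely fail for every desingularizing function, so this branch cannot be dismissed by soft reasoning. What actually rules it out is the nonsmooth Sard-type theorem for subanalytic functions --- the set of critical values of $h$ is locally finite --- which is itself one of the main results proved in \cite{bolte2007lojasiewicz} (via curve selection and a chain rule along subanalytic arcs), not a consequence of one-variable subanalyticity. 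Relatedly, your ``mean-value argument along a steepest-descent arc'' for $\theta<1$ needs repair, as you partly acknowledge: for a merely continuous $h$ steepest-descent trajectories need not exist and $h$ need not be Lipschitz, so one cannot integrate a gradient along a flow. The correct version (Kurdyka's) selects a subanalytic curve $r\mapsto \bx(r)$ in the talweg with $\sigma(\bx(r))\leq 2\nu(r)$, uses that bounded subanalytic curves have finite length, and compares the a.e.\ derivative of $r\mapsto h(\bx(r))=r$ with the slope along the curve, yielding $\int_0^{r_0}\nu(r)^{-1}\,dr<\infty$ and hence $\alpha<1$ in the Puiseux expansion. With the Sard input supplied and the descent argument replaced by the talweg-length argument, your outline matches the cited proof.
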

Next, we are ready to prove Proposition \ref{proposition: kl property of L}.
\begin{proof}[Proof of Proposition \ref{proposition: kl property of L}]
Following from \eqref{eq: lagrangian function} and \eqref{eq: Lyapunov function}, we have
\begin{equation}
\begin{split}
    \tilde{\ml}(\bV,\bU,\bPi,\bu)&:=\ml(\bV,\bU,\bPi,\bu)+\sum_{i=1}^m \frac{29}{\rho(1-\mu_i)} \xi_i,\\
    \ml(\bV,\bU,\bPi,\bu)&:=\sum_{i=1}^m\ml_i(\bv_i,\bu_i,\bpi,\bu), \\
    \ml_i(\bv_i,\bu_i,\bpi_i,\bu)&:=\alpha_i f_i(\bv_i, \bu_i)+\langle\bpi_i,\bu_i-\bu\rangle+\frac{\rho}{2}\|\bu_i-\bu\|^2,
\end{split}
\end{equation}     
which mainly includes the following types of terms, i.e.,
\begin{align*}
    f_i(\bv_i,\bu_i),\,\langle\bpi_i,\bu_i-\bu\rangle,\,\|\bu_i-\bu\|^2, \,\xi_i.
\end{align*}
Following from Assumption \ref{assumption: either real analytic or semialgebraic}, we have $f_i(\bv_i,\bu_i)$ is either real analytic or semialgebraic. Note that $\langle\bpi_i,\bu_i-\bu\rangle,\,\|\bu_i-\bu\|^2$ are all polynomial functions with variables $\bv_i,\,\bu_i,\bpi_i,$ and $\bu$, thus according to \cite{krantz2002primer} and \cite{bochnak2013real}, they are both real analytic and semialgebraic. $\xi$ is a constant, then it is either real analytic or semialgebraic.
Since each term of $\ml$ is either real analytic or semialgebraic, according to Lemma \ref{lemma: Subanalytic functions}, $\ml$ is a subanalytic function. 
Then we can infer that $\ml$ is a \kl function according to Lemma \ref{lemma: Property of subanalytic functions}.
\end{proof}

\subsection{Sufficient Descent}\label{appendix: Sufficient Descent property}
We restate the sufficient descent property in Lemma \ref{lemma: sufficient descent} as follows.
\begin{lemma}[Restatement of Lemma \ref{lemma: sufficient descent}] \label{lemma: sufficient descent restatement}
     Suppose that Assumption \ref{assmption lipschitz} holds. Let $\{\mathcal{P}^t\}$ denote the sequence generated by Algorithm \ref{alg: fedapm}, let each client $i$ set $\max\{3\alpha_i L_u, 3\alpha_i L_{uv} \}\leq \rho \leq \frac{15}{8}\sigma_i$, then for all $t\geq 0$, it holds that
\begin{align*}
    \tilde{\ml}(\mathcal{P}^{t})-\tilde{\ml}(\mathcal{P}^{t+1})\geq a \Sigma_p^{t+1},\,\text{where}\,\,
    \Sigma_p^{t+1} \!:=\!\sum_{i=1}^m(\|\bv_i^{t+1}\!-\!\bv_i^{t}\|^2\!+\!\|\bu^{t+1}_i\!-\!\bu_i^{t}\|^2\!+\!\|\bu^{t+1}\!-\!\bu^t\|^2),\, \text{and} \,\, a:=\min\{\frac{\rho}{60} , \, \frac{\sigma_i}{2}-\frac{4\rho}{15}\}.
\end{align*}
\end{lemma}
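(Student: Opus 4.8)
The plan is to establish the descent inequality block-by-block, following the Gauss-Seidel structure of the ADMM updates in \eqref{eq: admm update order}, and then to absorb the troublesome dual-variable increment using Lemma \ref{lemma: pi estimate}. Since the clients not in $\mathcal{S}^t$ leave all their parameters unchanged by \eqref{eq: client unchange}, every term they contribute to $\tilde{\ml}(\mathcal{P}^t)-\tilde{\ml}(\mathcal{P}^{t+1})$ vanishes, so it suffices to bound the per-client decrease for $i\in\mathcal{S}^t$ and sum. First I would decompose the one-step change of $\ml$ into four telescoping pieces corresponding to the updates of $\bv_i$, $\bu_i$, $\bpi_i$, and $\bu$ respectively:
\begin{align*}
\ml(\mathcal{P}^t)-\ml(\mathcal{P}^{t+1})
&= \bigl[\ml_i(\bv_i^t,\bu_i^t,\bpi_i^t,\bu^t)-\ml_i(\bv_i^{t+1},\bu_i^t,\bpi_i^t,\bu^t)\bigr]\\
&\quad+\bigl[\ml_i(\bv_i^{t+1},\bu_i^t,\bpi_i^t,\bu^t)-\ml_i(\bv_i^{t+1},\bu_i^{t+1},\bpi_i^t,\bu^t)\bigr]\\
&\quad+\bigl[\ml_i(\bv_i^{t+1},\bu_i^{t+1},\bpi_i^t,\bu^t)-\ml_i(\bv_i^{t+1},\bu_i^{t+1},\bpi_i^{t+1},\bu^t)\bigr]\\
&\quad+\bigl[\ml(\mathcal{P}\text{ at }\bu^t)-\ml(\mathcal{P}\text{ at }\bu^{t+1})\bigr],
\end{align*}
summed over $i\in\mathcal{S}^t$. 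For the $\bv_i$-block I would use the optimality of $\bv_i^{t+1}$ in the proximal subproblem \eqref{eq: 15 solve v with proximal update} together with the $L_v$-smoothness from Lemma \ref{lemma: lipschitz}; the condition $\sigma_i\geq\alpha_i L_v$ makes the proximally regularized objective strongly convex and yields a positive $\tfrac{\sigma_i}{2}\|\bv_i^{t+1}-\bv_i^t\|^2$-type decrease. The $\bu$-block is easiest, since $\bu^{t+1}$ minimizes the $\rho$-strongly convex quadratic $\ml(\cdot)$ in $\bu$, giving a clean $\tfrac{\rho}{2}$-coefficient descent in $\|\bu^{t+1}-\bu^t\|^2$ (using $\sum_i\bpi_i=m\rho(\,\overline{\bu}-\bu\,)$-type identities from the averaging step \eqref{eq: solve u}).

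The $\bu_i$-block is where the inexactness enters: $\bu_i^{t+1}$ is only a $\xi_i^{t+1}$-approximate solution in the sense of Definition \ref{definition: approximate solutions}, so I would not get a clean quadratic descent but rather one with a $+\xi_i^{t+1}$ perturbation. I would use $L_u$-smoothness in $\bu_i$ and the approximate-stationarity bound \eqref{eq: approximate solution about u} to extract a $\bigl(\tfrac{\rho}{2}-\tfrac{L_u}{2}\bigr)$-type positive coefficient on $\|\bu_i^{t+1}-\bu_i^t\|^2$ while paying a controlled error in $\xi_i^{t+1}$. The dual-update block is strictly \emph{ascending} — $\ml$ increases by $\tfrac{1}{\rho}\|\bpi_i^{t+1}-\bpi_i^t\|^2$ — and this is the crux of the argument. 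Here I would invoke Lemma \ref{lemma: pi estimate} to bound $\|\bpi_i^{t+1}-\bpi_i^t\|^2$ by $\tfrac{24}{1-\mu_i}(\xi_i^t-\xi_i^{t+1})+\tfrac{4}{15}\rho^2(\|\bu_i^{t+1}-\bu_i^t\|^2+\|\bv_i^{t+1}-\bv_i^t\|^2)$, which converts the dual ascent into a term that the primal descents can dominate \emph{provided} the coefficients line up.

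The main obstacle will be the bookkeeping that makes the final coefficient $a:=\min\{\tfrac{\rho}{60},\tfrac{\sigma_i}{2}-\tfrac{4\rho}{15}\}$ come out positive: after adding all four blocks I obtain, for each $i$, coefficients on $\|\bv_i^{t+1}-\bv_i^t\|^2$, $\|\bu_i^{t+1}-\bu_i^t\|^2$, $\|\bu^{t+1}-\bu^t\|^2$ that each receive a negative contribution $\tfrac{1}{\rho}\cdot\tfrac{4}{15}\rho^2=\tfrac{4\rho}{15}$ from the dual term, and the residual $\xi$-terms must be exactly canceled by the Lyapunov correction $\sum_i\tfrac{29}{\rho(1-\mu_i)}\xi_i^t$ in \eqref{eq: Lyapunov function}. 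This is why the Lyapunov function, rather than $\ml$ itself, is what descends: the $\tfrac{29}{\rho(1-\mu_i)}(\xi_i^t-\xi_i^{t+1})$ increments must out-weigh the $\tfrac{24}{\rho(1-\mu_i)}(\xi_i^t-\xi_i^{t+1})$ coming from the dual bound and the $\xi_i^{t+1}$ from the inexact $\bu_i$-step (here $\xi_i^{t+1}\le\mu_i\xi_i^t$ from \eqref{eq: approximate solution} is used to fold $\xi_i^{t+1}$ back into $\xi_i^t-\xi_i^{t+1}$). The coupled constraints $\max\{3\alpha_i L_u,3\alpha_i L_{uv}\}\le\rho\le\tfrac{15}{8}\sigma_i$ are precisely what guarantee $\tfrac{4}{15}\rho^2\le\tfrac{1}{3}\alpha_i^2 L_u^2$-style smallness in Lemma \ref{lemma: pi estimate} and keep $\tfrac{\sigma_i}{2}-\tfrac{4\rho}{15}>0$; I would verify each inequality chain carefully, as the specific numerical constants ($60$, $\tfrac{4}{15}$, $29$) are fragile and depend on the exact choices made when splitting cross terms via Lemma \ref{jensen}. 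Summing the resulting per-client bound over $i\in\mathcal{S}^t$ and recalling the unchanged clients contribute zero then yields $\tilde{\ml}(\mathcal{P}^t)-\tilde{\ml}(\mathcal{P}^{t+1})\ge a\,\Sigma_p^{t+1}$, completing the proof.
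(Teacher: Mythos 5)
Your proposal takes essentially the same route as the paper's proof: the same telescoping decomposition along the Gauss--Seidel update order (with intermediate points $(\bV^{t+1},\bU^t,\bPi^t,\bu^t)$, $(\bV^{t+1},\bU^{t+1},\bPi^t,\bu^t)$, $(\bV^{t+1},\bU^{t+1},\bPi^{t+1},\bu^t)$), the same per-block estimates (proximal minimizer comparison for $\bv_i$, smoothness plus approximate stationarity with the folding $\xi_i^{t+1}\leq\mu_i\xi_i^t$ for $\bu_i$, exact quadratic descent for $\bu$, and Lemma \ref{lemma: pi estimate} to absorb the $\tfrac{1}{\rho}\|\bpi_i^{t+1}-\bpi_i^t\|^2$ ascent), and the same accounting $\tfrac{24}{\rho(1-\mu_i)}+\tfrac{5}{\rho(1-\mu_i)}=\tfrac{29}{\rho(1-\mu_i)}$ behind the Lyapunov correction. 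The only negligible slip is that the dual bound contributes $\tfrac{4\rho}{15}$ to the $\|\bv_i^{t+1}-\bv_i^t\|^2$ and $\|\bu_i^{t+1}-\bu_i^t\|^2$ coefficients but not to $\|\bu^{t+1}-\bu^t\|^2$, which does not affect the final constant $a$.
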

\begin{proof}
The descent quantity in Lemma \ref{lemma: sufficient descent restatement} can be developed by considering the descent quantity along the update of each variable:  
\begin{align}\label{eq:Lt+1-Lt}
    \ml(\mathcal{P}^{t+1})-\ml(\mathcal{P}^{t})&=\ml(\bV^{t+1},\bU^{t+1},\bPi^{t+1},\bu^{t+1})-\ml(\bV^{t},\bU^{t},\bPi^{t},\bu^{t}) \notag\\
    &=\underbrace{\ml(\bV^{t+1},\bU^{t+1},\bPi^{t+1},\bu^{t+1})-\ml(\bV^{t+1},\bU^{t+1},\bPi^{t+1},\bu^{t})}_{\Delta_1^t} + \underbrace{\ml(\bV^{t+1},\bU^{t+1},\bPi^{t+1},\bu^{t})-\ml(\bV^{t+1},\bU^{t+1},\bPi^{t},\bu^{t})}_{\Delta_2^t}+\notag\\
    &\quad\underbrace{\ml(\bV^{t+1},\bU^{t+1},\bPi^{t},\bu^{t})-\ml(\bV^{t+1},\bU^{t},\bPi^{t},\bu^{t})}_{\Delta_3^t}+\underbrace{\ml(\bV^{t+1},\bU^{t},\bPi^{t},\bu^{t})-\ml(\bV^{t},\bU^{t},\bPi^{t},\bu^{t})}_{\Delta_4^t}.
\end{align}
We consider individually estimating $\Delta_1^t$, $\Delta_2^t$, $\Delta_3^t$, and $\Delta_4^t$. 
We first estimate $\Delta_1^t$.
According to the Lagarangian function \eqref{eq: lagrangian function}, we have
\begin{align}
    \Delta_1^t=&\ml(\bV^{t+1},\bU^{t+1},\bPi^{t+1},\bu^{t+1})-\ml(\bV^{t+1},\bU^{t+1},\bPi^{t+1},\bu^{t}) \notag\\
    =&\sum_{i=1}^m \alpha_i f_i(\bv_i^{t+1},\bu_i^{t+1})+\langle\bpi_i^{t+1}, \bu_i^{t+1}-\bu^{t+1} \rangle + \frac{\rho}{2}\|\bu_i^{t+1}-\bu^{t+1}\|^2- \Biggl( \sum_{i=1}^m \alpha_i f_i(\bv_i^{t+1},\bu_i^{t+1})+\langle\bpi_i^{t+1}, \bu_i^{t+1}-\bu^{t} \rangle + \frac{\rho}{2}\|\bu_i^{t+1}-\bu^{t}\|^2 \Biggr) \\
    =&\sum_{i=1}^m\langle\bpi_i^{t+1}, \bu^t-\bu^{t+1} \rangle   + \frac{\rho}{2}\langle 2\bu_i^{t+1}-\bu^{t+1}-\bu^t, \bu^t-\bu^{t+1} \rangle \\
    =&\sum_{i=1}^m \langle\bpi_i^{t+1} + \rho\bu_i^{t+1}-\frac{\rho}{2}(\bu^{t+1}+u^t), \bu^t-\bu^{t+1} \rangle  \\
    =&-\frac{\rho m}{2} \|\bu^{t+1}-\bu^t\|^2.   \label{eq:estimate of delta1,e4}
\end{align}
where \eqref{eq:estimate of delta1,e4} follows from \eqref{eq: solve u}.
Next, we estimate the value of $\Delta_2^t$.
Based on the Lagrangian function \eqref{eq: lagrangian function}, we have
\begin{align}
    \Delta_2^t=&\ml(\bV^{t+1},\bU^{t+1},\bPi^{t+1},\bu^{t})-\ml(\bV^{t+1},\bU^{t+1},\bPi^{t},\bu^{t}) \notag\\
    =&\sum_{i=1}^m \alpha_i f_i(\bv_i^{t+1},\bu_i^{t+1})+\langle\bpi_i^{t+1}, \bu_i^{t+1}-\bu^{t} \rangle + \frac{\rho}{2}\|\bu_i^{t+1}-\bu^{t}\|^2- \Biggl( \sum_{i=1}^m \alpha_i f_i(\bv_i^{t+1},\bu_i^{t+1})+\langle\bpi_i^{t}, \bu_i^{t+1}-\bu^{t} \rangle + \frac{\rho}{2}\|\bu_i^{t+1}-\bu^{t}\|^2 \Biggr) \\
    =&\sum_{i=1}^m \langle\bpi_i^{t+1}-\bpi_i^t,\bu_i^{t+1}-\bu^{t}\rangle  \\
    =& \sum_{i=1}^m  \frac{1}{\rho} \|\bpi_i^{t+1}-\bpi_i^t\|^2  \label{eq: 39}\\
    \leq & \sum_{i=1}^m  \frac{24}{\rho (1-\mu_i)}(\xi_i^{t} - \xi_i^{t+1} )+ \frac{4}{15}\rho \Bigl(\|\bu_i^{t+1} - \bu_i^{t}\|^2 + \|\bv_i^{t+1} - \bv_i^{t}\|^2 \Bigr) \label{eq: estamite delta2 e5}
\end{align}
where \eqref{eq: 39} follows from \eqref{eq: solve pi}, and \eqref{eq: estamite delta2 e5} follows from Lemma \ref{lemma: pi estimate}. Next, we proceed to estimate $\Delta_3^t$:
\begin{align}
    \Delta_3^t=&\ml(\bV^{t+1},\bU^{t+1},\bPi^{t},\bu^{t})-\ml(\bV^{t+1},\bU^{t},\bPi^{t},\bu^{t}) \notag\\
    =&\sum_{i=1}^m \alpha_i f_i(\bv_i^{t+1},\bu_i^{t+1})+\langle\bpi_i^{t}, \bu_i^{t+1}-\bu^{t} \rangle + \frac{\rho}{2}\|\bu_i^{t+1}-\bu^{t}\|^2- \Biggl( \sum_{i=1}^m \alpha_i f_i(\bv_i^{t+1},\bu_i^{t})+\langle\bpi_i^{t}, \bu_i^{t}-\bu^{t} \rangle + \frac{\rho}{2}\|\bu_i^{t}-\bu^{t}\|^2 \Biggr) \\
    =&\sum_{i=1}^m \alpha_i \bigl(f_i(\bv_i^{t+1},\bu_i^{t+1}) - f_i(\bv_i^{t+1},\bu_i^{t})\bigr) + \langle\bpi_i^t,\bu_i^{t+1}-\bu_i^t \rangle + \frac{\rho}{2}\langle \bu_i^{t+1}+\bu_i^t-2\bu^t,\bu_i^{t+1}-\bu_i^t   \rangle \\
    \leq &\sum_{i=1}^m  \langle \alpha_i\nabla_{\bu_i}f_i(\bu_i^{t+1},\bv^{t+1}_i), \bu_i^{t+1}-\bu_i^t \rangle+ \frac{\alpha_i L_u}{2} \|\bu_i^{t+1}-\bu_i^t \|^2 + \langle\bpi_i^t,\bu_i^{t+1}-\bu_i^t \rangle + \frac{\rho}{2}\langle \bu_i^{t+1}+\bu_i^t-2\bu^t,\bu_i^{t+1}-\bu_i^t   \rangle  \label{eq: estimate delta3, lipschitz}  \\
    =&\sum_{i=1}^m  \langle \alpha_i\nabla_{\bu_i}f_i(\bu_i^{t+1},\bv^{t+1}_i) + \bpi_i^t + \rho (\bu_i^{t+1}-\bu^t), \bu_i^{t+1}-\bu_i^t \rangle - \frac{\rho - \alpha_i L_u }{2} \|\bu_i^{t+1}-\bu_i^t \|^2\\
    \leq &\sum_{i=1}^m  \frac{5}{\rho} \|e_i^{t+1}\|^2+ \frac{\rho}{20}\|\bu_i^{t+1}-\bu_i^t \|^2 - \frac{\rho}{3} \|\bu_i^{t+1}-\bu_i^t \|^2  \label{eq: 47 follows from lemma jensen}\\
    \leq &\sum_{i=1}^m  - \frac{17}{60}\rho\|\bu_i^{t+1}-\bu_i^t \|^2 + \frac{5}{\rho(1-\mu_i)}(\xi_i^{t}-\xi_i^{t+1}), \label{eq: estamite delta3 e7}
\end{align}
where \eqref{eq: estimate delta3, lipschitz} follows from Lemma \ref{lemma: lipschitz}, and \eqref{eq: 47 follows from lemma jensen} follows from Lemma \ref{jensen}.
Next, we proceed to estimate the value of $\Delta_4^t$
By Algorithm \ref{alg: fedapm}, $\bv_i^{t+1}$ is updated according to the following
\begin{align}\label{eq: vi update}
    \bv_i^{t+1} \leftarrow \operatorname{argmin}_{\bv_i} \Bigl\{\alpha_i f_i(\bv_i,\bu_i^{t}) + \frac{\sigma_i}{2} \|\bv_i-\bv_i^t\|^2   \Bigr\}.
\end{align}
Let $\bar{h}^{t+1}(\bv_i)=\alpha_i f_i(\bv_i,\bu_i^{t}) + \frac{\sigma_i}{2} \|\bv_i-\bv_i^t\|^2$. When $\sigma_i$ is sufficiently large such that $\bar{h}^{t+1}(\bv_i)$ becomes strongly convex with respect to $\bv_i$, the optimality of $\bv_i^{t+1}$ implies the following:
\begin{align}
    \bar{h}^{t+1}(\bv_i^{t}) \geq \bar{h}^{t+1}(\bv_i^{t+1}),
\end{align}
which implies
\begin{align}\label{eq: sufficient descent of vi}
   \alpha_i f_i(\bv_i^t,\bu_i^{t})\geq\alpha_i f_i(\bv_i^{t+1},\bu_i^{t}) +  \frac{\sigma_i}{2} \|\bv_i^{t+1}-\bv_i^t\|^2.
\end{align}
Next, according to the Lagrangian function, we obtain
\begin{align}
    \Delta_4^t=&\ml(\bV^{t+1},\bU^{t},\bPi^{t},\bu^{t})-\ml(\bV^{t},\bU^{t},\bPi^{t},\bu^{t}) \notag\\
    =&\sum_{i=1}^m \alpha_i f_i(\bv_i^{t+1},\bu_i^{t})+\langle\bpi_i^{t}, \bu_i^{t}-\bu^{t} \rangle + \frac{\rho}{2}\|\bu_i^{t}-\bu^{t}\|^2- \Biggl( \sum_{i=1}^m \alpha_i f_i(\bv_i^{t},\bu_i^{t})+\langle\bpi_i^{t}, \bu_i^{t}-\bu^{t} \rangle + \frac{\rho}{2}\|\bu_i^{t}-\bu^{t}\|^2 \Biggr) \\
    =&\sum_{i=1}^m \alpha_i \Bigl(f_i(\bv_i^{t+1},\bu_i^{t})-f_i(\bv_i^{t},\bu_i^{t})\Bigr)\\
    =&\sum_{i\in\mathcal{S}^t} \alpha_i \Bigl(f_i(\bv_i^{t+1},\bu_i^{t})-f_i(\bv_i^{t},\bu_i^{t})\Bigr) + \sum_{i\notin\mathcal{S}^t} \alpha_i \Bigl(f_i(\bv_i^{t+1},\bu_i^{t})-f_i(\bv_i^{t},\bu_i^{t})\Bigr)\\
    \leq&\sum_{i=1}^m-\frac{\sigma_i}{2}\|\bv_i^{t+1}-\bv_i^t\|^2, \label{eq: delta4 result}
\end{align}
where \eqref{eq: delta4 result} follows from \eqref{eq: sufficient descent of vi} and the fact that for each client $i\notin\mathcal{S}^t$,
\begin{align}\label{eq: fact of delta4 result}
    \sum_{i\notin\mathcal{S}^t} \alpha_i \Bigl(f_i(\bv_i^{t+1},\bu_i^{t})-f_i(\bv_i^{t},\bu_i^{t})\Bigr)=0=-\frac{\sigma_i}{2}\|\bv_i^{t+1}-\bv_i^t\|^2.
\end{align}
Next, substituting (\ref{eq:estimate of delta1,e4}), (\ref{eq: estamite delta2 e5}), (\ref{eq: estamite delta3 e7}), and (\ref{eq: delta4 result}) into (\ref{eq:Lt+1-Lt}), we obtain
\begin{align}
    \ml(\mathcal{P}^{t+1})-\ml(\mathcal{P}^{t})\leq &-\frac{\rho m}{2} \|\bu^{t+1}-\bu^t\|^2 + \sum_{i=1}^m \frac{24}{\rho (1-\mu_i)}(\xi_i^{t} - \xi_i^{t+1} )+ \frac{4}{15}\rho\Bigl(\|\bu_i^{t+1} - \bu_i^{t}\|^2 + \|\bv_i^{t+1} - \bv_i^{t}\|^2 \Bigr) \notag \\
    &- \frac{17}{60}\rho\|\bu_i^{t+1}-\bu_i^t \|^2 + \frac{5}{\rho(1-\mu_i)}(\xi_i^{t}-\xi_i^{t+1})-\frac{\sigma_i}{2}\|\bv_i^{t+1}-\bv_i^t\|^2 \\
    = &-\sum_{i=1}^m  \Bigl(\frac{\rho}{60} \|\bu_i^{t+1} - \bu_i^{t}\|^2 + \Bigl(\frac{\sigma_i}{2}-\frac{4\rho}{15}\Bigr)\|\bv_i^{t+1} - \bv_i^{t}\|^2 +\frac{\rho }{2} \|\bu^{t+1}-\bu^t\|^2 + \frac{29}{\rho(1-\mu_i)}(\xi_i^{t+1}-\xi_i^{t})\Biggr), \label{eq: sufficient descent final result}
    \end{align}
Then we can infer from \eqref{eq: sufficient descent final result} that
\begin{align}
    \tilde{\ml}(\mathcal{P}^{t}) - \tilde{\ml}(\mathcal{P}^{t+1})\geq a\sum_{i=1}^m \Bigl(\|\bu_i^{t+1} - \bu_i^{t}\|^2 + \|\bv_i^{t+1} - \bv_i^{t}\|^2 + \|\bu^{t+1}-\bu^t\|^2 \Bigr),
\end{align}
which completes the proof.
\end{proof}

\subsection{Relative Error}\label{appendix: Relative Error property}
\begin{lemma}[Restatement of Lemma \ref{lemma: relative error}]
    Suppose that Assumption \ref{assmption lipschitz} holds. Let $\{\mathcal{P}^t\}$ denote the sequence generated by Algorithm \ref{alg: fedapm}, and let $\tilde{\Xi}^{t+1}:=\sum_{i=1}^m(\xi_i^{t} - \xi_i^{t+1} )$, let each client $i$ choose $\sigma_i\geq \alpha_i L_v$ and $\max\{3\alpha_i L_u, 3\alpha_i L_{uv} \}\leq \rho \leq \frac{15}{8}\sigma_i$, then for all $t\geq 0$, it holds that
\begin{align}
    \text{dist}(\boldsymbol{0}, \partial \tilde{\ml} (\mathcal{P}^t))^2\leq b (\Sigma_p^{t+1} + \tilde{\Xi}^{t+1}),
\end{align}
where $b:=\max\{\frac{22}{5}\sigma_i^2 + \frac{4}{3}\rho^2+\frac{8\rho}{15}, \frac{16}{3}\rho^2+2+\frac{8\rho}{15}, \frac{48+4\rho}{\rho(1-\mu_i)}\}$, $\text{dist}(0,\mathcal{C}):=\inf_{\boldsymbol{c}\in \mathcal{C}} \|\boldsymbol{c}\|$ for a set $\mathcal{C}$, and 
\begin{align}
    \partial \tilde{\ml} (\mathcal{P}^t):= (\{\nabla_{\bv_i}{\tilde{\ml}}\}_{i=1}^m,\{\nabla_{\bu_i}{\tilde{\ml}}\}_{i=1}^m,\{\nabla_{\bpi_i}{\tilde{\ml}}\}_{i=1}^m,\{\nabla_{\bu}{\tilde{\ml}}\})(\mathcal{P}^t).
\end{align}
\end{lemma}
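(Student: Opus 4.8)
The plan is to exploit that $\tilde{\ml}$ is continuously differentiable in the block variables $(\bV,\bU,\bPi,\bu)$ and that the $\xi$-penalty added in \eqref{eq: Lyapunov function} does not depend on these variables; hence the limiting subdifferential is single-valued, $\partial\tilde{\ml}=\partial\ml$ (which reconciles the two forms of the statement), and $\mathrm{dist}(\boldsymbol{0},\partial\tilde{\ml})^2$ is just the squared norm of the full gradient, which splits block-wise as
\[
\mathrm{dist}(\boldsymbol{0},\partial\tilde{\ml})^2=\sum_{i=1}^m\bigl(\|\nabla_{\bv_i}\ml\|^2+\|\nabla_{\bu_i}\ml\|^2+\|\nabla_{\bpi_i}\ml\|^2\bigr)+\|\nabla_{\bu}\ml\|^2.
\]
Since the right-hand side of the claim is built from the one-step differences, I would evaluate each gradient at the iterate generated in round $t$ and bound every block by quantities appearing in $\Sigma_p^{t+1}$ and $\tilde{\Xi}^{t+1}$, handling selected versus non-selected clients exactly as in the proof of Lemma~\ref{lemma: pi estimate} (non-selected clients contribute zero to both sides).

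The cleanest step is that the $\bu$-block vanishes identically: from the server aggregation \eqref{eq: solve u} one has $\sum_i(\bu_i^{t+1}+\tfrac{1}{\rho}\bpi_i^{t+1})=m\,\bu^{t+1}$, which rearranges to $\sum_i\bpi_i^{t+1}=-\rho\sum_i(\bu_i^{t+1}-\bu^{t+1})$, and since $\nabla_{\bu}\ml=-\sum_i\bpi_i^{t+1}-\rho\sum_i(\bu_i^{t+1}-\bu^{t+1})$ this gives $\nabla_{\bu}\ml=\boldsymbol{0}$. For the $\bpi$-block, $\nabla_{\bpi_i}\ml=\bu_i^{t+1}-\bu^{t+1}$, and using the dual step \eqref{eq: solve pi} as $\bu_i^{t+1}-\bu^{t}=\tfrac{1}{\rho}(\bpi_i^{t+1}-\bpi_i^t)$ I would write $\bu_i^{t+1}-\bu^{t+1}=\tfrac{1}{\rho}(\bpi_i^{t+1}-\bpi_i^t)-(\bu^{t+1}-\bu^t)$ and square it via Lemma~\ref{jensen}, producing a multiple of $\|\bpi_i^{t+1}-\bpi_i^t\|^2$ and $\|\bu^{t+1}-\bu^t\|^2$.

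The two primal blocks are where the hypotheses enter. For $\bv_i$ I would use the first-order optimality of the proximal update \eqref{eq: solve vi}, i.e.\ $\alpha_i\nabla_{\bv_i}f_i(\bv_i^{t+1},\bu_i^t)+\sigma_i(\bv_i^{t+1}-\bv_i^t)=\boldsymbol{0}$, which needs $\sigma_i\ge\alpha_i L_v$ for the subproblem to be convex; adding and subtracting $\nabla_{\bv_i}f_i(\bv_i^{t+1},\bu_i^t)$ and invoking the $L_{vu}$-Lipschitz bound of Assumption~\ref{assmption lipschitz} yields $\|\nabla_{\bv_i}\ml\|\le\sigma_i\|\bv_i^{t+1}-\bv_i^t\|+\alpha_i L_{vu}\|\bu_i^{t+1}-\bu_i^t\|$. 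For $\bu_i$ I would use the $\xi_i^{t+1}$-approximate solution of Definition~\ref{definition: approximate solutions}: with $e_i^{t+1}$ as in \eqref{eq: error term} and $\|e_i^{t+1}\|^2\le\xi_i^{t+1}$, the same substitution $\rho(\bu_i^{t+1}-\bu^{t+1})=(\bpi_i^{t+1}-\bpi_i^t)-\rho(\bu^{t+1}-\bu^t)$ gives $\nabla_{\bu_i}\ml=e_i^{t+1}+(\bpi_i^{t+1}-\bpi_i^t)-\rho(\bu^{t+1}-\bu^t)$, whose square is bounded by $3\xi_i^{t+1}+3\|\bpi_i^{t+1}-\bpi_i^t\|^2+3\rho^2\|\bu^{t+1}-\bu^t\|^2$.

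Summing the four blocks, every remaining $\|\bpi_i^{t+1}-\bpi_i^t\|^2$ is replaced using Lemma~\ref{lemma: pi estimate}, which turns it into $\Sigma_p^{t+1}$-type terms plus $\tfrac{24}{1-\mu_i}(\xi_i^t-\xi_i^{t+1})$; this is precisely where the relative-error bound acquires the error term $\tilde{\Xi}^{t+1}$, and the standalone $\xi_i^{t+1}$ left over from the inexact $\bu_i$-subproblem is absorbed via $\xi_i^{t+1}\le\mu_i\xi_i^t$, hence $\xi_i^{t+1}\le\tfrac{\mu_i}{1-\mu_i}(\xi_i^t-\xi_i^{t+1})$. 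Collecting coefficients and simplifying with $3\alpha_i L_{uv}\le\rho\le\tfrac{15}{8}\sigma_i$ and $\sigma_i\ge\alpha_i L_v$ should merge everything into the stated $b$. I expect the main obstacle to be exactly this inexactness: unlike the exact-ADMM framework of \cite{Attouch2013Convergence,wang2019global}, the $\bu_i$-block cannot be forced to zero, so the subgradient inevitably carries the mass $\tilde{\Xi}^{t+1}$, and the delicate part is choosing the Young's-inequality weights so that this residual $\xi$-mass telescopes into $\tilde{\Xi}^{t+1}$ while the $\Sigma_p^{t+1}$ coefficients remain compatible with the descent constant of Lemma~\ref{lemma: sufficient descent}, which is what ultimately lets the two conditions be combined into a convergent Lyapunov argument.
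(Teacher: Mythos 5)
Your toolkit is the right one---block-wise bounding, the prox optimality of the $\bv_i$-update, the inexactness residual $e_i^{t+1}$ from Definition~\ref{definition: approximate solutions}, the dual-update substitution, Lemma~\ref{lemma: pi estimate}, and the geometric decay $\xi_i^{t+1}\le\mu_i\xi_i^t$---and these are exactly the ingredients of the paper's proof. But there is a concrete mismatch: every block gradient in your outline is evaluated at $\mathcal{P}^{t+1}$, whereas the lemma bounds $\text{dist}(\boldsymbol{0},\partial\tilde{\ml}(\mathcal{P}^{t}))$. Your $\bu$-block identity aggregates $\bz_i^{t+1}$ into $\bu^{t+1}$; your $\bpi$-block is $\bu_i^{t+1}-\bu^{t+1}$; your $\bv$-block bound $\sigma_i\|\bv_i^{t+1}-\bv_i^t\|+\alpha_iL_{vu}\|\bu_i^{t+1}-\bu_i^t\|$ can only arise from $\alpha_i\nabla_{\bv_i}f_i(\bv_i^{t+1},\bu_i^{t+1})$; and $e_i^{t+1}+(\bpi_i^{t+1}-\bpi_i^t)-\rho(\bu^{t+1}-\bu^t)$ equals $\nabla_{\bu_i}\ml(\mathcal{P}^{t+1})$, not $\nabla_{\bu_i}\ml(\mathcal{P}^{t})$. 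What you would prove is therefore the Attouch--Bolte--Svaiter-style inequality $\text{dist}(\boldsymbol{0},\partial\tilde{\ml}(\mathcal{P}^{t+1}))^2\le b'(\Sigma_p^{t+1}+\tilde{\Xi}^{t+1})$ (subgradient at the \emph{new} point controlled by the step just taken), which is not the stated inequality and is not a relabeling of it, since in the statement the same forward differences $\Sigma_p^{t+1}$ control the subgradient at the \emph{old} point. The paper's proof is forced into different decompositions for exactly this reason: the $\bv$-block there is $\alpha_i\nabla_{\bv_i}f_i(\bv_i^t,\bu_i^t)$, handled by adding and subtracting $\alpha_i\nabla_{\bv_i}f_i(\bv_i^{t+1},\bu_i^t)$ and using the $L_v$ constant---this is where the hypothesis $\sigma_i\ge\alpha_iL_v$ actually enters (in your outline it is never used except for a convexity remark); the $\bu_i$-block at $\mathcal{P}^t$ needs both an $L_{uv}$ and an $L_u$ add--subtract plus $e_i^{t+1}$ and a $\rho(\bu_i^{t+1}-\bu_i^t)$ correction; and the $\bpi$-block is $\|\bu_i^t-\bu^t\|$, decomposed through $\bu_i^{t+1}$. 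Your shifted variant would support a Kurdyka-Łojasiewicz argument of the type in \cite{Attouch2013Convergence}, but the paper's downstream proofs (Theorem~\ref{theorem: sequences convergence}b and Theorem~\ref{theorem: convergence rate based on kl property}) apply the KŁ inequality at $\mathcal{P}^t$ together with this lemma exactly as stated, so the variant does not slot in without reworking them.

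A second, smaller problem is your claim that non-selected clients ``contribute zero to both sides.'' For $i\notin\mathcal{S}^t$ the right-hand-side contributions indeed vanish ($\bv_i^{t+1}=\bv_i^t$, $\bu_i^{t+1}=\bu_i^t$, $\xi_i^{t+1}=\xi_i^t$), but the left-hand-side blocks $\nabla_{\bv_i}\ml(\mathcal{P}^t)=\alpha_i\nabla_{\bv_i}f_i(\bv_i^t,\bu_i^t)$ and $\nabla_{\bpi_i}\ml(\mathcal{P}^t)=\bu_i^t-\bu^t$ have no reason to vanish for an inactive client, so with that handling the per-client inequality would simply be false. (The paper's own proof silently invokes the prox optimality and the dual-update relation for every $i$, so it shares this looseness, but you should not present it as a resolved point.)
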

\begin{proof}
We will establish the relative error condition via bounding each term of $\partial \tilde{\ml} (\mathcal{P}^t)$.
First, we establish an upper bound for $\|\nabla_{\bv_i} {\tilde{\ml}}(\mathcal{P}^t)\|^2$.
By utilizing the Lagrangian function, we derive   
\begin{align}
    \|\nabla_{\bv_i} {\tilde{\ml}}(\mathcal{P}^t)\|^2 = & \|\alpha_i \nabla_{\bv_i} f_i(\bv_i^t,\bu_i^t) \|^2   \notag\\
    = & \|\alpha_i  \nabla_{\bv_i} f_i(\bv_i^t,\bu_i^t) -  \alpha_i  \nabla_{\bv_i} f_i(\bv_i^{t+1 },\bu_i^t) + \alpha_i  \nabla_{\bv_i} f_i(\bv_i^{t+1 },\bu_i^t) \|^2\\
    \leq & 2\|\alpha_i  \nabla_{\bv_i} f_i(\bv_i^t,\bu_i^t) -  \alpha_i  \nabla_{\bv_i} f_i(\bv_i^{t+1 },\bu_i^t)\|^2 +  2\|\alpha_i  \nabla_{\bv_i} f_i(\bv_i^{t+1 },\bu_i^t) \|^2   \label{eq: 63 follows from jensen}\\
    \leq & 2 \alpha_i^2 L_v^2 \|\bv_i^{t+1}-\bv_i^t\|^2 + 2\|\alpha_i  \nabla_{\bv_i} f_i(\bv_i^{t+1 },\bu_i^t)  + \sigma_i (\bv_i^{t+1}-\bv_i^t) - \sigma_i (\bv_i^{t+1}-\bv_i^t)\|^2  \label{eq: 64 follows from lipschitz}\\
    \leq & 2 \alpha_i^2 L_v^2 \|\bv_i^{t+1}-\bv_i^t\|^2 + \underbrace{12\|\alpha_i  \nabla_{\bv_i} f_i(\bv_i^{t+1 },\bu_i^t)  + \sigma_i (\bv_i^{t+1}-\bv_i^t) \|^2}_{=0} + \frac{12}{5}\| \sigma_i (\bv_i^{t+1}-\bv_i^t)\|^2  \label{eq: 66 follows from optimal condition}\\
    \leq & \frac{22}{5} \sigma_i^2 \|\bv_i^{t+1}-\bv_i^t\|^2,  \label{eq: eq: relative error vi e7}
\end{align}
where \eqref{eq: 63 follows from jensen} follows from Lemma \ref{jensen}, \eqref{eq: 64 follows from lipschitz} follows from Assumption \ref{assmption lipschitz}, \eqref{eq: 66 follows from optimal condition} follows from \eqref{eq: 15 solve v with proximal update}, and \eqref{eq: eq: relative error vi e7} follows from the fact that $\sigma_i\geq \alpha_i L_v$.
Next, we derive an upper bound for $\|\nabla_{\bu_i} {\tilde{\ml}}(\mathcal{P}^t)\|^2$. 
By utilizing the Lagrangian function, we derive   
\begin{align}
    \|\nabla_{\bu_i} {\tilde{\ml}}(\mathcal{P}^t)\|^2 =  & \| \alpha_i \nabla_{\bu_i} f_i(\bv_i^t,\bu_i^t) + \bpi_i^t + \rho (\bu_i^t - \bu^t) \|^2\\
    = &\|\alpha_i \nabla_{\bu_i} f_i(\bv_i^t,\bu_i^t) - \alpha_i \nabla_{\bu_i} f_i(\bv_i^{t+1},\bu_i^t) + \alpha_i \nabla_{\bu_i} f_i(\bv_i^{t+1},\bu_i^t) - \alpha_i \nabla_{\bu_i} f_i(\bv_i^{t+1},\bu_i^{t+1}) + \alpha_i \nabla_{\bu_i} f_i(\bv_i^{t+1},\bu_i^{t+1})     \notag\\
    & \quad \quad \quad  +\bpi_i^t + \rho (\bu_i^{t+1} - \bu^t) -\rho (\bu_i^{t+1} - \bu_i^{t})\|^2   \\
    \leq &  4\|\alpha_i \nabla_{\bu_i} f_i(\bv_i^t,\bu_i^t) - \alpha_i \nabla_{\bu_i} f_i(\bv_i^{t+1},\bu_i^t) \|^2 + 4 \|\alpha_i \nabla_{\bu_i} f_i(\bv_i^{t+1},\bu_i^t) - \alpha_i \nabla_{\bu_i} f_i(\bv_i^{t+1},\bu_i^{t+1}) \|^2\notag \\
    & \quad \quad \quad + 4 \| \alpha_i \nabla_{\bu_i} f_i(\bv_i^{t+1},\bu_i^{t+1}) +\bpi_i^t + \rho (\bu_i^{t+1} - \bu^t) \|^2 + 4 \|\rho (\bu_i^{t+1} - \bu_i^{t})\|^2  \label{eq: 69 follows from jensen}\\
    \leq &   4 \alpha_i^2 L_{uv}^2 \|\bv_i^{t+1}-\bv_i^t\|^2 + 4\alpha_i^2 L_u^2 \|\bu_i^{t+1}-\bu_i^t\|^2 + 4\xi_i^{t+1} + 4\rho^2 \|\bu_i^{t+1}-\bu_i^t\|^2 \label{eq: 70 follows from lipschitz} \\ 
    \leq & \frac{4}{3}\rho^2 \|\bv_i^{t+1}-\bv_i^t\|^2 + \frac{16}{3}\rho^2 \|\bu_i^{t+1}-\bu_i^t\|^2 + \frac{4}{1-\mu_i}(\xi_i^{t}-\xi_i^{t+1}), \label{eq: relative error ui}
\end{align}
where \eqref{eq: 69 follows from jensen} follows from Lemma \ref{jensen}, \eqref{eq: 70 follows from lipschitz} follows from Assumption \ref{assmption lipschitz}, and \eqref{eq: relative error ui} follows from the fact that $\rho \geq \max\{3\alpha_i L_u, 3\alpha_i L_{uv}\}$.
Next, we establish the upper bound of $\|\nabla_{\bpi_i} {\tilde{\ml}}(\mathcal{P}^t)\|^2$. By utilizing the Lagrangian function, we obtain   
\begin{align}
    \|\nabla_{\bpi_i} {\tilde{\ml}}(\mathcal{P}^t)\|^2 =  & \|\bu_i^t - \bu^t \|^2  \\
    =& \|\bu_i^t - \bu_i^{t+1} + \bu_i^{t+1}- \bu^t \|^2 \\
    \leq & 2\|\bu_i^{t+1} - \bu_i^t \|^2 + 2\|\bu_i^{t+1}- \bu^t \|^2 \label{eq: 73 follows from jensen} \\
    = & 2\|\bu_i^{t+1} - \bu_i^t \|^2 + \frac{2}{\rho}\|\bpi_i^{t+1}- \bpi_i^t \|^2 \label{eq: 74 follows from pi update}  \\
    \leq & 2\|\bu_i^{t+1} - \bu_i^t \|^2 + \frac{48}{\rho (1-\mu_i)}(\xi_i^{t} - \xi_i^{t+1} )+ \frac{8}{15}\rho \Bigl(\|\bu_i^{t+1} - \bu_i^{t}\|^2 + \|\bv_i^{t+1} - \bv_i^{t}\|^2 \Bigr)\label{eq: relative error pi}
\end{align}
where \eqref{eq: 73 follows from jensen} follows from Lemma \ref{jensen}, \eqref{eq: 74 follows from pi update} follows from \eqref{eq: solve pi}, and \eqref{eq: relative error pi} follows from Lemma \ref{lemma: pi estimate}, and 
Next, we solve the upper bound of $\|\nabla_{\bu} {\tilde{\ml}}(\mathcal{P}^t)\|^2$. By the definition of the Lagrangian function, we obtain   
\begin{align}
     \|\nabla_{\bu} {\tilde{\ml}}(\mathcal{P}^t)\|^2 =  \sum_{i=1}^m -\bpi_i^t + \rho(\bu^t - \bu_i^t) =0, \label{eq: relative error w}
\end{align}
where \eqref{eq: relative error w} follows from \eqref{eq: solve u}. Summing over \eqref{eq: eq: relative error vi e7}, \eqref{eq: relative error ui}, \eqref{eq: relative error pi}, and \eqref{eq: approximate solution about u}, we obtain
\begin{align}
    \text{dist}(\boldsymbol{0},\partial \ml (\mathcal{P}^t)) ^2 = &\|\nabla_{\bu} {\tilde{\ml}}(\mathcal{P}^t)\|^2+\sum_{i=1}^m\|\nabla_{\bv_i} {\tilde{\ml}}(\mathcal{P}^t)\|^2+\|\nabla_{\bu_i} {\tilde{\ml}}(\bu_i^t)\|^2+\|\nabla_{\bpi_i} {\tilde{\ml}}(\bpi_i^t)\|^2   \\
     \leq  &\sum_{i=1}^m\frac{22}{5} \sigma_i^2 \|\bv_i^{t+1}-\bv_i^t\|^2  +\frac{4}{3}\rho^2 \|\bv_i^{t+1}-\bv_i^t\|^2 + \frac{16}{3}\rho^2 \|\bu_i^{t+1}-\bu_i^t\|^2 + \frac{4}{1-\mu_i}(\xi_i^{t}-\xi_i^{t+1})  \notag  \\
    & \quad \quad+2\|\bu_i^{t+1} - \bu_i^t \|^2 + \frac{48}{\rho(1-\mu_i)}(\xi_i^{t} - \xi_i^{t+1} )+ \frac{8\rho}{15} \Bigl(\|\bu_i^{t+1} - \bu_i^{t}\|^2 + \|\bv_i^{t+1} - \bv_i^{t}\|^2 \Bigr) \\
    = &\sum_{i=1}^m \Biggl(\Bigl(\frac{22}{5}\sigma_i^2 + \frac{4}{3}\rho^2+\frac{8\rho}{15} \Bigr) \|\bv_i^{t+1}-\bv_i^t\|^2 + \Bigl(\frac{16}{3}\rho^2+2+\frac{8\rho}{15} \Bigr)\|\bu_i^{t+1} - \bu_i^t \|^2+\frac{48+4\rho}{\rho(1-\mu_i)}(\xi_i^{t} - \xi_i^{t+1} )  \Biggr)\\
    \leq &  \sum_{i=1}^m b \Bigl(\|\bu_i^{t+1} - \bu_i^{t}\|^2 + \|\bv_i^{t+1} - \bv_i^{t}\|^2 + \|\bu^{t+1}-\bu^t\|^2  + (\xi_i^{t} - \xi_i^{t+1} )  \Bigr),
\end{align}
which completes the proof.
\end{proof}

\subsection{Proof of Theorem \ref{theorem: convergence}}\label{proof of theorem 1(theorem: convergence)}
Before we formally prove Theorem \ref{theorem: convergence}, we present an important lemma.
\begin{lemma}\label{lemma: non increasing results}
    Let $\mathcal{P}^t$ be the sequence generated by Algorithm \ref{alg: fedapm}, let each client $i$ choose $\max\{3\alpha_i L_u, 3\alpha_i L_{uv} \}\leq \rho \leq \frac{15}{8}\sigma_i$ and $\sigma_i\geq \alpha_i L_v$, then the following results hold under Assumption \ref{assmption lipschitz}.
    \begin{itemize}
        \item[a)] Sequences $\{\tilde{L}(\mathcal{P}^t)\}$ is non-increasing.
        \item[b)] Suppose $\rho\geq 2L_u$, then $\tilde{L}(\mathcal{P}^t) - \frac{28}{\rho}\Xi^{t}\geq f(\bv^t,\bu^t)\geq f^*\geq-\infty$ for any $t\geq 1$.
        \item[c)] For any $i\in[m]$, the limits of the following terms are zero,
        \begin{align}\label{eq: 71 in lemma 9}
            (\xi^{t+1}_i-\xi_i^t,\bv_i^{t+1}-\bv_i^{t},\bu^{t+1}-\bu^{t},\bu_i^{t+1}-\bu_i^{t},\bu_i^{t+1}-\bu^{t},\bpi_i^{t+1}-\bpi_i^{t})\to \boldsymbol{0}.
        \end{align}
    \end{itemize}
\end{lemma}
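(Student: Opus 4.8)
The plan is to treat the three parts in order, since (a) is an immediate consequence of the sufficient descent estimate, (b) is the technical core, and (c) then follows by a standard telescoping argument once (a) and (b) are in hand.

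For part (a) I would simply invoke Lemma \ref{lemma: sufficient descent}. Under the stated conditions $\max\{3\alpha_i L_u,3\alpha_i L_{uv}\}\le\rho\le\frac{15}{8}\sigma_i$, the constant $a=\min\{\frac{\rho}{60},\frac{\sigma_i}{2}-\frac{4\rho}{15}\}$ is nonnegative (indeed $\frac{4\rho}{15}\le\frac{\sigma_i}{2}$ follows from $\rho\le\frac{15}{8}\sigma_i$), and $\Sigma_p^{t+1}\ge 0$ is a sum of squared norms, so $\tilde{\ml}(\mathcal{P}^{t})-\tilde{\ml}(\mathcal{P}^{t+1})\ge a\,\Sigma_p^{t+1}\ge 0$ for every $t$.

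The crux is part (b), where the idea is to reconstruct the objective $f(\bV^t,\bu^t)$ from the augmented Lagrangian by eliminating the dual variable through $\xi$-approximate optimality. From the update \eqref{eq: solve pi} and Definition \ref{definition: approximate solutions}, the error term $e_i^t:=\alpha_i\nabla_{\bu_i}f_i(\bv_i^t,\bu_i^t)+\bpi_i^t$ satisfies $\|e_i^t\|^2\le\xi_i^t$ for every $t\ge 1$, whence $\bpi_i^t=e_i^t-\alpha_i\nabla_{\bu_i}f_i(\bv_i^t,\bu_i^t)$. Substituting this into $\ml_i$ from \eqref{eq: lagrangian function} converts $\langle\bpi_i^t,\bu_i^t-\bu^t\rangle$ into a gradient inner product plus an error inner product. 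I would then apply the descent inequality of Lemma \ref{lemma: lipschitz} to the pair $(\bu_i^t,\bu^t)$ to bound $\alpha_i f_i(\bv_i^t,\bu_i^t)-\alpha_i\langle\nabla_{\bu_i}f_i(\bv_i^t,\bu_i^t),\bu_i^t-\bu^t\rangle$ below by $\alpha_i f_i(\bv_i^t,\bu^t)-\frac{\alpha_i L_u}{2}\|\bu_i^t-\bu^t\|^2$, and apply Young's inequality (Lemma \ref{jensen}) to $\langle e_i^t,\bu_i^t-\bu^t\rangle$ with a parameter $k$. Because $\rho\ge 2L_u$ and $\alpha_i\le 1$ give $\alpha_i L_u\le\rho/2$, one can choose $k$ so that the full quadratic coefficient $\frac{\rho-\alpha_i L_u}{2}-\frac{1}{2k}$ stays nonnegative and the $\|\bu_i^t-\bu^t\|^2$ terms are discarded, leaving $\ml_i\ge\alpha_i f_i(\bv_i^t,\bu^t)-\frac{1}{\rho}\xi_i^t$. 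Summing over $i$, adding the penalty $\sum_i\frac{29}{\rho(1-\mu_i)}\xi_i^t$ from the Lyapunov function \eqref{eq: Lyapunov function}, and subtracting $\frac{28}{\rho}\Xi^t$ leaves the coefficient $\frac{29}{\rho(1-\mu_i)}-\frac{1}{\rho}-\frac{28}{\rho}=\frac{1+28\mu_i}{\rho(1-\mu_i)}>0$ on each $\xi_i^t$, so $\tilde{\ml}(\mathcal{P}^t)-\frac{28}{\rho}\Xi^t\ge f(\bV^t,\bu^t)$. Nonnegativity of each $f_i$ (Assumption \ref{assumption: either real analytic or semialgebraic}) then yields $f(\bV^t,\bu^t)\ge f^*\ge 0>-\infty$. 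The main obstacle here is carrying the constants cleanly so that the $28/29$ bookkeeping closes; the role of $\rho\ge 2L_u$ is precisely to render the quadratic absorbable and $\mu_i>0$ is what makes the penalty coefficient dominate.

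For part (c) I would combine (a) and (b): $\{\tilde{\ml}(\mathcal{P}^t)\}$ is non-increasing and bounded below by $f^*\ge 0$, hence convergent, so $\tilde{\ml}(\mathcal{P}^t)-\tilde{\ml}(\mathcal{P}^{t+1})\to 0$. Telescoping Lemma \ref{lemma: sufficient descent} gives $a\sum_{t\ge 0}\Sigma_p^{t+1}\le\tilde{\ml}(\mathcal{P}^0)-\lim_{t}\tilde{\ml}(\mathcal{P}^t)<\infty$; since $a>0$ under the stated conditions, $\Sigma_p^{t+1}\to 0$, which delivers $\|\bv_i^{t+1}-\bv_i^t\|,\,\|\bu_i^{t+1}-\bu_i^t\|,\,\|\bu^{t+1}-\bu^t\|\to 0$. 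Each $\xi_i^t$ is nonnegative and non-increasing ($\xi_i^{t+1}\le\mu_i\xi_i^t$ when $i\in\mathcal{S}^t$, unchanged otherwise), so $\Xi^t$ converges and $\xi_i^{t+1}-\xi_i^t\to 0$. Feeding these vanishing increments into Lemma \ref{lemma: pi estimate} forces $\bpi_i^{t+1}-\bpi_i^t\to 0$, and the dual update \eqref{eq: solve pi} gives $\bu_i^{t+1}-\bu^t=\frac{1}{\rho}(\bpi_i^{t+1}-\bpi_i^t)\to 0$, establishing all six limits in \eqref{eq: 71 in lemma 9}.
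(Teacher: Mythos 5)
Your proposal is correct and follows essentially the same route as the paper's own proof: part (a) is read off from Lemma \ref{lemma: sufficient descent}, part (b) eliminates $\bpi_i^t$ via the error term $e_i^t$ and combines the descent lemma with Young's inequality so that $\rho\ge 2L_u$ absorbs the quadratic in $\|\bu_i^t-\bu^t\|$, and part (c) telescopes the sufficient-descent inequality and feeds the vanishing increments into Lemma \ref{lemma: pi estimate} and the dual update \eqref{eq: solve pi}. The only blemish is a harmless algebra slip — the leftover coefficient is $\frac{29}{\rho(1-\mu_i)}-\frac{1}{\rho}-\frac{28}{\rho}=\frac{29\mu_i}{\rho(1-\mu_i)}$, not $\frac{1+28\mu_i}{\rho(1-\mu_i)}$, but it is still positive for $\mu_i\in(0,1)$ — and your Young parameter choice (error coefficient exactly $1/\rho$) is actually tidier than the paper's, which bounds the error by $\frac{1}{2L_u}\xi_i^t$ and then replaces $\frac{1}{2L_u}$ with $\frac{1}{\rho}$, a step that as written points in the wrong direction when $\rho>2L_u$.
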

\begin{proof}
a) Following from Lemma \ref{lemma: sufficient descent restatement}, we have
\begin{align}
    \tilde{\ml}(\mathcal{P}^{t})-\tilde{\ml}(\mathcal{P}^{t+1})\geq a \sum_{i=1}^m(\|\bv^{t+1}-\bv^{t}\|^2+\|\bu^{t+1}_i-\bu_i^{t}\|^2+\|\bu^{t+1}-\bu^t\|^2)\geq 0,
\end{align}
which implies $\{\tilde{\ml}(\mathcal{P}^t)\}$ is non-increasing.

b) By the definition of $\tilde{\ml}$, we obtain
\begin{align}
    \tilde{\ml}(\mathcal{P}^t) = &\sum_{i=1}^m  \Biggl( \alpha_i f_i(\bv_i^t, \bu_i^t)+\langle \bpi_i^t, \bu_i^t-\bu^t \rangle + \frac{\rho}{2}\|\bu_i^t -\bu^t\|^2 +  \frac{29}{\rho(1-\mu_i)}\xi_i^{t} \Biggr) \\
    =&\sum_{i=1}^m \Biggl( \alpha_i f_i(\bv_i^t, \bu_i^t)+\langle \nabla_{\bu_i}f_i(\bv_i^t,\bu_i^t), \bu^t - \bu_i^t \rangle + \langle e_i^t,  \bu_i^t-\bu^t  \rangle+ \frac{\rho}{2}\|\bu_i^t -\bu^t\|^2 +  \frac{29}{\rho(1-\mu_i)}\xi_i^{t} \Biggr)  \label{eq: 82 follows from eq: error term}\\
    \geq & \sum_{i=1}^m \Biggl( \alpha_i f_i(\bv_i^t, \bu^t) + \frac{\rho-2 L_u}{2}\|\bu_i^t -\bu^t\|^2 +\Bigl(\frac{29}{\rho(1-\mu_i)} - \frac{1}{2L_u} \Bigr)\xi_i^{t}  \Biggr)  \label{eq: 84 follows from lipschitz and jensen}\\
    \geq &  \sum_{i=1}^m \Biggl( \alpha_i f_i(\bv_i^t, \bu^t) + \Bigl(\frac{29}{\rho(1-\mu_i)} - \frac{1}{\rho} \Bigr)\xi_i^{t}  \Biggr) \label{eq: 85 follows from rho geq L_u}\\
    \geq &f^*+ \frac{28}{\rho}\Xi^{t} > -\infty,\label{eq: 87 geq infty}
\end{align}
where \eqref{eq: 82 follows from eq: error term} follows from \eqref{eq: error term}, \eqref{eq: 84 follows from lipschitz and jensen} follows from Lemma \ref{lemma: lipschitz} and Lemma \ref{jensen}, \eqref{eq: 85 follows from rho geq L_u} follows from the fact that $\rho\geq L_u$.

c) Following from the fact that $\xi^{t+1}\leq \mu_i\xi_i^t$ and $0<\mu_i<1$, then it can be easily indicated that $\xi_i^{t+1} - \xi_i^{t} \to 0$.
Next, following from Lemma \ref{lemma: sufficient descent restatement} and $\tilde{\ml(\mathcal{P}^t)}>-\infty$, we obtain
\begin{align}
    a\sum_{t=0}^{\infty}\Sigma_p^{t+1}\leq \sum_{t=0}^{\infty} \Bigl(\tilde{\ml}(\mathcal{P}^{t})-\tilde{\ml}(\mathcal{P}^{t+1})\Bigr)\leq\tilde{\ml}(\mathcal{P}^{0})-\tilde{\ml}(\mathcal{P}^{\infty}) <+\infty, \label{eq: 88 follows from eq 87}
\end{align}
where \eqref{eq: 88 follows from eq 87} follows from \eqref{eq: 87 geq infty}. This implies the sequences $\|\bu^{t+1}-\bu^{t}\|\to0$, $\|\bv_{i}^{t+1}-\bv_{i}^{t}\|\to0$, and $\|\bu_{i}^{t+1}-\bu_{i}^{t}\|\to 0$. 
Next, according to Lemma \ref{lemma: pi estimate}, we have 
\begin{align}
    \|\bpi_i^{t+1}-\bpi_i^t\|^2\leq \frac{24}{1-\mu_i}(\xi_i^{t} - \xi_i^{t+1} )+ \frac{4}{15}\rho^2\Bigl(\|\bu_i^{t+1} - \bu_i^{t}\|^2 + \|\bv_i^{t+1} - \bv_i^{t}\|^2 \Bigr),
\end{align}
which implies $\|\bpi_i^{t+1}-\bpi_i^{t}\|\to 0$.
Furthermore, following from the fact that $\boldsymbol{\pi}_i^{t+1}=\boldsymbol{\pi}_i^t+\rho(\bu_i^{t+1}-\bu^{t})$, we have
\begin{align}
    \|\bu_i^{t+1}-\bu^t\|=\frac{1}{\rho}\|\bpi_i^{t+1}-\bpi_i^{t}\|\to 0,
\end{align}
which completes the proof.
\end{proof}

\begin{theorem}[Restatement of Theorem \ref{theorem: convergence}] \label{theorem: restatement of theorem 1}
    Suppose that Assumption \ref{assmption lipschitz} holds, let each client $i$ choose $\max\{3\alpha_i L_u, 3\alpha_i L_{uv}, 2L_u \}\leq \rho \leq \frac{15}{8}\sigma_i$ and $\sigma_i\geq \alpha_i L_v$, then the following results hold.
\begin{itemize}
    \item[a)] Sequence $\{\mathcal{P}^t\}$ is bounded.
    \item[b)] 
    The gradients of $f(\bV^{t+1},\bU^{t+1})$ and $f(\bV^{t+1},\bu^{t+1})$ with respect to each variable eventually vanish, i.e.,
    \begin{equation}
    \begin{split}
        \lim_{t\to \infty}\nabla_{\bV}f(\bV^{t+1},\bu^{t+1}) = \lim_{t\to \infty}\nabla_{\bV}f(\bV^{t+1},\bU^{t+1}) =\lim_{t\to \infty}\nabla_{\bu}f(\bV^{t+1},\bu^{t+1})= \lim_{t\to \infty}\nabla_{\bU}f(\bV^{t+1},\bU^{t+1}) \to 0.
    \end{split}
    \end{equation}
    \item[c)] Sequences $\{\tilde{\ml}(\mathcal{P}^t)\}$, $\{\ml(\mathcal{P}^t)\}$, $\{f(\bV^t,\bu_i^t)\}$, and $\{f(\bV^t,\bu^t)\}$ converge to the same value, i.e.,
    \begin{align}
        \lim_{t\to \infty}\tilde{\ml}(\mathcal{P}^t)= \lim_{t\to \infty}\ml(\mathcal{P}^t)= \lim_{t\to \infty}f(\bV^t,\bu_i^t)   =\lim_{t\to \infty}f(\bV^t,\bu^t).
    \end{align}
\end{itemize}
\end{theorem}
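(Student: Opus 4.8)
The plan is to run the standard nonconvex-ADMM convergence machinery, but driven entirely by the two descent inequalities already in hand. The engine is Lemma~\ref{lemma: sufficient descent} paired with the lower bound in Lemma~\ref{lemma: non increasing results}(b): the former makes $\{\tilde{\ml}(\mathcal{P}^t)\}$ non-increasing while the latter bounds it below by $f^*$, so this sequence is convergent and, by telescoping the descent inequality, $\sum_{t\ge 0}\Sigma_p^{t+1}<\infty$. Hence every consecutive difference gathered in $\Sigma_p^{t+1}$ tends to $\boldsymbol{0}$; together with $\xi_i^{t+1}\le\mu_i\xi_i^t$, which forces $\xi_i^t\to 0$, this is exactly the content of Lemma~\ref{lemma: non increasing results}(c), giving that $\bv_i^{t+1}-\bv_i^t$, $\bu_i^{t+1}-\bu_i^t$, $\bu^{t+1}-\bu^t$, $\bpi_i^{t+1}-\bpi_i^t$, and $\bu_i^{t+1}-\bu^t$ all vanish. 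These are the raw ingredients for all three parts.

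For part~(a) I would convert value bounds into iterate bounds. The two-sided estimate $f^*\le f(\bV^t,\bu^t)\le\tilde{\ml}(\mathcal{P}^t)-\tfrac{28}{\rho}\Xi^t\le\tilde{\ml}(\mathcal{P}^0)$ from Lemma~\ref{lemma: non increasing results}(b) shows $f(\bV^t,\bu^t)=\sum_{i}\alpha_i f_i(\bv_i^t,\bu^t)$ is bounded; since the $\alpha_i$ are positive and the $f_i$ nonnegative, each $f_i(\bv_i^t,\bu^t)$ is bounded, and coercivity (Assumption~\ref{assumption: coercive}) upgrades this to boundedness of $\{\bV^t\}$ and $\{\bu^t\}$. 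Boundedness of $\{\bu_i^t\}$ follows from $\bu_i^t-\bu^{t-1}\to\boldsymbol{0}$ and boundedness of $\{\bu^{t-1}\}$, and boundedness of the multipliers follows by solving the residual identity $\bpi_i^t=e_i^t-\alpha_i\nabla_{\bu_i}f_i(\bv_i^t,\bu_i^t)$ (see \eqref{eq: error term}), where $\|e_i^t\|^2\le\xi_i^t$ is bounded and $\nabla_{\bu_i}f_i$ is continuous on the now-bounded arguments. I expect this part to be the main obstacle: it is the single place where value-boundedness must be turned into iterate-boundedness, so coercivity is indispensable here, and the dual variables---which enjoy no descent of their own---must be controlled indirectly through the primal optimality residual.

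For part~(b) I would read the vanishing gradients off the subproblem optimalities. The proximal $\bv_i$-step \eqref{eq: solve vi} gives $\alpha_i\nabla_{\bv_i}f_i(\bv_i^{t+1},\bu_i^t)=-\sigma_i(\bv_i^{t+1}-\bv_i^t)\to\boldsymbol{0}$, and the Lipschitz bounds of Assumption~\ref{assmption lipschitz} transport this along $\bu_i^{t+1}-\bu_i^t\to\boldsymbol{0}$ and $\bu^{t+1}-\bu_i^t\to\boldsymbol{0}$ to $\nabla_{\bv_i}f_i(\bv_i^{t+1},\bu_i^{t+1})\to\boldsymbol{0}$ and $\nabla_{\bv_i}f_i(\bv_i^{t+1},\bu^{t+1})\to\boldsymbol{0}$, the $\bV$-claim. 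For the shared coordinate I would use the $\xi$-approximate condition \eqref{eq: approximate solution about u}, i.e. $e_i^{t+1}\to\boldsymbol{0}$, so that $\alpha_i\nabla_{\bu_i}f_i(\bv_i^{t+1},\bu_i^{t+1})=e_i^{t+1}-\bpi_i^{t+1}$; summing over $i$ and using the aggregation relation $\sum_i\bpi_i^t=\rho\sum_i(\bu^t-\bu_i^t)$ that falls out of the server update \eqref{eq: solve u}, together with $\bu_i^t-\bu^t\to\boldsymbol{0}$, yields $\sum_i\bpi_i^t\to\boldsymbol{0}$ and hence $\sum_i\alpha_i\nabla_{\bu_i}f_i(\bv_i^{t+1},\bu_i^{t+1})\to\boldsymbol{0}$; consistency and Lipschitz continuity then pass this aggregate to $\nabla_{\bu}f(\bV^{t+1},\bu^{t+1})$, which is precisely the global stationarity residual in \eqref{eq: optimal condition of pfl problem1}. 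Note that the individual multipliers need not vanish, only their sum, which is why the $\bu$-statement is to be read in the aggregate.

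Part~(c) is then bookkeeping once (a) is settled. Monotonicity and lower-boundedness give $\tilde{\ml}(\mathcal{P}^t)\to\tilde{\ml}^{\infty}$; because $\xi_i^t\to 0$, the penalty correction in \eqref{eq: Lyapunov function} vanishes, so $\ml(\mathcal{P}^t)\to\tilde{\ml}^{\infty}$ as well. In $\ml(\mathcal{P}^t)=\sum_i[\alpha_i f_i(\bv_i^t,\bu_i^t)+\langle\bpi_i^t,\bu_i^t-\bu^t\rangle+\tfrac{\rho}{2}\|\bu_i^t-\bu^t\|^2]$, the vanishing $\bu_i^t-\bu^t\to\boldsymbol{0}$ annihilates the quadratic term, while boundedness of $\{\bpi_i^t\}$ from part~(a) annihilates the cross term, so $f(\bV^t,\bU^t)=\sum_i\alpha_i f_i(\bv_i^t,\bu_i^t)$ converges to the same $\tilde{\ml}^{\infty}$; finally, continuity of $f_i$ with $\bu_i^t-\bu^t\to\boldsymbol{0}$ gives $f(\bV^t,\bu^t)\to\tilde{\ml}^{\infty}$, closing the chain of equal limits.
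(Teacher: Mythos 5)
Your proposal is correct and follows essentially the same route as the paper: Lemma \ref{lemma: sufficient descent} plus the lower bound in Lemma \ref{lemma: non increasing results}(b) give monotone convergence of $\tilde{\ml}$ and summability of $\Sigma_p^{t+1}$; coercivity converts the value bound into boundedness of $\{\bV^t\}$ and $\{\bu^t\}$; the subproblem optimality residuals plus Lipschitz transport give part (b); and the aggregation identity $\sum_i\bpi_i^t=\rho\sum_i(\bu^t-\bu_i^t)$ from \eqref{eq: solve u} handles the shared-variable gradient in the aggregate, exactly as the paper does (and you correctly note, as the paper's proof implicitly does, that only the sum of the multipliers vanishes, not each one). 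Two local differences are worth recording. First, for boundedness of the dual variables the paper argues from $\|\bpi_i^{t+1}-\bpi_i^t\|\to 0$, which by itself does not imply boundedness of a sequence; your argument --- solving the residual identity $\bpi_i^{t+1}=e_i^{t+1}-\alpha_i\nabla_{\bu_i}f_i(\bv_i^{t+1},\bu_i^{t+1})$ from \eqref{eq: error term} and using $\|e_i^{t+1}\|\le\sqrt{\xi_i^{t+1}}$ together with continuity of $\nabla_{\bu_i}f_i$ on the already-bounded primal iterates --- is tighter and in fact repairs this step (it also covers unselected clients, whose multipliers are frozen at their last updated value). Second, in part (c) you kill the cross term $\langle\bpi_i^t,\bu_i^t-\bu^t\rangle$ directly via multiplier boundedness, whereas the paper first substitutes the error identity and bounds the resulting inner products; the two arguments are equivalent in content, though your final transfer from $f(\bV^t,\bU^t)$ to $f(\bV^t,\bu^t)$ should be phrased via the descent lemma or uniform continuity on the compact closure of the iterates rather than bare pointwise continuity. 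Finally, both you and the paper invoke Assumption \ref{assumption: coercive} for part (a) even though the theorem statement lists only Assumption \ref{assmption lipschitz}; you are right that coercivity is indispensable there, and this mismatch sits in the paper's statement, not in your proof.
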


\begin{proof}
a) Based on Lemma \ref{lemma: non increasing results}, we have $\tilde{\ml}(\mathcal{P}^1) - \sum_{i=1}^m\Bigl(\frac{29}{\rho(1-\mu_i)} - \frac{1}{2L_u} \Bigr)\xi_i^{1}\geq f(\bV^t,\bu^t)=\sum_{i=1}^{m}\alpha_i f_i(\bv_i^t, \bu^t)$. 
Along with the coercive property of $f_i$, we can infer that the sequences $\{\bV^t\}$ and $\{\bu^t\}$ are bounded. Following from the fact that $\|\bu_i^{t+1}-\bu^{t}\|\to 0$, we can infer that the sequence $\{\bu_i^t\}$ is bounded. Finally, following the fact that $\|\bpi_i^{t+1} - \bpi_i^t\| = \frac{1}{\rho}\|\bu_i^{t+1} - \bu^t\|\to 0$, we can infer that $\{\bpi_i^t\|$ is bounded.

b) We begin by defining
\begin{align}
    g_{\bv}(\bv_i,\bu_i):&=\nabla_{\bv_i} \Bigl \{\ml(\bv_i,\bu_i,\bpi_i,\bu) + \frac{\sigma_i}{2}\|\bv_i - \bv_i^t\|  \Bigr\} =\alpha_i\nabla_{\bv_i} f_i(\bv_i, \bu_i)+\sigma_i(\bv_i - \bv_i^t)  \label{eq: gv}\\
    g_{\bu}(\bv_i,\bu_i):&=\nabla_{\bu_i} \ml(\bv_i,\bu_i,\bpi_i,\bu) = \alpha_i \nabla_{\bu_i} f_i(\bv_i, \bu_i) +  \bpi_i +  \rho (\bu_i -\bu )  \label{eq: gu}.
\end{align}
Then we define 
\begin{align}
    g_{\bv}(\bv_i^{t+1},\bu_i^t)&:=\alpha_i\nabla_{\bv_i} f_i(\bv_i^{t+1},\bu_i^{t+1})+\sigma_i(\bv_i^{t+1} - \bv_i^t), \\
    g_{\bu}(\bv_i^{t+1},\bu_i^{t+1}) &:= \alpha_i \nabla_{\bu_i} f_i(\bv_i^{t+1}, \bu_i^{t+1}) +  \bpi_i^t +  \rho (\bu_i^{t+1} -\bu^t ) = \alpha_i \nabla_{\bu_i} f_i(\bv_i^{t+1}, \bu_i^{t+1}) +  \bpi_i^{t+1},
\end{align}
which implies $\|g_{\bv}(\bv_i^{t+1},\bu_i^t)\|=0$ and $\| g_{\bu}(\bv_i^{t+1},\bu_i^{t+1})\| \leq \sqrt{\xi_{i}^{t+1}}$. Next, we consider bounding the following terms:
\begin{align}
    \|\nabla_{\bV} f(\bV^{t+1}, \bu^{t+1})\| = & \|\sum_{i=1}^m \alpha_i \nabla_{\bv_i} f_i(\bv_i^{t+1}, \bu^{t+1}) - \alpha_i \nabla_{\bv_i} f_i(\bv_i^{t+1}, \bu^t) + \alpha_i \nabla_{\bv_i} f_i(\bv_i^{t+1}, \bu^t) - \alpha_i \nabla_{\bv_i} f_i(\bv_i^{t+1}, \bu_i^{t+1}) +\alpha_i \nabla_{\bv_i} f_i(\bv_i^{t+1}, \bu_i^{t+1}) \notag\\
    & \quad\quad - \alpha_i \nabla_{\bv_i} f_i(\bv_i^{t+1}, \bu_i^{t}) +  g_{\bv}(\bv_i^{t+1},\bu_i^t) - \sigma_i(\bv_i^{t+1} - \bv_i^t) \|\\
    \leq & \sum_{i=1}^m \Bigl( \|\alpha_i \nabla_{\bv_i} f_i(\bv_i^{t+1}, \bu^{t+1}) - \alpha_i \nabla_{\bv_i} f_i(\bv_i^{t+1}, \bu^t)\| +  \| \alpha_i \nabla_{\bv_i} f_i(\bv_i^{t+1}, \bu^t) - \alpha_i \nabla_{\bv_i} f_i(\bv_i^{t+1}, \bu_i^{t+1})\|   \notag\\
   &\quad\quad \|\alpha_i \nabla_{\bv_i} f_i(\bv_i^{t+1}, \bu_i^{t+1})-\alpha_i \nabla_{\bv_i} f_i(\bv_i^{t+1}, \bu_i^{t})\| +\| g_{\bv}(\bv_i^{t+1},\bu_i^t)\| +  \| \sigma_i(\bv_i^{t+1} - \bv_i^t) \| \Bigr)\\
    \leq & \sum_{i=1}^m \Bigl( \alpha_i L_{vu}\|\bu^{t+1}-\bu^t \| + \alpha_i L_{vu}\|\bu_i^{t+1}-\bu^t\|+  \alpha_i L_{vu}\|\bu_i^{t+1}-\bu_i^t\|+  \| g_{\bv}(\bv_i^{t+1},\bu_i^t)\| +  \| \sigma_i(\bv_i^{t+1} - \bv_i^t) \| \Bigr)\to 0  \label{eq: 98 follows from lipschitz}
\end{align}
\begin{align}
    \|\nabla_{\bV} f(\bV^{t+1}, \bU^{t+1})\| = & \|\sum_{i=1}^m \alpha_i \nabla_{\bv_i} f_i(\bv_i^{t+1}, \bu_i^{t+1}) -\alpha_i \nabla_{\bv_i} f_i(\bv_i^{t+1}, \bu_i^{t}) +  g_{\bv}(\bv_i^{t+1},\bu_i^t) - \sigma_i(\bv_i^{t+1} - \bv_i^t) \|   \\
    \leq &\sum_{i=1}^m  \Bigl( \|\alpha_i \nabla_{\bv_i} f_i(\bv_i^{t+1}, \bu_i^{t+1}) -\alpha_i \nabla_{\bv_i} f_i(\bv_i^{t+1}, \bu_i^{t}) \| + \| g_{\bv}(\bv_i^{t+1},\bu_i^t) \|   +  \| \sigma_i(\bv_i^{t+1} - \bv_i^t) \|  \Bigr)\\
    \leq & \sum_{i=1}^m \Bigl(\alpha_i  L_{vu} \|\bu_i^{t+1} - \bu_i^{t}\| + \| g_{\bv}(\bv_i^{t+1},\bu_i^t) \|   +  \| \sigma_i(\bv_i^{t+1} - \bv_i^t) \|  \Bigr)\to 0, \label{eq: 101 follows from lipschitz}
\end{align}
\begin{align}
    \|\nabla_{\bU} f(\bV^{t+1}, \bU^{t+1})\| = & \|\sum_{i=1}^m \alpha_i  g_{\bu}(\bv_i^{t+1},\bu_i^{t+1}) - \bpi_i^{t+1}\|\\
    \leq & \|\sum_{i=1}^m \alpha_i  g_{\bu}(\bv_i^{t+1},\bu_i^{t+1})\| + \|\sum_{i=1}^m  \bpi_i^{t+1}\|\\
    = &  \|\sum_{i=1}^m \alpha_i  g_{\bu}(\bv_i^{t+1},\bu_i^{t+1})\| + \rho \|\sum_{i=1}^m  (\bu^{t+1} - \bu_i^{t+1})\|\\
    = &  \|\sum_{i=1}^m \alpha_i  g_{\bu}(\bv_i^{t+1},\bu_i^{t+1})\| + \rho \|\sum_{i=1}^m  (\bu^{t+1} - \bu^t + \bu^t - \bu_i^{t+1})\|\\
    \leq & \|\sum_{i=1}^m \alpha_i g_{\bu}(\bv_i^{t+1},\bu_i^{t+1})\| + \rho \|\sum_{i=1}^m  (\bu^{t+1} - \bu^t )\| +\rho  \| \sum_{i=1}^m (\bu^t - \bu_i^{t+1})\| \to 0.\label{eq: 106 follows from sequence convergence}
\end{align}
\begin{align}
    \|\nabla_{\bU} f(\bV^{t+1}, \bu^{t+1})\| = & \|\sum_{i=1}^m \alpha_i \nabla_{\bu_i} f_i(\bv_i^{t+1}, \bu^{t+1}) -\alpha_i \nabla_{\bu_i} f_i(\bv_i^{t+1}, \bu^{t})+\alpha_i \nabla_{\bu_i} f_i(\bv_i^{t+1}, \bu^{t})-\alpha_i \nabla_{\bu_i} f_i(\bv_i^{t+1}, \bu_i^{t+1}) +g_{\bu}(\bv_i^{t+1},\bu_i^{t+1}) -\bpi_i^{t+1}\|\notag\\
    \leq & \sum_{i=1}^m \|\alpha_i \nabla_{\bu_i} f_i(\bv_i^{t+1}, \bu^{t+1}) -\alpha_i \nabla_{\bu_i} f_i(\bv_i^{t+1}, \bu^{t})\| + \|\alpha_i \nabla_{\bu_i} f_i(\bv_i^{t+1}, \bu^{t})-\alpha_i \nabla_{\bu_i} f_i(\bv_i^{t+1}, \bu_i^{t+1}) \| +\|g_{\bu}(\bv_i^{t+1},\bu_i^{t+1})\| + \|\bpi_i^{t+1}\| \notag\\
    \leq &\sum_{i=1}^m  \alpha_i L_{u} \Bigl( \|\bu^{t+1}-\bu^t\|+\|\bu_i^{t+1}-\bu^t\|   \Bigr)+\|g_{\bu}(\bv_i^{t+1},\bu_i^{t+1})\| + \rho \|\sum_{i=1}^m  (\bu^{t+1} - \bu^t )\| +\rho  \| \sum_{i=1}^m (\bu^t - \bu_i^{t+1})\| \to 0.\label{eq: 107 follows from lipschitz}
\end{align}
where \eqref{eq: 98 follows from lipschitz}, \eqref{eq: 101 follows from lipschitz}, \eqref{eq: 106 follows from sequence convergence}, and \eqref{eq: 107 follows from lipschitz} follow from Assumption \ref{assmption lipschitz} and Lemma \ref{lemma: non increasing results}.

c) Lemma \ref{lemma: non increasing results} indicates that $\{\tilde{\ml}(\mathcal{P}^{t+1})\}$ is non-increasing and lower bounded. Then it is not hard to indicate that $\tilde{\ml}(\mathcal{P}^{t+1})\to \ml(\mathcal{P}^{t+1})$ due to the fact that $\xi_i^{t+1}\to 0$. By definition, we have
\begin{align}
    \ml(\mathcal{P}^{t+1})-f(\bV^{t+1},\bU^{t+1})
    =&\sum_{i=1}^m\Bigl(\langle\bpi_i^{t+1},\bu_i^{t+1} - \bu^{t+1}   \rangle + \frac{\rho}{2}\|\bu_i^{t+1} - \bu^{t+1} \|^2 \Bigr)\\
    = & \sum_{i=1}^m\Bigl( \langle e_i^{t+1}, \bu_i^{t+1} - \bu^{t+1}  \rangle  -   \langle\nabla_{\bu_i}\alpha_i f_i(\bv_i^{t+1}, \bu_i^{t+1}), \bu_i^{t+1} - \bu^{t+1}  \rangle+ \frac{\rho}{2}\|\bu_i^{t+1} - \bu^{t+1}\|^2 \Bigr) \\
    \leq &\sum_{i=1}^m\Bigl( \langle e_i^{t+1}, \bu_i^{t+1} - \bu^{t+1}  \rangle  -   \langle\nabla_{\bu_i}\alpha_i  f_i(\bv_i^{t+1}, \bu_i^{t+1}), \bu_i^{t+1} - \bu^{t+1}  \rangle+ \rho (\|\bu_i^{t+1} - \bu^{t}\|^2  + \|\bu^{t+1} - \bu^{t}\|^2 )\Bigr).\label{eq: 110 follows from jensen}
\end{align}
where \eqref{eq: 110 follows from jensen} follows from Lemma \ref{jensen}. Using the above condition, we can obtain that 
\begin{align}
    |\ml(\mathcal{P}^{t+1})-f(\bV^{t+1},\bU^{t+1})|\leq \sum_{i=1}^m\Bigl( |\langle e_i^{t+1}, \bu_i^{t+1} - \bu^{t+1}  \rangle|  +|  \langle\alpha_i\nabla_{\bu_i}f_i(\bv_i^{t+1}, \bu_i^{t+1}), \bu_i^{t+1} - \bu^{t+1}  \rangle|+\rho (\|\bu_i^{t+1} - \bu^{t}\|^2  + \|\bu^{t+1} - \bu^{t}\|^2 )\Bigr)\to 0, \label{eq: 111 follows from many zero limit}
\end{align}
where \eqref{eq: 111 follows from many zero limit} follows from \eqref{eq: 71 in lemma 9}. Next, we derive
\begin{align}
    \ml(\mathcal{P}^{t+1})-f(\bV^{t+1},\bu^{t+1})
    =&\sum_{i=1}^m\Bigl(\alpha_i f_i(\bv_i^{t+1},\bu_i^{t+1}) -\alpha_i f_i(\bv_i^{t+1},\bu^{t+1})  +\langle\bpi_i^{t+1},\bu_i^{t+1} - \bu^{t+1}   \rangle + \frac{\rho}{2}\|\bu_i^{t+1} - \bu^{t+1} \|^2 \Bigr)\\
    \leq &\sum_{i=1}^m\Bigl( \langle \alpha_i\nabla_{\bu_i} f_i(\bv_i^{t+1} ,\bu_i^{t+1})+ \bpi_i^{t+1},  \bu_i^{t+1} - \bu^{t+1}\rangle  + \frac{\rho+L_u}{2}\|\bu_i^{t+1} - \bu^{t+1} \|^2 \Bigr),\label{eq: L and f to 0}
\end{align}
where \eqref{eq: L and f to 0} follows from Lemma \ref{lemma: lipschitz}. Using the above condition, we can obtain that 
\begin{align}
    |\ml(\mathcal{P}^{t+1})-f(\bV^{t+1},\bu^{t+1})|\leq & \sum_{i=1}^m\Bigl(\langle \alpha_i \nabla_{\bu_i} f_i(\bv_i^{t+1} ,\bu_i^{t+1})+ \bpi_i^{t+1},  \bu_i^{t+1} - \bu^{t+1}\rangle  + \frac{\rho+L_u}{2}\|\bu_i^{t+1} - \bu^{t+1} \|^2 \Bigr)\\
    \leq &\sum_{i=1}^m\Bigl(2 \xi_i^{t+1} + 2 \| \bu_i^{t+1}  -  \bu^{t} + \bu^{t}- \bu^{t+1}\|^2 + \frac{\rho+L_u}{2}\|\bu_i^{t+1}  -  \bu^{t} + \bu^{t}- \bu^{t+1} \|^2 \Bigr)\\
     = &\sum_{i=1}^m\Bigl(2 \xi_i^{t+1} +4 \| \bu_i^{t+1}  -  \bu^{t} \|^2+ \|\bu^{t}- \bu^{t+1}\|^2 + (\rho+L_u)\|\bu_i^{t+1}  -  \bu^{t}\|^2 +\| \bu^{t}- \bu^{t+1} \|^2 \Bigr)\to 0,
\end{align}
which completes the proof.
\end{proof}

\subsection{Proof of Theorem \ref{theorem: sequences convergence}}\label{appendix: proof of theorem 2theorem: sequences convergence}
\begin{theorem}[Restatement of Theorem \ref{theorem: sequences convergence}]\label{theorem: restatement of theorem 2}
    Suppose that Assumptions \ref{assmption lipschitz} and \ref{assumption: coercive} hold, let each client set $\max\{3\alpha_i L_u, 3\alpha_i L_{uv}, 2L_u \}\leq \rho \leq \frac{15}{8}\sigma_i$ and $\sigma_i\geq \alpha_i L_v$, then the following results hold.
    \begin{itemize}
        \item[a)] The accumulating point $\mathcal{P}^{\infty}$ of sequences $\{\mathcal{P}^t\}$ is a stationary point of Problem \eqref{eq: problem pfl2}, and $(\bV^{\infty},\bu^{\infty})$ is a stationary point of Problem \eqref{eq: pfl problem1}.
        \item[b)] Under Assumption \ref{assumption: either real analytic or semialgebraic}, the sequence $\{\mathcal{P}^t\}$ converges to $\mathcal{P}^{\infty}$.
    \end{itemize}
\end{theorem}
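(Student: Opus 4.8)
The plan is to prove (a) by a subsequential-limit argument that passes to the limit in the optimality relations, and (b) by the Attouch--Bolte--Svaiter finite-length machinery built on the \kl property, the key novelty being the control of the extra accuracy-error term.

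For part (a), first I would use Theorem \ref{theorem: convergence}(a) to get that $\{\mathcal{P}^t\}$ is bounded, hence admits an accumulation point $\mathcal{P}^{\infty}=(\bV^{\infty},\bU^{\infty},\bPi^{\infty},\bu^{\infty})$ along some subsequence $\mathcal{P}^{t_k}\to\mathcal{P}^{\infty}$. From Lemma \ref{lemma: non increasing results}(c) all successive differences vanish and $\xi_i^t\to 0$. I would then pass to the limit in each update. The exact optimality of the $\bv_i$-subproblem \eqref{eq: 15 solve v with proximal update} gives $\alpha_i\nabla_{\bv_i}f_i(\bv_i^{t+1},\bu_i^t)+\sigma_i(\bv_i^{t+1}-\bv_i^t)=\boldsymbol{0}$, so in the limit $\nabla_{\bv_i}f_i(\bv_i^{\infty},\bu_i^{\infty})=\boldsymbol{0}$ since $\alpha_i>0$ and the proximal term vanishes. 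Combining the $\xi$-approximate condition \eqref{eq: approximate solution about u} with the dual update \eqref{eq: solve pi} gives $\|\alpha_i\nabla_{\bu_i}f_i(\bv_i^{t+1},\bu_i^{t+1})+\bpi_i^{t+1}\|^2\le\xi_i^{t+1}\to 0$, whence $\alpha_i\nabla_{\bu_i}f_i(\bv_i^{\infty},\bu_i^{\infty})+\bpi_i^{\infty}=\boldsymbol{0}$; since $\bu_i^{t+1}-\bu^t\to\boldsymbol{0}$ and $\bu^{t+1}-\bu^t\to\boldsymbol{0}$ force $\bu_i^{\infty}=\bu^{\infty}$, the term $\rho(\bu_i^{\infty}-\bu^{\infty})$ is zero and the first relation of \eqref{eq: optimal condition of pfl problem2} holds.

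For the dual-sum relation I would pass to the limit in the aggregation identity $\bu^t=\frac{1}{m}\sum_{i=1}^m(\bu_i^t+\frac{1}{\rho}\bpi_i^t)$ (from \eqref{eq: solve u} and \eqref{eq: update message}); using $\bu_i^{\infty}=\bu^{\infty}$ this forces $\sum_{i=1}^m\bpi_i^{\infty}=\boldsymbol{0}$. Together these verify all four relations of Definition \ref{definition: optimal condition}, so $\mathcal{P}^{\infty}$ is a stationary point of \eqref{eq: problem pfl2}. Finally, summing the first relation over $i$ and using $\bu_i^{\infty}=\bu^{\infty}$ and $\sum_i\bpi_i^{\infty}=\boldsymbol{0}$ recovers \eqref{eq: optimal condition of pfl problem1}, exactly the equivalence already recorded in Section \ref{sec: stationary points}; hence $(\bV^{\infty},\bu^{\infty})$ is a stationary point of \eqref{eq: pfl problem1}. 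This step uses no convexity.

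For part (b), I would run the finite-length scheme on the Lyapunov function $\tilde{\ml}$, which is \kl by Proposition \ref{proposition: kl property of L}. Writing $r_t:=\tilde{\ml}(\mathcal{P}^t)-\tilde{\ml}(\mathcal{P}^{\infty})\ge 0$ (the limit exists by Theorem \ref{theorem: convergence}(c)), $d_t:=\sqrt{\Sigma_p^{t+1}}$, and $e_t:=\sqrt{\tilde{\Xi}^{t+1}}$, I would combine Lemma \ref{lemma: sufficient descent}, concavity of $\phi$, the \kl inequality, and Lemma \ref{lemma: relative error} to obtain, on the \kl neighborhood,
\begin{align*}
    \phi(r_t)-\phi(r_{t+1})\ge\frac{a}{\sqrt{b}}\cdot\frac{d_t^2}{\sqrt{d_t^2+e_t^2}}\ge\frac{a}{\sqrt{b}}\cdot\frac{d_t^2}{d_t+e_t}.
\end{align*}
Solving the resulting quadratic inequality for $d_t$ and applying Young's inequality gives $d_t\le\frac{3\sqrt{b}}{2a}(\phi(r_t)-\phi(r_{t+1}))+\frac{1}{2}e_t$. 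The main obstacle, and the point where the argument departs from \cite{Attouch2013Convergence,wang2019global}, is the error term $e_t$: the standard framework has no such term. I would control it by noting $\xi_i^{t+1}\le\mu_i\xi_i^t$ with $\mu_i\in(0,1)$ forces geometric decay $\xi_i^t\le\mu_i^t\xi_i^0$, so $\tilde{\Xi}^{t+1}=\Xi^t-\Xi^{t+1}$ decays geometrically and $\sum_t e_t<\infty$. Summing the bound, the first term telescopes (as $r_t\to 0$, $\phi(0)=0$) and the second is summable, giving $\sum_t\sqrt{\Sigma_p^{t+1}}<\infty$; the dual increments are then handled through Lemma \ref{lemma: pi estimate} together with $\sqrt{x+y}\le\sqrt{x}+\sqrt{y}$ and the same geometric decay. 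Hence $\sum_t\|\mathcal{P}^{t+1}-\mathcal{P}^t\|<\infty$, so $\{\mathcal{P}^t\}$ is Cauchy and converges, with limit equal to $\mathcal{P}^{\infty}$ from part (a). The remaining care is the standard localization showing the iterates eventually enter and remain in the \kl neighborhood, propagated once $r_t$ and the cumulative increment are small, now with the summable $e_t$-budget folded into the estimate.
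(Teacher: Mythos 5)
Your proof is correct and follows the same overall architecture as the paper's: part (a) passes to the limit in the exact optimality of the $\bv_i$-step, the $\xi$-approximate condition combined with the dual update \eqref{eq: solve pi}, the vanishing successive differences from Lemma \ref{lemma: non increasing results}, and the aggregation identity \eqref{eq: solve u}; part (b) runs the \kl finite-length argument on $\tilde{\ml}$ built from Lemmas \ref{lemma: sufficient descent} and \ref{lemma: relative error}. The one genuine difference is how you extract summability of $d_t:=\sqrt{\Sigma_p^{t+1}}$ in part (b). The paper bounds $d_t\le\frac{\sqrt{b}}{2a}\bigl(d_t+e_t\bigr)+\frac{1}{2}\bigl(\phi(r_t)-\phi(r_{t+1})\bigr)$ (with $e_t:=\sqrt{\tilde{\Xi}^{t+1}}$) via AM--GM and then moves the summed $d_t$-term across the inequality, producing the constant $\frac{a}{2a-\sqrt{b}}$; this silently requires $2a>\sqrt{b}$, which is never verified and in fact generally fails, since $a=\min\{\rho/60,\,\sigma_i/2-4\rho/15\}$ while $b\ge\frac{16}{3}\rho^2+2+\frac{8\rho}{15}>2$. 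Your route --- solving the quadratic inequality $\frac{a}{\sqrt{b}}d_t^2\le\bigl(\phi(r_t)-\phi(r_{t+1})\bigr)(d_t+e_t)$ for $d_t$ to obtain $d_t\le\frac{3\sqrt{b}}{2a}\bigl(\phi(r_t)-\phi(r_{t+1})\bigr)+\frac{1}{2}e_t$ --- keeps $d_t$ off the right-hand side entirely and needs no relation between $a$ and $b$, so it actually repairs this latent gap in the paper's own derivation; you also make explicit the geometric summability of $e_t$ (from $\xi_i^{t+1}\le\mu_i\xi_i^t$) and route the dual increments through Lemma \ref{lemma: pi estimate}, both of which the paper asserts only tersely (its one-line claim that $\|\bpi_i^{t+1}-\bpi_i^t\|=\rho\|\bu_i^{t+1}-\bu^t\|$ implies convergence of $\{\bPi^t\}$ needs exactly the summability you supply). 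Be aware that you share the paper's two glossed-over points: the optimality relations in part (a) hold at round $t$ only for clients $i\in\mathcal{S}^t$, so one implicitly needs every client to be selected infinitely often, and the standard \kl-neighborhood localization is invoked without detail; neither is a defect of your proposal relative to the paper's proof.
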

\begin{proof}
a) Let $(\bV^{\infty}, \bU^{\infty}, \bPi^{\infty}, \bu^{\infty})$ be any accumulating point of the sequence $\{(\bV^t,\bU^t,\bPi^t,\bu^t)\}$. Then we have 
\begin{align}
    g_v(\bv_i^{\infty},\bu_i^{\infty})=\alpha_i \nabla_{\bv_i} f_i(\bv_i^{\infty}, \bu_i^{\infty}) =& 0 ,\label{eq: stationary point v}\\
    g_u(\bv_i^{\infty},\bu_i^{\infty}) = \alpha_i \nabla_{\bu_i} f_i(\bv_i^{\infty}, \bu_i^{\infty}) +  \bpi_i^{\infty} +  \rho (\bu_i^{\infty} -\bu^{\infty} ) =& 0, \label{eq: stationary point ui}\\
    \bu_i^{\infty} - \bu^{\infty}  = & 0, \label{eq: stationary point u}\\
    \sum_{i=1}^{m} \bpi_i^{\infty} = \rho \sum_{i=1}^m(\bu^{\infty}-\bu_i^{\infty}) =& 0,\label{eq: stationary point pi}
\end{align}
where \eqref{eq: stationary point v} and \eqref{eq: stationary point ui} follow from \eqref{eq: gv} and \eqref{eq: gu}, \eqref{eq: stationary point u} and \eqref{eq: stationary point pi} follow from Lemma \ref{lemma: non increasing results} and \eqref{eq: solve u}. 
Combining \eqref{eq: stationary point v}, \eqref{eq: stationary point ui}, \eqref{eq: stationary point u} and \eqref{eq: stationary point pi} yields the stationary point condition in \eqref{eq: optimal condition of pfl problem2}.
Moreover, $(\bV^{\infty}, \bu^{\infty})$ is a stationary point of \eqref{eq: pfl problem1}, which completes the proof.

b) Following from Proposition \ref{proposition: kl property of L}, it holds that $\tilde{\ml}$ is a \kl function.
According to Definition \ref{definition: kl property}, we have
\begin{align}
    \phi'\Bigl(\tilde{\ml}(\mathcal{P}^t)-\tilde{\ml}(\mathcal{P}^{\infty})\Bigr)\text{dist}\Bigl(0,\,\partial\tilde{\ml}(\mathcal{P}^t)\Bigr)\geq 1,
\end{align}
which implies 
\begin{align}
    \phi'\Bigl(\tilde{\ml}(\mathcal{P}^t)-\tilde{\ml}(\mathcal{P}^{\infty})\Bigr)\geq\frac{1}{\text{dist}\Bigl(0,\,\partial\tilde{\ml}(\mathcal{P}^t)\Bigr)}\geq \frac{1}{\sqrt{b (\Sigma_p^{t+1}+\tilde{\Xi}^{t+1})}}.\label{eq: 122 relative error form 2}
\end{align}

Since $\phi$ is a concave function, we have
\begin{align}
    \phi\Bigl(\tilde{\ml}(\mathcal{P}^{t+1})-\tilde{\ml}(\mathcal{P}^{\infty})\Bigr)-\phi\Bigl(\tilde{\ml}(\mathcal{P}^t)-\tilde{\ml}(\mathcal{P}^{\infty})\Bigr)&\leq\phi'\Bigl(\tilde{\ml}(\mathcal{P}^t)-\tilde{\ml}(\mathcal{P}^{\infty})\Bigr)\Bigl(\tilde{\ml}(\mathcal{P}^{t+1})-\tilde{\ml}(\mathcal{P}^t)\Bigr) \\
    &\leq-\frac{a \Sigma_p^{t+1}}{\sqrt{b (\Sigma_p^{t+1}+\tilde{\Xi}^{t+1})}}, \label{eq: 123 follows from lemma 1 and 122}
\end{align}
where \eqref{eq: 123 follows from lemma 1 and 122} follows from Lemma \ref{lemma: sufficient descent} and \eqref{eq: 122 relative error form 2}. Under the above condition, we obtain
\begin{align}
    \sqrt{\Sigma_p^{t+1}}&\leq \sqrt{\frac{\sqrt{b (\Sigma_p^{t+1}+\tilde{\Xi}^{t+1})}}{a}\Biggl(\phi\Bigl(\tilde{\ml}(\mathcal{P}^t)-\tilde{\ml}(\mathcal{P}^{\infty})\Bigr)-\phi\Bigl(\tilde{\ml}(\mathcal{P}^{t+1})-\tilde{\ml}(\mathcal{P}^{\infty})\Bigr)\Biggr)} \\
    &\leq\frac{1}{2}\frac{\sqrt{b (\Sigma_p^{t+1}+\tilde{\Xi}^{t+1})}}{a}+\frac{1}{2}\Biggl(\phi\Bigl(\tilde{\ml}(\mathcal{P}^t)-\tilde{\ml}(\mathcal{P}^{\infty})\Bigr)-\phi\Bigl(\tilde{\ml}(\mathcal{P}^{t+1})-\tilde{\ml}(\mathcal{P}^{\infty})\Bigr)\Biggr)\\
    &\leq\frac{1}{2}\frac{\sqrt{b \Sigma_p^{t+1}}}{a }+\frac{1}{2}\frac{\sqrt{b \tilde{\Xi}^{t+1}}}{a}+\frac{1}{2}\Biggl(\phi\Bigl(\tilde{\ml}(\mathcal{P}^t)-\tilde{\ml}(\mathcal{P}^{\infty})\Bigr)-\phi\Bigl(\tilde{\ml}(\mathcal{P}^{t+1})-\tilde{\ml}(\mathcal{P}^{\infty})\Bigr)\Biggr)\label{eq: 79}.
\end{align}
Summing over \eqref{eq: 79} from 0 to $+\infty$ yields
\begin{align}
    \sum_{t=0}^{+\infty}\sqrt{\Sigma_p^{t+1}}\leq\sum_{t=0}^{+\infty}\frac{1}{2}\frac{\sqrt{b \Sigma_p^{t+1}}}{a}+\frac{1}{2}\frac{\sqrt{b \tilde{\Xi}^{t+1}}}{a}+ \frac{1}{2}\phi\Bigl(\tilde{\ml}(\mathcal{P}^0)-\tilde{\ml}(\mathcal{P}^{\infty}\Bigr),
\end{align}
which implies 
\begin{align}
    \sum_{t=0}^{+\infty}\sqrt{\Sigma_p^{t+1}}\leq \frac{a}{2a-\sqrt{b}}\phi\Bigl(\tilde{\ml}(\mathcal{P}^0)-\tilde{\ml}(\mathcal{P}^{\infty}\Bigr)+\sum_{t=1}^{+\infty}\frac{\sqrt{b\tilde{\Xi}^{t+1}}}{2a-\sqrt{b}}\leq \infty.
\end{align}
Following from the definition of $\Sigma_p^{t+1}$, we have
\begin{align}\label{eq: 80}
    \Sigma_p^{t+1}=\sum_{i=1}^m(\|\bu^{t+1}-\bu^{t}\|^2+\|\bu^{t+1}_i-\bu_i^{t}\|^2+\|\bv_i^{t+1}-\bv_i^t\|^2)<\infty.
\end{align}
Then we can infer from \eqref{eq: 80} that the sequence $\{\bV^t,\bU^t,\bu^t\}$ is convergent. Next, following from the fact that $\|\bpi_i^{t+1}-\bpi_i^t\| = \rho \|\bu_i^{t+1} - \bu^t \| $, we can infer that $\{\Pi^t\}$ is convergent. Overall, the sequence $\{\mathcal{P}^t\}$ is convergent, which complete the proof.
\end{proof}

\subsection{Proof of Theorem \ref{theorem: convergence rate based on kl property}}
\begin{theorem}[Restatement of Theorem \ref{theorem: convergence rate based on kl property}]\label{theorem: restatement of theorem 3}
    Let $\{\mathcal{P}^t\}$ be the sequence generated by Algorithm \ref{alg: fedapm}, and $\mathcal{P}^{\infty}$ be its limit, let $\phi(x)=\frac{\sqrt{c}}{1-\theta}x^{1-\theta}$ be a desingularizing function, where $c>0$ and $\theta\in[0,1)$, then under Assumptions \ref{assumption: either real analytic or semialgebraic} and \ref{assmption lipschitz}, let each client $i$ set $\max\{3\alpha_i L_u, 3\alpha_i L_{uv}, 2L_u \}\leq \rho \leq \frac{15}{8}\sigma_i$ and $\sigma_i\geq \alpha_i L_v$, the following results hold.
\begin{itemize}
    \item[a)] If $\theta=0$, then there exists a $t_1$ such that the sequence $\{\tilde{\ml}(\mathcal{P}^t)\}$, $t\geq t_1$ converges in a finite number of iterations.
    \item[b)] If $\theta\in(0,1/2]$, then there exists a $t_2$ such that for any $t\geq t_2$,
\begin{align*}
    \tilde{\ml}(\mathcal{P}^{t+1})-\tilde{\ml}(\mathcal{P}^{\infty})\leq\Bigl(\frac{bc}{a+bc}\Bigr)^{t-t_2+1}\Bigl(\tilde{\ml}(\mathcal{P}^{t_2})-f^*\Bigr) + (a+bc) \Xi^{t_2-1} .
\end{align*}
   \item[c)] If $\theta \in(1/2,1)$, then there exists a $t_3$ such that for any $t
    \geq t_3$, 
\begin{align*}
    \tilde{\ml}(\mathcal{P}^{t+1})-\tilde{\ml}(\mathcal{P}^{\infty})\leq\Bigl(\frac{bc}{(2\theta-1)\kappa a(t-t_3)}\Bigr)^{\frac{1}{2\theta-1}},
\end{align*}
where $\mu>0$ is a constant.
\end{itemize}
\end{theorem}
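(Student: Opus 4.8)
The plan is to run the Kurdyka--Łojasiewicz descent argument of \cite{Attouch2013Convergence}, but while carefully tracking the inexactness error $\tilde{\Xi}^{t+1}$ that the relative-error estimate carries on its right-hand side — this is precisely the feature the standard frameworks cannot handle. Write $r^t := \tilde{\ml}(\mathcal{P}^t) - \tilde{\ml}(\mathcal{P}^{\infty}) \ge 0$; by Lemma \ref{lemma: sufficient descent} the sequence $\{r^t\}$ is non-increasing, and by Theorem \ref{theorem: convergence} it tends to $0$. Since $\phi(x)=\frac{\sqrt{c}}{1-\theta}x^{1-\theta}$ gives $\phi'(x)=\sqrt{c}\,x^{-\theta}$, the \kl inequality $\phi'(r^t)\,\text{dist}(0,\partial\tilde{\ml}(\mathcal{P}^t))\ge 1$ combined with the relative-error Lemma \ref{lemma: relative error} yields the master estimate $(r^t)^{2\theta}\le bc\,(\Sigma_p^{t+1}+\tilde{\Xi}^{t+1})$, while sufficient descent gives $a\,\Sigma_p^{t+1}\le r^t-r^{t+1}$. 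The only other ingredient is the geometric decay $\Xi^{t+1}\le(\max_i\mu_i)\Xi^t$ forced by \eqref{eq: approximate solution}, which makes $\sum_t\tilde{\Xi}^{t+1}$ a finite telescoping tail.

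For part (a), $\theta=0$ gives $\phi'\equiv\sqrt{c}$, so whenever $\mathcal{P}^t$ lies in the \kl neighborhood with $r^t>0$ the \kl inequality forces $\text{dist}(0,\partial\tilde{\ml}(\mathcal{P}^t))\ge 1/\sqrt{c}$; but the right side of the master estimate tends to $0$ by Lemma \ref{lemma: non increasing results}(c), contradicting this positive lower bound unless $r^t=0$, which gives finite convergence. For parts (b) and (c) I first note that once $t$ is large enough that $r^t\le 1$: when $\theta\in(0,1/2]$ we have $2\theta\le 1$, hence $r^t\le(r^t)^{2\theta}$ and the master estimate simplifies to $r^t\le bc\,(\Sigma_p^{t+1}+\tilde{\Xi}^{t+1})$.

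The linear rate in (b) then follows from a monotonicity trick: since $r^{t+1}\le r^t$ we also have $r^{t+1}\le bc\,(\Sigma_p^{t+1}+\tilde{\Xi}^{t+1})$; multiplying by $a$ and substituting $a\Sigma_p^{t+1}\le r^t-r^{t+1}$ gives $(a+bc)r^{t+1}\le bc\,r^t+abc\,\tilde{\Xi}^{t+1}$, that is $r^{t+1}\le\frac{bc}{a+bc}r^t+\frac{abc}{a+bc}\tilde{\Xi}^{t+1}$. Unrolling this recursion from $t_2$ and telescoping the geometric error through $\sum_{j\ge t_2}\tilde{\Xi}^{j+1}=\Xi^{t_2}-\lim\Xi\le\Xi^{t_2-1}$ produces exactly the stated bound, after using $r^{t_2}\le\tilde{\ml}(\mathcal{P}^{t_2})-f^*$ from Lemma \ref{lemma: non increasing results}(b) and the crude estimate $\frac{abc}{a+bc}\le a+bc$.

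For part (c), $\theta\in(1/2,1)$, the master estimate and sufficient descent give the differential-type recursion $r^t-r^{t+1}\ge\frac{a}{bc}(r^t)^{2\theta}-a\,\tilde{\Xi}^{t+1}$. Here lies the main obstacle: with $2\theta>1$ one cannot discard the error term, so I must show that the geometrically vanishing $\tilde{\Xi}^{t+1}$ does not destroy the polynomial decay. The plan is to exploit the rate mismatch — $\tilde{\Xi}^{t+1}$ decays geometrically whereas $(r^t)^{2\theta}$ can decay only polynomially — to find $t_3$ and $\kappa\in(0,1)$ with $a\,\tilde{\Xi}^{t+1}\le(1-\kappa)\frac{a}{bc}(r^t)^{2\theta}$ for $t\ge t_3$, yielding $r^t-r^{t+1}\ge\kappa\frac{a}{bc}(r^t)^{2\theta}$; applying the standard Attouch--Bolte scalar lemma (if $s_t-s_{t+1}\ge C s_t^{2\theta}$ with $2\theta>1$ then $s_t=O(t^{-1/(2\theta-1)})$) with $C=\kappa a/(bc)$ delivers the claimed sublinear bound. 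I expect this absorption step to be the delicate part, since it reconciles the inexact-ADMM error with the \kl exponent and must avoid circularity; I would secure it by first establishing $r^t\to 0$ and the summability of $\tilde{\Xi}$ from Theorem \ref{theorem: sequences convergence}, and then bootstrapping to guarantee $(r^t)^{2\theta}$ eventually dominates the geometric tail.
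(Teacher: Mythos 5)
Your treatments of parts (a) and (b) are essentially the paper's own arguments: the same contradiction with the vanishing right-hand side of the relative-error bound for $\theta=0$, and for $\theta\in(0,1/2]$ the same linear recursion $r^{t+1}\leq\frac{bc}{a+bc}\bigl(r^t+a\,\tilde{\Xi}^{t+1}\bigr)$ obtained by combining the \kl inequality, Lemma \ref{lemma: relative error}, Lemma \ref{lemma: sufficient descent}, and $r\leq r^{2\theta}$ once $r\leq 1$; your way of telescoping $\sum_{j\geq t_2}\tilde{\Xi}^{j+1}\leq\Xi^{t_2-1}$ is marginally cleaner than the paper's term-by-term bound $\tilde{\Xi}^{t+1}\leq\Xi^t\leq\Xi^{t_2-1}$, but the route is the same.

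Part (c) is where you have a genuine gap. Your absorption step requires $bc\,\tilde{\Xi}^{t+1}\leq(1-\kappa)\,(r^t)^{2\theta}$ \emph{for all} $t\geq t_3$, and you justify it by asserting a ``rate mismatch'': that $\tilde{\Xi}^{t+1}$ decays geometrically while $(r^t)^{2\theta}$ ``can decay only polynomially.'' But no lemma in the paper (nor anything in your proposal) gives a \emph{lower} bound on $r^t$; the polynomial decay is the conclusion, not a hypothesis, so invoking it here is circular. Concretely, nothing rules out that $r^t$ drops below the geometric tail $\bar{\mu}^{t/(2\theta)}$ at infinitely many times (or even super-geometrically), in which case your absorption inequality fails at those times and the scalar Attouch--Bolte lemma cannot be applied to the whole tail of the sequence; knowing $r^t\to 0$ and $\sum_t\tilde{\Xi}^{t+1}<\infty$ (your proposed bootstrap inputs) does not repair this, since neither controls the \emph{ratio} $\tilde{\Xi}^{t+1}/(r^t)^{2\theta}$. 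The paper circumvents exactly this difficulty by a per-iteration dichotomy on the surrogate $\varphi(z):=\frac{bc}{a(1-2\theta)}z^{1-2\theta}$: if $(r^t)^{-2\theta}\geq(r^{t+1})^{-2\theta}/R$ (no fast drop), the master inequality \eqref{eq: kl final} yields a fixed decrement $\varphi(r^t)-\varphi(r^{t+1})\geq\frac{1}{R}-\frac{bc}{R}\frac{\Xi^t}{(r^{t+1})^{2\theta}}$ bounded below by a constant; if instead $r^{t+1}$ is much smaller than $r^t$ (precisely the regime your absorption cannot handle), then the negative-power surrogate automatically increases by a fixed amount $\overline{\kappa}$, with no need to compare $r^t$ to $\tilde{\Xi}^{t+1}$ at all. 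Summing the per-step decrement of $\varphi$ then gives the $O\bigl((t-t_3)^{-1/(2\theta-1)}\bigr)$ rate. Your argument could be repaired in the same spirit --- at any time the absorption fails, $(r^t)^{2\theta}\leq\frac{bc}{1-\kappa}\tilde{\Xi}^{t+1}$ forces $(r^t)^{1-2\theta}$ to be geometrically large, and one can stitch these ``failure'' times into the linear growth of $(r^t)^{1-2\theta}$ produced by the scalar lemma on the intervals in between --- but this case analysis is the essential missing idea, not a routine detail.
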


\begin{proof}
Given the desingularizing function $\phi(x)=\frac{\sqrt{c}}{1-\theta}x^{1-\theta}$, we have
\begin{align}
    1&\leq\phi'\Bigl(\tilde{\ml}(\mathcal{P}^{t+1})-\tilde{\ml}(\mathcal{P}^{\infty})\Bigr)^2 \text{dist}\Bigl(0,\,\partial\tilde{\ml}(\mathcal{P}^t)\Bigr)^2\\
    &\leq c \Bigl(\tilde{\ml}(\mathcal{P}^{t+1})-\tilde{\ml}(\mathcal{P}^{\infty})\Bigr)^{-2\theta} b (\Sigma_p^{t+1}+\tilde{\Xi}^{t+1})\label{eq:103 follows from lemma 1}\\
    &\leq  c \Bigl(\tilde{\ml}(\mathcal{P}^{t+1})-\tilde{\ml}(\mathcal{P}^{\infty})\Bigr)^{-2\theta}b \Bigl(\frac{\tilde{\ml}(\mathcal{P}^t)-\tilde{\ml}(\mathcal{P}^{t+1})}{a}+\tilde{\Xi}^{t+1}\Bigr)\label{eq: 104 follows from lemma 2},
\end{align}
where \eqref{eq:103 follows from lemma 1} follows from Lemma \ref{lemma: sufficient descent} and \eqref{eq: 104 follows from lemma 2} follows from Lemma \ref{lemma: relative error}. Under the above conditions, we can obtain
\begin{align}\label{eq: kl final}
    \frac{a}{b c} \Bigl(\tilde{\ml}(\mathcal{P}^{t+1})-\tilde{\ml}(\mathcal{P}^{\infty})\Bigr)^{2\theta}\leq\Bigl(\tilde{\ml}(\mathcal{P}^t)-\tilde{\ml}(\mathcal{P}^{\infty})\Bigr)-\Bigl(\tilde{\ml}(\mathcal{P}^{t+1})-\tilde{\ml}(\mathcal{P}^{\infty})\Bigr)+a \tilde{\Xi}^{t+1}.
\end{align}
Next, we consider the three cases with respect to $\theta$.
\begin{itemize}
    \item If $\theta=0$, then according to \eqref{eq: kl final}, it holds that,
\begin{align}\label{eq:conflict}
    \Bigl(\tilde{\ml}(\mathcal{P}^t)-\tilde{\ml}(\mathcal{P}^{\infty})\Bigr)-\Bigl(\tilde{\ml}(\mathcal{P}^{t+1})-\tilde{\ml}(\mathcal{P}^{\infty})\Bigr)+a \tilde{\Xi}^{t+1}\geq \frac{a}{bc}.
\end{align}

However, $\tilde{\ml}(\mathcal{P}^t)-\tilde{\ml}(\mathcal{P}^{\infty})\to 0$ and $\tilde{\Xi}^{t+1}\to 0$, which are conflict with \eqref{eq:conflict}. Therefore, we can deduce that there must exist $t\geq t_1>0$ such that $\tilde{\ml}(\mathcal{P}^t)-\tilde{\ml}(\mathcal{P}^{\infty})= 0$.
    \item If $\theta\in (0,1/2]$, there must exist $t\geq t_2>0$, such that $0\leq\tilde{\ml}(\mathcal{P}^{t+1})-\tilde{\ml}(\mathcal{P}^{\infty})\leq 1$. Then according to \eqref{eq: kl final}, we have
\begin{align}
    \Bigl(\tilde{\ml}(\mathcal{P}^t)-\tilde{\ml}(\mathcal{P}^{\infty})\Bigr)-\Bigl(\tilde{\ml}(\mathcal{P}^{t+1})-\tilde{\ml}(\mathcal{P}^{\infty})\Bigr)+a \tilde{\Xi}^{t+1}\geq\frac{a}{bc} \Bigl(\tilde{\ml}(\mathcal{P}^{t+1})-\tilde{\ml}(\mathcal{P}^{\infty})\Bigr)^{2\theta}\geq \frac{a}{bc} \Bigl(\tilde{\ml}(\mathcal{P}^{t+1})-\tilde{\ml}(\mathcal{P}^{\infty})\Bigr),
\end{align}
which implies 
\begin{align}
    \tilde{\ml}(\mathcal{P}^{t+1})-\tilde{\ml}(\mathcal{P}^{\infty})&\leq\frac{bc}{a+bc}\Bigl(\tilde{\ml}(\mathcal{P}^t)-\tilde{\ml}(\mathcal{P}^{\infty})+a \tilde{\Xi}^{t+1}\Bigr)\\
    &\leq\frac{bc}{a+bc}\Bigl(\tilde{\ml}(\mathcal{P}^t)-\tilde{\ml}(\mathcal{P}^{\infty})\Bigr) + \frac{bc}{a+bc}a \Xi^{t}   \label{eq: 135 follows from fact xi geq tilde xi}\\
    &\leq\Bigl(\frac{bc}{a+bc}\Bigr)^{t-t_2+1}\Bigl(\tilde{\ml}(\mathcal{P}^{t_2})-\tilde{\ml}(\mathcal{P}^{\infty})\Bigr)  +\Bigl(\frac{bc}{a+bc}\Bigr)^{t-t_2+1} a \Xi^{t_2-1} +\cdots+ \Bigl(\frac{bc}{a+bc}\Bigr)^2 a \Xi^{t-1}+ \frac{bc}{a+bc}a \Xi^{t}\\
    &\leq\Bigl(\frac{bc}{a+bc}\Bigr)^{t-t_2+1}\Bigl(\tilde{\ml}(\mathcal{P}^{t_2})-\tilde{\ml}(\mathcal{P}^{\infty})\Bigr)  +a \Xi^{t_2-1}\Biggl( \Bigl(\frac{bc}{a+bc}\Bigr)^{t-t_2+1} +\cdots+\Bigl(\frac{bc}{a+bc}\Bigr)^2+\frac{bc}{a+bc} \Biggr) \\
    &\leq\Bigl(\frac{bc}{a+bc}\Bigr)^{t-t_2+1}\Bigl(\tilde{\ml}(\mathcal{P}^{t_2})-f^*-\frac{28}{\rho}\Xi^{t_2} \Bigr) + (a+bc) \Biggl( 1-\Bigl(\frac{bc}{a+bc}\Bigr)^{t-t_2+1} \Biggr)\Xi^{t_2-1} \label{eq: 139 follows from lemma 14}\\
    &\leq\Bigl(\frac{bc}{a+bc}\Bigr)^{t-t_2+1}\Bigl(\tilde{\ml}(\mathcal{P}^{t_2})-f^*\Bigr) + (a+bc) \Xi^{t_2-1} 
\end{align}
where \eqref{eq: 135 follows from fact xi geq tilde xi} follows from the fact that $\Xi^t\geq \tilde{\Xi}^{t+1}$ and \eqref{eq: 139 follows from lemma 14} follows from Lemma \ref{lemma: non increasing results}.

\item If $\theta\in(1/2,1)$, we define a function $\phi(z):=\frac{bc}{a(1-2\theta)}z^{1-2\theta}$. Let $1>\frac{1}{R}>\min_i{\mu_i}>0$ be a constant, we consider two cases, if $\Bigl(\tilde{\ml}(\mathcal{P}^t)-\tilde{\ml}(\mathcal{P}^{\infty})\Bigr)^{-2\theta}\geq\Bigl(\tilde{\ml}(\mathcal{P}^{t+1})-\tilde{\ml}(\mathcal{P}^{\infty})\Bigr)^{-2\theta}/R$, we obtain
\begin{align}\label{eq: geq 0}
    &\phi\Bigl(\tilde{\ml}(\mathcal{P}^t)-\tilde{\ml}(\mathcal{P}^{\infty})\Bigr)-\phi\Bigl(\tilde{\ml}(\mathcal{P}^{t+1})-\tilde{\ml}(\mathcal{P}^{\infty})\Bigr)\\
    =&\int_{\tilde{\ml}(\mathcal{P}^{t+1})-\tilde{\ml}(\mathcal{P}^{\infty})}^{\tilde{\ml}(\mathcal{P}^t)-\tilde{\ml}(\mathcal{P}^{\infty})}\phi'(z)dz=\int_{\tilde{\ml}(\mathcal{P}^{t+1})-\tilde{\ml}(\mathcal{P}^{\infty})}^{\tilde{\ml}(\mathcal{P}^t)-\tilde{\ml}(\mathcal{P}^{\infty})}\frac{bc}{a}z^{-2\theta}dz\\
    \geq&\frac{bc}{a}\Bigl(\tilde{\ml}(\mathcal{P}^t)-\tilde{\ml}(\mathcal{P}^{t+1})\Bigr)\Bigl(\tilde{\ml}(\mathcal{P}^t)-\tilde{\ml}(\mathcal{P}^{\infty})\Bigr)^{-2\theta}\\
    \geq&\frac{bc}{Ra}\Bigl(\tilde{\ml}(\mathcal{P}^t)-\tilde{\ml}(\mathcal{P}^{t+1})\Bigr)\Bigl(\tilde{\ml}(\mathcal{P}^{t+1})-\tilde{\ml}(\mathcal{P}^{\infty})\Bigr)^{-2\theta}\\
    =&\frac{bc}{Ra}\Bigl(\tilde{\ml}(\mathcal{P}^t)-\tilde{\ml}(\mathcal{P}^{t+1})+a \tilde{\Xi}^{t+1}\Bigr)\Bigl(\tilde{\ml}(\mathcal{P}^{t+1})-\tilde{\ml}(\mathcal{P}^{\infty})\Bigr)^{-2\theta}-\frac{bc}{Ra}a \tilde{\Xi}^{t+1}\Bigl(\tilde{\ml}(\mathcal{P}^{t+1})-\tilde{\ml}(\mathcal{P}^{\infty})\Bigr)^{-2\theta}\\
    \geq&\frac{1}{R}-\frac{bc}{R}\frac{\tilde{\Xi}^{t+1}}{\bigl(\tilde{\ml}(\mathcal{P}^{t+1})-\tilde{\ml}(\mathcal{P}^{\infty})\bigr)^{2\theta}}\label{eq: 145 follows from 132}\\
    \geq&\frac{1}{R}-\frac{bc}{R}\frac{\Xi^{t}}{\bigl(\tilde{\ml}(\mathcal{P}^{t+1})-\tilde{\ml}(\mathcal{P}^{\infty})\bigr)^{2\theta}},
\end{align}
where \eqref{eq: 145 follows from 132} follows from \eqref{eq: kl final}. We proceed by considering three cases for the analysis of $\tilde{\Xi}^{t+1}$: whether it is a lower-order infinitesimal, a higher-order infinitesimal, or an infinitesimal of the same order relative to $\Bigl(\tilde{\ml}(\mathcal{P}^{t+1})-\tilde{\ml}(\mathcal{P}^{\infty})\Bigr)^{2\theta}$.
\begin{itemize}
    \item[i)]
If $\Xi^{t}$ is a lower-order infinitesimal of $\Bigl(\tilde{\ml}(\mathcal{P}^{t+1})-\tilde{\ml}(\mathcal{P}^{\infty})\Bigr)^{2\theta}$, then we can obtain
\begin{align}\label{eq: 116 lower-order infinitesimal}
    \infty = \lim_{t\to +\infty}\frac{\Xi^{t}}{\bigl(\tilde{\ml}(\mathcal{P}^{t+1})-\tilde{\ml}(\mathcal{P}^{\infty})\bigr)^{2\theta}}= \lim_{t\to +\infty}\frac{\sum_{i=1}^m\xi_i^{t}}{\bigl(\tilde{\ml}(\mathcal{P}^{t+1})-\tilde{\ml}(\mathcal{P}^{\infty})\bigr)^{2\theta}} \leq \lim_{t\to +\infty}\frac{\sum_{i=1}^m \mu_i^t\xi_i^{0}}{\bigl(\tilde{\ml}(\mathcal{P}^{t+1})-\tilde{\ml}(\mathcal{P}^{\infty})\bigr)^{2\theta}}.
\end{align}
Based on \eqref{eq: 116 lower-order infinitesimal}, we can derive the convergence rate of $\bigl(\tilde{\ml}(\mathcal{P}^{t+1})-\tilde{\ml}(\mathcal{P}^{\infty})\bigr)^{2\theta}\to 0$ is faster than $\sum_{i=1}^m\mu_i^t\xi^{0}\to 0$, which implies 
\begin{align}\label{eq:117 if confilit with our assumption}
    \bigl(\tilde{\ml}(\mathcal{P}^{t+1})-\tilde{\ml}(\mathcal{P}^{\infty})\bigr)^{2\theta}<\min_i \mu_i \times \bigl(\tilde{\ml}(\mathcal{P}^{t})-\tilde{\ml}(\mathcal{P}^{\infty})\bigr)^{2\theta}<\frac{1}{R} \bigl(\tilde{\ml}(\mathcal{P}^{t})-\tilde{\ml}(\mathcal{P}^{\infty})\bigr)^{2\theta},
\end{align}
Note that \eqref{eq:117 if confilit with our assumption} is conflict with $\Bigl(\tilde{\ml}(\mathcal{P}^t)-\tilde{\ml}(\mathcal{P}^{\infty})\Bigr)^{-2\theta}\geq\Bigl(\tilde{\ml}(\mathcal{P}^{t+1})-\tilde{\ml}(\mathcal{P}^{\infty})\Bigr)^{-2\theta}/R$, then $\Xi^{t}$ cannot be a lower-order infinitesimal of $\Bigl(\tilde{\ml}(\mathcal{P}^{t+1})-\tilde{\ml}(\mathcal{P}^{\infty})\Bigr)^{2\theta}$.
\item[ii)]
If $\Xi^{t}$ is a higher-order infinitesimal of $\Bigl(\tilde{\ml}(\mathcal{P}^{t+1})-\tilde{\ml}(\mathcal{P}^{\infty})\Bigr)^{2\theta}$, then we obtain
\begin{align}
    \lim_{t\to +\infty}\frac{\Xi^{t}}{\bigl(\tilde{\ml}(\mathcal{P}^{t+1})-\tilde{\ml}(\mathcal{P}^{\infty})\bigr)^{2\theta}}=0.
\end{align}

If $\Xi^{t}$ and $\Bigl(\tilde{\ml}(\mathcal{P}^{t+1})-\tilde{\ml}(\mathcal{P}^{\infty})\Bigr)^{2\theta}$ are same-order infinitesimals, then we obtain
\begin{align}
    \lim_{t\to +\infty}\frac{\Xi^{t}}{\bigl(\tilde{\ml}(\mathcal{P}^{t+1})-\tilde{\ml}(\mathcal{P}^{\infty})\bigr)^{2\theta}}=\alpha\neq0.
\end{align}

Then there exists $Q>0$ such that $\frac{\Xi^{t}}{\bigl(\tilde{\ml}(\mathcal{P}^{t+1})-\tilde{\ml}(\mathcal{P}^{\infty})\bigr)^{2\theta}}\leq Q$. Therefore, we obtain that
\begin{align}\label{eq: 152 to 158}
    \phi\Bigl(\tilde{\ml}(\mathcal{P}^t)-\tilde{\ml}(\mathcal{P}^{\infty})\Bigr)-\phi\Bigl(\tilde{\ml}(\mathcal{P}^{t+1})-\tilde{\ml}(\mathcal{P}^{\infty})\Bigr)\geq\frac{1}{R}-\frac{bcQ}{R}.
\end{align}
\end{itemize}


Consider another case, if $\Bigl(\tilde{\ml}(\mathcal{P}^t)-\tilde{\ml}(\mathcal{P}^{\infty})\Bigr)^{-2\theta}<\Bigl(\tilde{\ml}(\mathcal{P}^{t+1})-\tilde{\ml}(\mathcal{P}^{\infty})\Bigr)^{-2\theta}/R$, then we can infer
\begin{align}
    R^{\frac{1}{2\theta}}\Bigl(\tilde{\ml}(\mathcal{P}^{t+1})-\tilde{\ml}(\mathcal{P}^{\infty})\Bigr)< \tilde{\ml}(\mathcal{P}^{t})-\tilde{\ml}(\mathcal{P}^{\infty}),
\end{align}
which implies
\begin{align}
    \Bigl(\tilde{\ml}(\mathcal{P}^{t+1})-\tilde{\ml}(\mathcal{P}^{\infty})\Bigr)^{1-2\theta}&>\overline{R} \Bigl(\tilde{\ml}(\mathcal{P}^{t})-\tilde{\ml}(\mathcal{P}^{\infty})\Bigr)^{1-2\theta},\\
    \Bigl(\tilde{\ml}(\mathcal{P}^{t+1})-\tilde{\ml}(\mathcal{P}^{\infty})\Bigr)^{1-2\theta}-\Bigl(\tilde{\ml}(\mathcal{P}^{t})-\tilde{\ml}(\mathcal{P}^{\infty})\Bigr)^{1-2\theta}&>(\overline{R}-1)\Bigl(\tilde{\ml}(\mathcal{P}^{t})-\tilde{\ml}(\mathcal{P}^{\infty})\Bigr)^{1-2\theta},
\end{align}
where $\overline{R}=R^{\frac{2\theta-1}{\theta}}> 1$. Since $\overline{R}-1>0$ and $\tilde{\ml}(\mathcal{P}^{t})-\tilde{\ml}(\mathcal{P}^{\infty})\to 0^{+}$, then there exists $\overline{\kappa}>0$ such that $(\overline{R}-1)\Bigl(\tilde{\ml}(\mathcal{P}^{t})-\tilde{\ml}(\mathcal{P}^{\infty})\Bigr)^{1-2\theta}>\overline{\kappa}$ for all $t\geq t_3$. 
Therefore, we obtain that
\begin{align}
    \Bigl(\tilde{\ml}(\mathcal{P}^{t+1})-\tilde{\ml}(\mathcal{P}^{\infty})\Bigr)^{1-2\theta}-\Bigl(\tilde{\ml}(\mathcal{P}^{t})-\tilde{\ml}(\mathcal{P}^{\infty})\Bigr)^{1-2\theta}\geq\overline{\kappa}>0
\end{align}
for all $t\geq t_3$. Then we bound $\phi\Bigl(\tilde{\ml}(\mathcal{P}^t)-\ml(\mathcal{P}^{\infty})\Bigr)-\phi\Bigl(\tilde{\ml}(\mathcal{P}^{t+1})-\tilde{\ml}(\mathcal{P}^{\infty})\Bigr)$ as 
\begin{align}
    \phi\Bigl(\tilde{\ml}(\mathcal{P}^t)-\tilde{\ml}(\mathcal{P}^{\infty})\Bigr)-\phi\Bigl(\tilde{\ml}(\mathcal{P}^{t+1})-\tilde{\ml}(\mathcal{P}^{\infty})\Bigr)
    =&\frac{bc}{(1-2\theta)a}\Biggl(\Bigl(\tilde{\ml}(\mathcal{P}^t)-\tilde{\ml}(\mathcal{P}^{\infty})\Bigr)^{1-2\theta}-\Bigl(\tilde{\ml}(\mathcal{P}^{t+1})-\tilde{\ml}(\mathcal{P}^{\infty})\Bigr)^{1-2\theta}\Biggr) \\
    \geq&\frac{\overline{\kappa}bc}{(2\theta-1)a}.\label{eq: geq 1}
\end{align}

If we define $\kappa:=\min\{\frac{1}{R}-\frac{bcQ}{R},\frac{\overline{\kappa}bc}{(2\theta-1)a}\}>0$, one can combine \eqref{eq: 152 to 158} and \eqref{eq: geq 1} to obtain that
\begin{align}\label{eq: to sum inequality}
    \phi\Bigl(\tilde{\ml}(\mathcal{P}^t)-\tilde{\ml}(\mathcal{P}^{\infty})\Bigr)-\phi\Bigl(\tilde{\ml}(\mathcal{P}^{t+1})-\tilde{\ml}(\mathcal{P}^{\infty})\Bigr)\geq\kappa
\end{align}
for all $t\geq t_3$. By summing \eqref{eq: to sum inequality} from $t_3$ to some $t$ greater than $t_3$, we obtain
\begin{align}
    \phi\Bigl(\tilde{\ml}(\mathcal{P}^{t_3})-\tilde{\ml}(\mathcal{P}^{\infty})\Bigr)-\phi\Bigl(\tilde{\ml}(\mathcal{P}^{t+1})-\tilde{\ml}(\mathcal{P}^{\infty})\Bigr)\geq (t-t_3)\kappa,
\end{align}
which implies 
\begin{align}\label{eq: 160 substitute}
    \phi\Bigl(\tilde{\ml}(\mathcal{P}^{t+1})-\tilde{\ml}(\mathcal{P}^{\infty})\Bigr)\leq -(t-t_3)\kappa.
\end{align}
Substituting $\phi(z):=\frac{bc}{a(1-2\theta)}z^{1-2\theta}$ into \eqref{eq: 160 substitute} yields
\begin{align}
    \tilde{\ml}(\mathcal{P}^{t+1})-\tilde{\ml}(\mathcal{P}^{\infty})\leq\Bigl(\frac{(2\theta-1)\kappa a}{bc}(t-t_3)\Bigr)^{\frac{1}{1-2\theta}},
\end{align}
which completes the proof.
\end{itemize}
\end{proof}

\newpage

\section{Experimental Details and Extra Results}\label{appendix: Experimental Details}
\subsection{Full Details on the Datasets}\label{sec: dataset details}
We introduce three commonly used multimodal datasets for training in FL: \crema, KU-HAR, and \crisis, along with \cifar, a well-known image classification dataset. 
Below is a description of these datasets:
\begin{itemize}
    \item \cifar is a widely used image classification dataset, consisting of 60,000 32×32 color images categorized into 10 different classes, with 6,000 images per class. 
While it is not multimodal, \cifar serves as a fundamental benchmark for image classification tasks and is commonly used in computer vision research.
    \item \crisis is a dataset that focuses on analyzing the role of social media during disasters and emergencies, where platforms like Twitter are crucial for disseminating information about the disaster's impact, such as property damage, injuries, and fatalities, as well as urgent needs for help.
\crisis includes 18,100 tweets, paired with visual and textual information, related to seven major natural disasters, such as the 2017 California Wildfires. 
This dataset is used to assess the impact of disasters, like utility damage and casualties, and integrates text and image modalities for multimodal learning.
    \item \kuhar is a dataset focused on human activity recognition (HAR), using wearable sensor data such as accelerometers and gyroscopes. 
Due to the rising popularity of wearable technology, HAR has gained significant interest in FL. 
The dataset contains data from 90 participants (75 male, 15 female) across 18 different activities. 
In our experiments,  six activities are selected, along with the addition of jumping and running.
The accelerometer and gyroscope data in \kuhar are treated as two distinct modalities, making it a multimodal dataset.
    \item \crema is primarily designed for emotion recognition (ER) tasks, which have broad applications in virtual assistants, human behavior analysis, and AI-enhanced education. 
The dataset consists of 7,442 audio-visual clips recorded by 91 actors. Each actor is instructed to recite 12 sentences while expressing six distinct emotions.
\end{itemize}




\subsubsection{Data Partitioning} In processing different modalities, the data is first partitioned based on specific characteristics. 
The first partitioning method is through unique client identifiers. 
For example, datasets related to audio, such as \crema, contain speech or speech-visual data organized by speaker IDs. 
Therefore, it is logical to partition client data in FL by speaker ID. Similarly, we apply the same approach to the \kuhar dataset. 
On the other hand, for datasets like \cifar and \crisis, partitioning is conducted using a Dirichlet distribution.

\subsubsection{Feature Processing} The following outlines how we handle different data modalities. 
In our experiments, we utilize pre-trained models as backbone networks to extract features for downstream model training \cite{Feng2023FedMultimodal}. 
The following is a detailed introduction for each modality:
\begin{itemize}
\item Visual modalities: We use MobileNetV2 \cite{howard2017mobilenets} for feature extraction. 
This lightweight model, with 4.3M parameters, is designed for resource-constrained devices, offering efficient computation without sacrificing accuracy. 
Its low computational cost and mobile-friendly architecture make it ideal for federated learning, ensuring fast and effective visual data processing.

\item Text modalities: For text data, we utilize MobileBERT \cite{sun2020mobilebert} as the backbone for feature extraction. 
MobileBERT significantly reduces the original BERT \cite{devlin2018bert} model's size from 340M to 25M by employing a bottleneck architecture, making it much more efficient for environments with limited computational resources.

\item Audio modalities: We use Mel-frequency cepstral coefficients (MFCCs) \cite{chen2023exploring} for audio data, a widely adopted feature extraction method in speech recognition. 
MFCCs capture critical audio characteristics efficiently, making them well-suited for federated learning on devices with limited computational resources.

\item Other modalities: For other data types, including accelerometer and gyroscope, we process the raw data directly. 
This approach simplifies the pipeline while retaining the necessary information for multimodal federated learning.
\end{itemize}

\subsection{Neural Network Architecture for the Models used in Numerical Experiments}\label{sec: model details}
\subsubsection{CNN for \cifar}
We utilize a convolutional neural network (CNN) for image-based classification tasks on \cifar dataset. 
The architecture consists of two convolutional layers, each followed by a ReLU activation function and max-pooling layers to extract spatial features. 
Finally, fully connected layers are used to map the features to class scores. The configuration of each layer is as follows:
\begin{itemize}
    \item Convolutional layer 1: Input channels = 3, output channels = 16, kernel size = 3, padding = 1.
    \item Convolutional layer 2: Input channels = 16, output channels = 32, kernel size = 3, padding = 1.
    \item Fully-connected layer 1: Input features = 32 * 8 * 8, output features = 128.
    \item Fully-connected layer 2: Input features = 128, output features = 10 (corresponding to the 10 \cifar classes). 
    For the convolutional layers, we apply a max-pooling operation with a kernel size of 2 and a stride of 2 after each ReLU activation function. 
    Both fully connected layers also apply ReLU as the activation function.
    \item Output personalization: In this model, the personalization is applied to the fully-connected layer 1 and fully-connected layer 2.
\end{itemize}

\subsubsection{ImageTextClassifier for \crisis}
For multimodal input combining images and text, we use a model that incorporates both CNN and RNN components. 
The image features are projected through a linear layer followed by ReLU, while text data is processed through a GRU\cite{Cho2014Learning}. 
For certain configurations, we use attention-based fusion to allow the model to combine the multimodal input. 
The detailed layer structure is as follows:
\begin{itemize}
    \item Image projection layer: Input dimension = 1280, output dimension = 64, followed by ReLU and Dropout.
    \item Text processing layer: GRU with input size = 512, hidden size = 64, number of layers = 1, bidirectional = false.
    \item Attention module: we use the attention module with d\_hid = 64 and d\_head = 4 to enable the model to attend to crucial multimodal features.
    \item Classifier head: after fusing image and text features, the classifier consists of two layers with an output size of 64 units and 10 classes.
    \item Input personalization: in this model, the Image Projection Layer and Text Processing Layer are personalized, adapting the input to the specific multimodal features of the dataset.
\end{itemize}

\subsubsection{HARClassifier for \kuhar}
For \kuhar dataset, which consists of accelerometer and gyroscope data, we employ a convolutional neural network (Conv1D) and GRU-based model. 
The architecture is structured to process both sensor modalities separately before fusing the features for classification. 
We use the attention-based fusion mechanism to combine information from both sensor types. 
The model’s layers are configured as follows:
\begin{itemize}
    \item Accelerometer convolutional layers: Three 1D convolutional layers with the following configurations:
    
        Conv1: input channels = 3, output channels = 32, kernel size = 5, padding = 2.

        Conv2: input channels = 32, output channels = 64, kernel size = 5, padding = 2.
        
        Conv3: input channels = 64, output channels = 128, kernel size = 5, padding = 2.
    \item Gyroscope convolutional layers: same configuration as accelerometer layers.
    \item RNN layers: separate GRUs for both accelerometer and gyroscope data with input size = 128 and hidden size = 128.
    \item Attention module: We apply the attention module to fuse accelerometer and gyroscope features with d\_hid = 128 and d\_head = 6.
    \item Classifier head: we combine features from accelerometer and gyroscope data into two layers, followed by 64 units and an output size of 10 classes.
    \item Split input personalization: the personalization in this model applies to the accelerometer convolutional layer. 
    
\end{itemize}

\subsubsection{MMActionClassifier for \crema}
For audio-visual data from the \crema dataset, we employ a multimodal classifier that processes both audio and video features. 
The architecture combines 1D convolutional layers for the audio modality and GRU layers for video frames, along with the attention-based fusion mechanisms. 
The detailed layer configurations are as follows:
\begin{itemize}
    \item Audio convolutional layers: three Conv1D layers to process audio features:
        
        Conv1: output channels = 80, output channels = 32, kernel size = 5, padding = 2.
        
        Conv2: output channels = 32, output channels = 64, kernel size = 5, padding = 2.
        
        Conv3: input channels = 64, output channels = 128, kernel size = 5, padding = 2.

    \item Video RNN layer: GRU with input size = video\_input\_dim, hidden size = 128, number of layers = 1.
    \item Audio RNN layer: GRU with Input size = 128, hidden size = 128, number of layers = 1.
    \item Attention module: we use the attention module with d\_hid = 128 and d\_head = 6 to combine audio and video modalities.
    \item Classifier head: after concatenating audio and video features, a fully connected layer is used to predict one of the 10 classes.
    \item Input personalization: the personalization in this model applies to the input layers, specifically the Audio Convolutional Layers, video RNN Layer, and audio RNN Layer.

\end{itemize}

\subsection{Parameter Settings for the Algorithms}\label{appendix: Parameter Settings for the Algorithms}
For each algorithm, we tune various hyperparameters by selecting several candidate values. We set the fraction of clients selected in each round to 30\%. 
Since all methods use SGD as the solver, we set a fixed number of 3 epochs for each method. 
The learning rate is selected from the set $\{0.05, 0.1, 0.5, 1\}$. 
For both \fedapm and \fedprox, the penalty parameter $\rho$ is selected from the set $\{0.001, 0.01, 0.02, 0.05, 0.1\}$. 
We list and offer a brief description of these methods below.
\begin{itemize}    
    \item \fedavg \cite{mcmahan2017communication}: which learns a shared model by averaging the model updates computed locally in each communication round. It is a baseline algorithm in the FL literature. However, it is likely to be affected by the statistical heterogeneity among clients.
    \item \fedprox \cite{Li2020fedprox}: which introduces a proximal regularization term in each client's objective function, which helps balance local training on local data while maintaining proximity to the global model. 
    This approach ensures that updates do not deviate significantly from the global model, leading to improved overall performance in heterogeneous settings.
    \item \fedalt \cite{Krishna2022Partial}: which is a partial model personalization framework in FL, utilizes Gauss-Seidel iteration to solve for personalized and shared models.
    In this approach, clients first update their personal parameters while keeping the received shared parameters fixed, and then they update the shared parameters while fixing the new personal parameters.
    \item \fedsim \cite{Krishna2022Partial}: which is similar to \fedalt, differs primarily in that it employs Jacobi iteration to solve for the personalized and shared models.
    During each local iteration, both the shared and personalized parameters are updated simultaneously.
\end{itemize}

\subsection{Implementations on \fedapm and Comparison Methods}\label{appendix: Implementations on fedapm and Comparison Methods}
Our algorithms were executed on a computational platform comprising two Intel Xeon Gold 5320 CPUs with 52 cores, 512 GB of RAM, four NVIDIA A800 with 320 GB VRAM, and operating on the Ubuntu 22.04 environment. The software implementation was realized in Python 3.9 and Pytorch 1.9 and open-sourced (\textcolor{black}{\href{https://anonymous.4open.science/r/FedAPM-A921}{https://anonymous.4open.science/r/FedAPM-A921}}).

\subsection{Complete Results}\label{sec: complete results}

\subsubsection{Comparison of Multiple Methods}
Figure \ref{fig: cifar10_crisis_mmd_ku_har_crema_d_training_loss} presents the complete results for the variation in training loss over communication rounds.
It can be observed that \fedapm converges to a lower loss at a faster rate.
\begin{figure}[t]
  \centering
  \includegraphics[width=\linewidth]{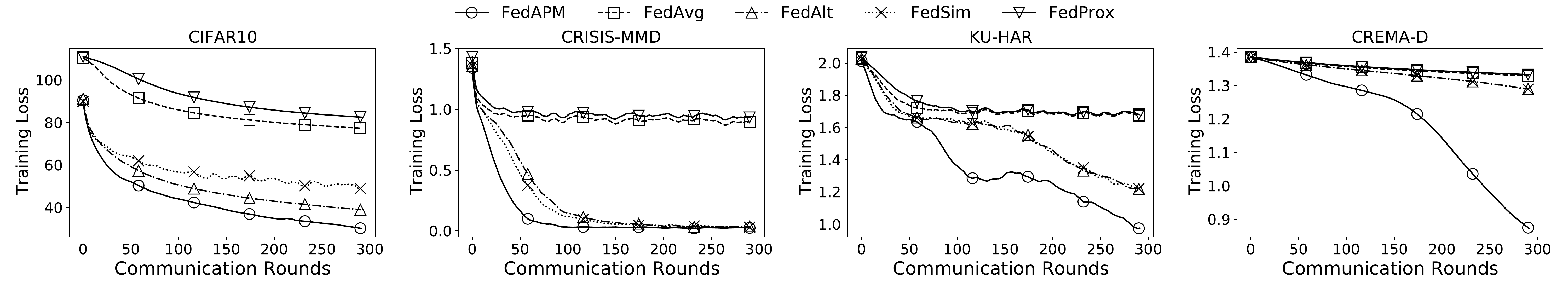}
  \caption{Comparison of training loss across various methods.}
  \label{fig: cifar10_crisis_mmd_ku_har_crema_d_training_loss}
\end{figure}

\subsubsection{Complete Comparison of Multiple Parameters}
Figure \ref{fig: cifar10_crisis_mmd_ku_har_crema_d_FedAPM_loss_varying_rho} presents the complete results for the variation of \fedapm's training loss across communication rounds under different values of $\rho$. It can be observed that when $\rho$ is set to 0.01, \fedapm converges the fastest, while selecting either too large ($\rho=0.1$) or too small ($\rho=0.001$) values reduces the convergence performance of the algorithm.
In Figure \ref{fig: cifar10_crisis_mmd_ku_har_crema_d_FedAPM_loss_varying_frac}, the complete results are presented, showing how the training loss of \fedapm evolves across communication rounds when different numbers of clients are selected for participation in training. It can be observed that selecting more clients accelerates the convergence of the algorithm.
\begin{figure}[t]
  \centering
  \includegraphics[width=\linewidth]{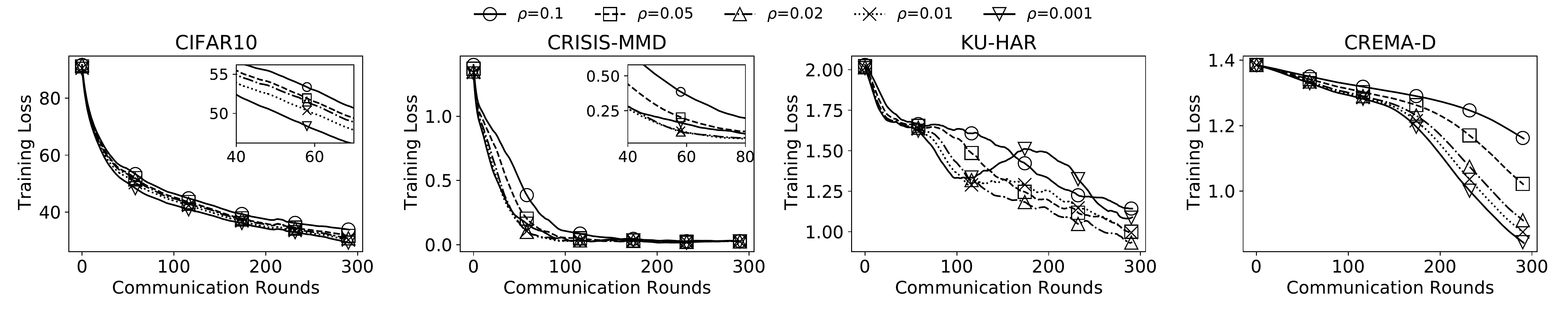}
  \caption{Convergence v.s. penalty parameter.}
  \label{fig: cifar10_crisis_mmd_ku_har_crema_d_FedAPM_loss_varying_rho}
  \includegraphics[width=\linewidth]{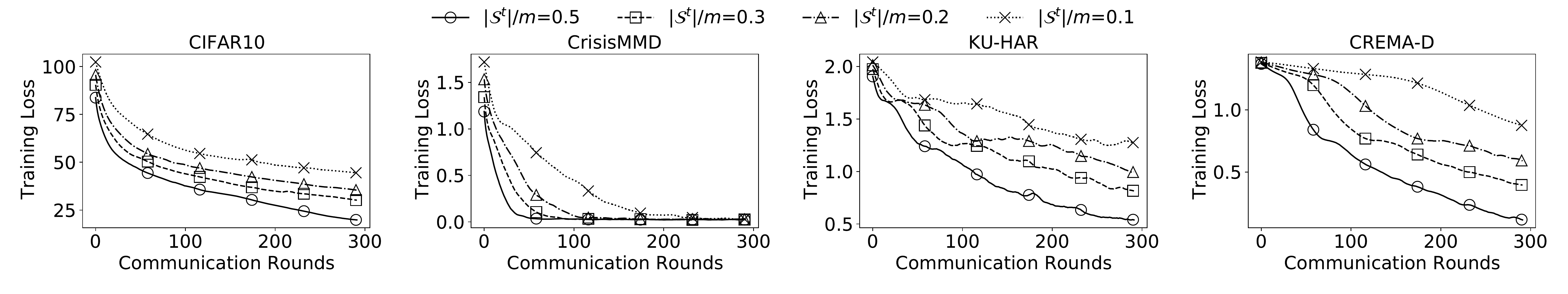}
  \caption{Convergence v.s. client selection.}
  \label{fig: cifar10_crisis_mmd_ku_har_crema_d_FedAPM_loss_varying_frac}
\end{figure}

\subsubsection{Additional Experiments Suggested by the Reviewers}
In response to the reviewers' suggestions, we have expanded our experimental evaluation by including additional baselines and datasets. The new baselines incorporate recent federated learning methods, such as pFedMe \cite{Dinh2020pfedme}, DITTO \cite{Li2021Ditto}, lp-proj \cite{Lin2022Personalized}, FedADMM \cite{Zhou2023FedADMM}, FedALA \cite{Zhang2023FedALA}, FedCP \cite{Zhang2023FedCP}, and FedROD \cite{Chen2022Bridging}.

Additionally, we introduced two new datasets, CIFAR100 and TinyImageNet trained on ResNet18. These modifications allow for a broader and more thorough comparison across different FL methods.

\begin{table*}[t]
\caption{Overall comparison of various methods.}
\label{table: Overall comparison of various methods}
\vspace{-1em}
\resizebox{\textwidth}{!}{
\begin{tabular}{cccccccccccccccccccccccc}
\toprule
\multirow{2}{*}{\textbf{Methods}} & \multicolumn{3}{c}{\textbf{CIFAR10}} &  & \multicolumn{3}{c}{\textbf{CrisisMMD}} &  & \multicolumn{3}{c}{\textbf{KU-HAR}} &  & \multicolumn{3}{c}{\textbf{Crema-D}} &  & \multicolumn{3}{c}{\textbf{CIFAR100}} &  & \multicolumn{3}{c}{\textbf{TinyImageNet}} \\ \cmidrule{2-4} \cmidrule{6-8} \cmidrule{10-12} \cmidrule{14-16} \cmidrule{18-20} \cmidrule{22-24} 
                         & \textbf{ACC}     & \textbf{F1}      & \textbf{AUC}     &  & \textbf{ACC}      & \textbf{F1}       & \textbf{AUC}     &  & \textbf{ACC}     & \textbf{F1}      & \textbf{AUC}    &  & \textbf{ACC}     & \textbf{F1}      & \textbf{AUC}     &  & \textbf{ACC}      & \textbf{F1}      & \textbf{AUC}     &  & \textbf{ACC}       & \textbf{F1}        & \textbf{AUC}      \\ \midrule
\texttt{FedAPM}                 & .738    & .725    & .873    &  & .357     & .294     & .519    &  & .437    & .396    & .595   &  & .651    & .590    & .695    &  & .328     & .319    & .606    &  & .246      & .184      & .588     \\
\texttt{FedAvg}                   & .435    & .345    & .602    &  & .374     & .288     & .510    &  & .142    & .050    & .464   &  & .434    & .295    & .604    &  & .190     & .131    & .562    &  & .129      & .135      & .459     \\
\texttt{FedAlt}                   & .672    & .649    & .839    &  & .364     & .289     & .513    &  & .196    & .105    & .562   &  & .444    & .304    & .642    &  & .317     & .292    & .592    &  & .231      & .183      & .582     \\
\texttt{FedSim}                   & .592    & .565    & .808    &  & .364     & .291     & .514    &  & .196    & .105    & .566   &  & .444    & .304    & .643    &  & .304     & .285    & .588    &  & .154      & .119      & .543     \\
\texttt{FedProx}                  & .406    & .307    & .571    &  & .380     & .291     & .507    &  & .143    & .049    & .453   &  & .438    & .297    & .600    &  & .188     & .128    & .560    &  & .130      & .134      & .458     \\
\texttt{FedADMM}                  & .307    & .312    & .709    &  & .382     & .294     & .505    &  & .220    & .135    & .342   &  & .541    & .486    & .697    &  & .238     & .191    & .586    &  & .162      & .130      & .464     \\
\texttt{pFedMe-G}                 & .228    & .209    & .663    &  & .372     & .315     & .502    &  & .259    & .168    & .301   &  & .403    & .272    & .525    &  & .348     & .342    & .584    &  & .190      & .146      & .553     \\
\texttt{pFedMe-P}                 & .416    & .303    & .642    &  & .462     & .307     & .501    &  & .258    & .159    & .303   &  & .421    & .287    & .531    &  & .307     & .286    & .647    &  & .185      & .134      & .479     \\
\texttt{DITTO-G}                  & .512    & .558    & .835    &  & .378     & .330     & .502    &  & .260    & .163    & .317   &  & .417    & .276    & .587    &  & .364     & .378    & .612    &  & .192      & .156      & .539     \\
\texttt{DITTO-P}                  & .649    & .632    & .809    &  & .471     & .324     & .532    &  & .220    & .126    & .315   &  & .422    & .285    & .589    &  & .310     & .274    & .536    &  & .197      & .144      & .456     \\
\texttt{lp-proj}                  & .629    & .620    & .799    &  & .348     & .293     & .501    &  & .242    & .139    & .347   &  & .488    & .371    & .640    &  & .338     & .290    & .572    &  & .205      & .191      & .546     \\
\texttt{FedALA}                   & .652    & .684    & .833    &  & .443     & .292     & .501    &  & .271    & .189    & .322   &  & .513    & .420    & .655    &  & .366     & .379    & .574    &  & .225      & .201      & .566     \\
\texttt{FedRoD}                   & .634    & .661    & .810    &  & .437     & .302     & .511    &  & .240    & .144    & .302   &  & .525    & .443    & .691    &  & .363     & .366    & .580    &  & .191      & .151      & .442     \\
\texttt{FedCP}                    & .668    & .692    & .839    &  & .470     & .325     & .533    &  & .277    & .191    & .323   &  & .577    & .535    & .694    &  & .391     & .380    & .612    &  & .247      & .192      & .593     \\ \bottomrule
\end{tabular}}
\end{table*}

\end{document}